\documentclass{article}

    \PassOptionsToPackage{numbers, compress}{natbib}
 \usepackage[preprint]{neurips_2026}

\usepackage[utf8]{inputenc} 
\usepackage[T1]{fontenc}    
\usepackage{hyperref}       
\usepackage{url}            
\usepackage{booktabs}       
\usepackage{amsfonts}       
\usepackage{nicefrac}       
\usepackage{microtype}      
\usepackage{xcolor}         

\usepackage{graphicx}
\usepackage{algorithm}
\usepackage{algorithmic}
\usepackage{amsthm}
\usepackage{amssymb}   
\usepackage{amsthm}
\usepackage{wrapfig}
\newtheoremstyle{myremark}%
  {}{}
  {\normalfont}
  {}
  {\bfseries}
  {.}
  { }
  {}

\theoremstyle{plain}
\newtheorem{theorem}{Theorem}

\theoremstyle{myremark}
\newtheorem{remark}{Remark}
\newtheorem{assumption}{Assumption}
\usepackage{amsmath}
\usepackage{svg} 

\title{RareCP: Regime-Aware Retrieval for Efficient Conformal Prediction}

\author{
    Manuel Heurich \\
    Institute for Computer Science\\
    Freie Universität Berlin\\
    Arnimallee 7 14195 Berlin\\
    \texttt{manuel.heurich@fu-berlin.de} \\
    \And
    Maximilian Granz \\
    Institute for Computer Science\\
    Freie Universität Berlin\\
    Arnimallee 7 14195 Berlin\\
    \texttt{maximilian.granz@fu-berlin.de} \\
    \And
    Tim Landgraf \\
    Institute for Computer Science\\ 
    Freie Universität Berlin\\
    Arnimallee 7 14195 Berlin\\
    \texttt{tim.landgraf@fu-berlin.de}}
\begin{document}

\maketitle

\begin{abstract}
    Recent advances in uncertainty quantification for time series forecasting show that conformal prediction can provide reliable prediction intervals, yet standard conformal methods are often inefficient under temporal dependence, drift, and heterogeneous error behavior. Existing methods typically either update miscoverage rates over time or learn unconstrained calibration weights, without explicitly separating two central sources of nonstationarity: smoothly drifting error distributions and co-existing distinct error regimes. We introduce \textit{RareCP}, a regime-aware retrieval method for adaptive conformal time series prediction. RareCP learns local calibration representations through a mixture of cosine-attention experts that each capture distinct error regimes, while a compact hypernetwork adapts the kernel parameters to track temporal drift. Given a new forecasting context, RareCP retrieves the top-k most relevant calibration examples, assigns similarity weights, and forms a weighted conformal quantile over their signed residuals, yielding asymmetric prediction intervals. The adaptive kernel is trained using a smooth interval score objective, with a parameter-space anchor to a lightweight teacher kernel to preserve stable local representations. On the GIFT-Eval benchmark, RareCP improves interval efficiency over recent conformal baselines and foundation model uncertainty estimates while maintaining empirical coverage. Ablations confirm that regime-specific experts, drift-adaptive kernels, sparse retrieval, and teacher anchoring each contribute to the final performance.
\end{abstract}

\section{Introduction}
\label{sec:intro}

Time series data is ubiquitous across critical domains including finance, healthcare, and industrial operations. While deep neural networks have dramatically advanced forecasting capabilities, reliable uncertainty quantification (UQ) remains essential for risk-aware decision-making. Traditional probabilistic forecasting methods rely on asymptotic guarantees or impose restrictive assumptions on the data-generating process, such as distributional families, linearity, independence, or stationarity \cite{Weron2014}. This gap between forecasting accuracy and reliable uncertainty estimation motivates distribution-free UQ methods that remain valid under realistic conditions.

Conformal Prediction (CP) offers a powerful, model-agnostic UQ framework that provides finite-sample coverage guarantees without distributional requirements, under the exchangeability assumption that the joint distribution of samples is invariant under reordering \cite{05cp}. These properties have led to widespread adoption across diverse domains, including image classification, graph-based modeling, and natural language processing. However, the sequential structure of time series data fundamentally violates exchangeability, introducing substantial challenges for CP in forecasting \cite{22agaci,21enbpi}.

Existing adaptations generally fall into two categories: (1) adaptive miscoverage methods that dynamically adjust the target coverage rate based on observed under- or overcoverage, and (2) weighted CP approaches that assign calibration weights to leverage varying sample relevance. Adaptive conformal methods explicitly address temporal drift by updating the miscoverage level over time, and weighted conformal prediction tackles distribution shift by reweighting calibration samples. However, adaptive methods typically react through changes in the target miscoverage level, while weighted approaches often rely on fixed or non adaptive similarity weighting schemes. We instead argue that time series forecasting errors, i.e., the signed residuals between point forecasts and realized values, exhibit structural biases that can be built directly into the calibration design. These structural biases take two forms: (i) gradual drift, where the similarities of residuals evolves as the underlying process changes, and (ii) regime structure, where residual distributions switch between distinct contexts rather than varying smoothly along a single trajectory.

\begin{figure}
    \centering
    \includegraphics[width=0.8\linewidth]{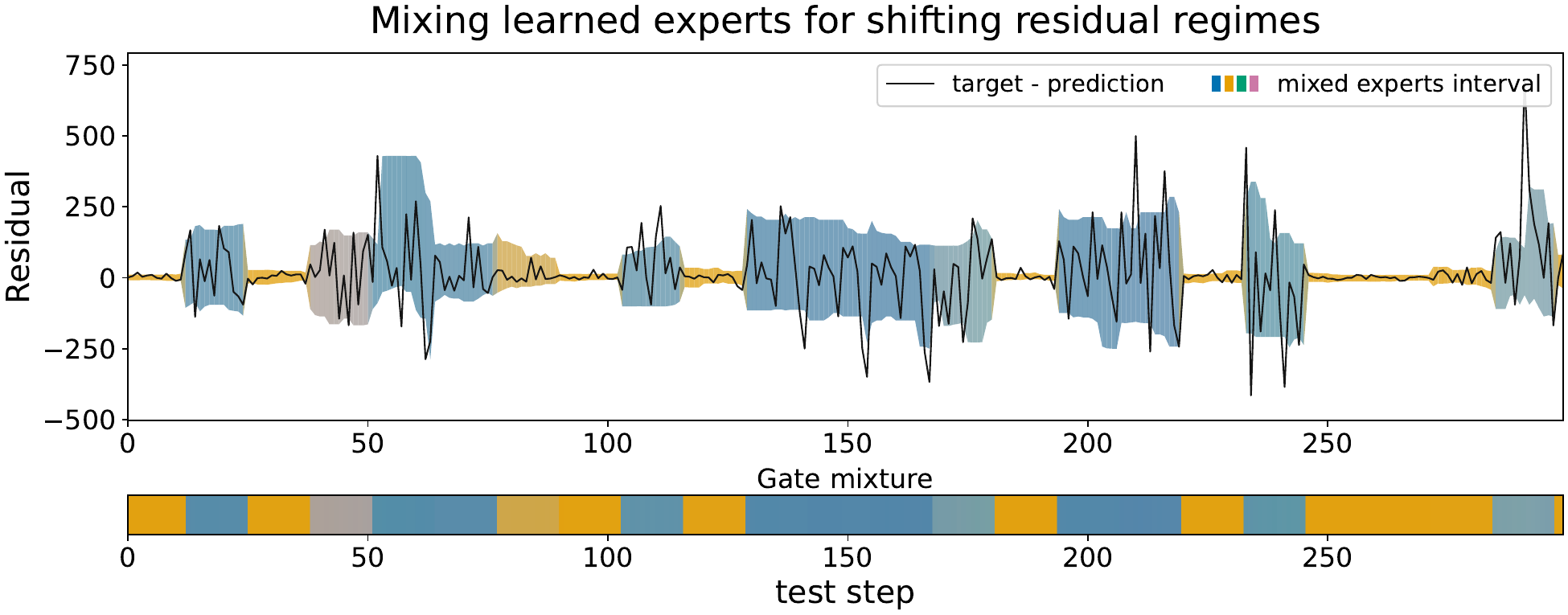}
    \caption{RareCP adapts its 80\%-interval prediction locally to the residual regime on m4\_daily. Different residual regimes recruit different experts, so the interval tightens in calm stretches and widens across volatile ones rather than inflating uniformly. See Appendix Figure~\ref{fig:full_gating} for other examples.}
    \label{fig:entry_figure}
\end{figure}

We address these limitations with \textit{RareCP}, a regime-aware retrieval
adapter for conformal time series prediction. RareCP calibrates a fixed
forecasting backbone by retrieving past signed residuals that are most relevant
to the current forecasting context under a learned linear cosine-attention
similarity. The retrieved residuals define local lower and upper conformal
quantiles, yielding asymmetric prediction intervals that adapt to temporal drift
and recurring error regimes (see Figure~\ref{fig:entry_figure}). This gives conformal prediction a retrieval-based
mechanism for improving interval efficiency while retaining a direct connection
to coverage through the size and locality of the calibration set. Our key contributions are:

\begin{itemize}
    \item \textbf{Conformal prediction is competitive with state-of-the-art quantile forecasters.}
    In the single-step-horizon quantile prediction setting, we show that
    conformal prediction equipped with RareCP is competitive with the native
    quantile-based prediction intervals of state-of-the-art time series
    forecasting models, while remaining backbone-agnostic and applicable to
    forecasters that only provide point predictions.

    \item \textbf{Retrieval conformal prediction with an explicit efficiency-coverage trade-off.}
    We introduce sparse residual retrieval for conformal calibration and analyze
    its effect through a local-CDF framework. The trade-off is controlled by
    $K_n$, the number of retrieved calibration residuals from a calibration set
    of size $n$: smaller $K_n$ can improve efficiency by focusing on local
    residual behavior, whereas increasing $K_n$ improves effective sample size
    and supports asymptotic local quantile and coverage consistency under
    stated assumptions.

    \item \textbf{An architecture matched to time series residual structure.}
    We build a linear cosine-attention retrieval architecture designed for two
    structural biases of forecasting residuals: time-drifting similarities and clustered error regimes. This allows RareCP to adapt conformal
    intervals locally without relying on unconstrained calibration weights.
\end{itemize}

\section{Related work}
\subsection{Conformal prediction}

Conformal prediction (CP) is a model-agnostic, distribution-free framework with finite-sample coverage \cite{05cp,07tutorialcp}. Classical variants include \textit{SplitCP}, which calibrates on a held-out set under exchangeability \cite{17scp}; cross-conformal prediction, which aggregates multiple calibration folds for tighter regions \cite{12crossconformal}; and \textit{Jackknife+}, which builds intervals from leave-one-out predictions while retaining finite-sample guarantees \cite{20jackknifeplus}. A broad literature extends CP across online, quantile, and tree-based settings \cite{22stableCP,21cascadeCP,24aggregationCP,20adaptivecoverage,23onlinecp,19quantileRegCP,13dtCP,14regCP}. A separate line tightens intervals via differentiable conformal surrogates, either end-to-end \cite{22conformalClassifiers,22diffCP} or as post-hoc boosting on a fixed model \cite{24boostedCP}.

\subsection{Conformal prediction in sequential settings}
\label{sec:CPseq}

Sequential data violates exchangeability, motivating CP variants that adapt calibration to drift \cite{23nexcp}. One line adjusts the miscoverage rate online: \textit{ACI} updates $\alpha$ at a fixed learning rate \citep{21aci}, \textit{AgACI} removes manual rate tuning by aggregating multiple learning-rate experts \cite{22agaci}, \textit{DtACI} reweights those experts by recent performance \cite{23dtaci}, and \textit{Conformal PID Control} treats the threshold update as a PID loop \cite{23conformalPID}.

A second line upweights calibration points by relevance to the test distribution: via covariate or label likelihood ratios \cite{20wcpcovshift,21wcplabel}, recency decay \cite{23nexcp}, or kernel-based weighted quantile regression on nonconformity scores with asymptotic guarantees under mixing \cite{26kowcpi}. Learned variants replace these hand-crafted weights with similarity in a representation space or attention over latent nonconformity scores \cite{23hopcpt,23fcp,24ctssf}, and \citet{26moecp} structure that similarity weighting as a mixture-of-experts gate over calibration regimes. A separate residual-modeling line builds intervals from bootstrap or ensemble residuals (\textit{EnbPI}) and from residual-distribution estimates (\textit{SPCI}) \cite{21enbpi,23spci}.

Retrieval augmentation \cite{21rag} has been adopted in time series forecasting via multi-resolution retrieval, relational retrieval with synthesis, diffusion guidance, and LLM-based conditioning \cite{25raft,25tsrag,22retime,24radt,24timerag}, all aimed at improving mean predictions. RareCP applies a similar retrieval principle to past signed residuals under a linear cosine-attention similarity, trained for interval efficiency rather than point accuracy.

\subsection{Hypernetworks}

Beyond the retrieval mechanism, RareCP's other architectural choice is a hypernetwork: a network that emits the weights of a primary network from a low-dimensional input \cite{16hyper}, decoupling capacity from parameter count and enabling parameter sharing across tasks. The template has been used in continual learning, where catastrophic forgetting \cite{17ewc, 17gem} has motivated task-conditioned hypernetworks that emit task specific parameters \cite{22clhyper}, and in personalised federated learning, where client-conditioned hypernetworks produce per-client weights from compact descriptors \cite{21pfedhn}. To our knowledge, RareCP is the first method to amortise the parameters of a conformal weighting rule with a hypernetwork. We train it by teacher-student distillation, but in contrast to standard distillation, which operates in output space \cite{15distill} or on intermediate features \cite{15fitnets}, our distillation acts directly in parameter space.

\section{Method}
\label{sec:method}

RareCP is a retrieval-based conformal adapter for a fixed forecasting backbone. Given a
point forecast and its recent history, RareCP retrieves past calibration residuals from
similar forecasting contexts, forms a local signed-residual distribution, and converts its
lower and upper quantiles into an asymmetric prediction interval. The retrieval rule is
sparse, cosine-normalized, and regime-aware: several low-capacity experts each construct a
local residual CDF, and a gate mixes these CDFs in residual space. Figure~\ref{fig:fig_pipeline}
summarizes the full pipeline.

\begin{figure}[t]
  \centering
  \includegraphics[width=\linewidth]{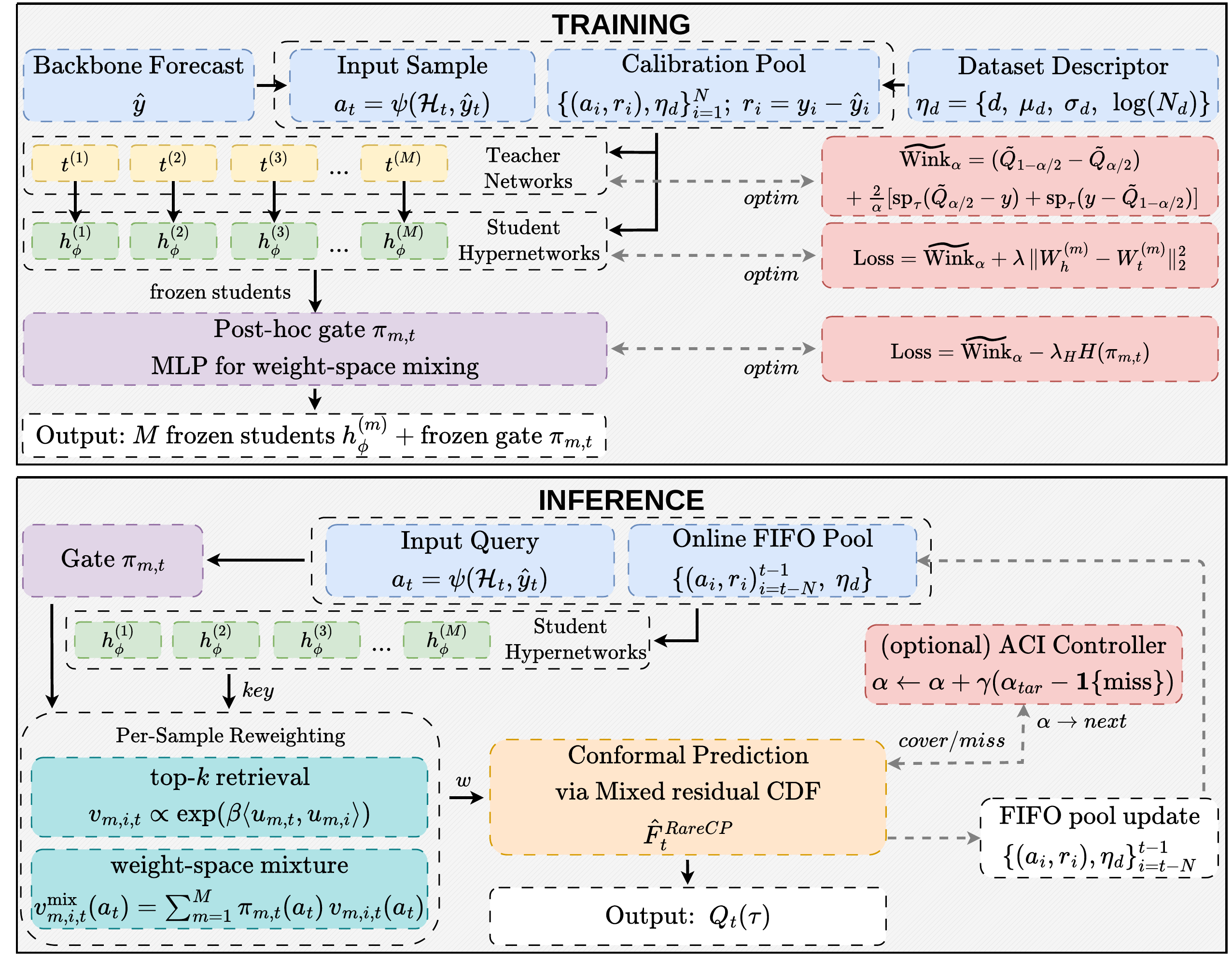}
  \caption{RareCP overview. A forecasting backbone provides a point prediction and
  history context. Dataset-conditioned retrieval experts produce normalized query and
  calibration keys, retrieve a sparse top \(k\) support by cosine similarity, and assign
  temperature-softmax weights over the retrieved residuals. A lightweight gate mixes the
  expert residual distributions, and the resulting signed-residual quantiles define the
  prediction interval.}
  \label{fig:fig_pipeline}
\end{figure}

\subsection{Local signed-residual calibration}
\label{sec:method_signed_residuals}

For dataset \(d\), let \(\mu\) be the forecasting backbone. At prediction time \(t\), the
backbone receives history \(\mathcal H_t\) and produces
\begin{equation}
\label{eq:rarecp_context}
    \widehat y_t=\mu(\mathcal H_t),
    \qquad
    a_t=\psi(\mathcal H_t,\widehat y_t)\in\mathbb R^p,
\end{equation}
where \(\psi\) is the flattened input-window representation, optionally augmented with
the point prediction. For every past calibration index \(i\), RareCP stores
\begin{equation}
\label{eq:rarecp_calibration_context_residual}
    a_i=\psi(\mathcal H_i,\widehat y_i),
    \qquad
    r_i=y_i-\widehat y_i .
\end{equation}
We calibrate signed residuals rather than absolute residuals so that the lower and upper
tails can adapt separately.

Let \(\mathcal I_{d,t}\) be the calibration indices available immediately before
predicting time \(t\), and define
\begin{equation}
\label{eq:calibration_set_current}
    \mathcal C_{d,t}
    =
    \{(a_i,r_i):i\in\mathcal I_{d,t}\}.
\end{equation}
The initial set \(\mathcal C_{d,0}\) is used to fit the RareCP adapter and initialize
retrieval. During chronological evaluation, \(\mathcal C_{d,t}\) is updated only after
\(y_t\) is observed.

Given a query \(a_t\), a support \(S_t\subseteq\mathcal I_{d,t}\), and nonnegative
weights \(w_{i,t}\) with \(\sum_{i\in S_t}w_{i,t}=1\), RareCP forms
\begin{align}
\label{eq:rarecp_weighted_cdf}
    \widehat F_t(\rho)
    &=
    \sum_{i\in S_t}
    w_{i,t}\mathbf 1\{r_i\le \rho\}, \\
\label{eq:rarecp_weighted_quantile}
    Q_t(\tau)
    &=
    \inf\{\rho\in\mathbb R:\widehat F_t(\rho)\ge \tau\}.
\end{align}
For current miscoverage level \(\alpha_t\), the prediction interval is
\begin{equation}
\label{eq:rarecp_interval}
    C_t^{\alpha_t}
    =
    \left[
        \widehat y_t + Q_t(\alpha_t/2),
        \widehat y_t + Q_t(1-\alpha_t/2)
    \right].
\end{equation}

\subsection{Regime-aware top-\texorpdfstring{\(k\)}{k} retrieval}
\label{sec:method_regime_architecture}

RareCP learns several retrieval experts to represent co-existing residual regimes. Each
expert uses a low-capacity affine key map followed by cosine normalization. Crucially,
the affine map is produced by a hypernetwork conditioned on both the current query context
and a dataset descriptor. Thus, for each prediction time \(t\), the expert constructs a
query-specific retrieval metric and applies it to both the query and all candidate
calibration contexts.

Let
\begin{equation}
\label{eq:dataset_descriptor_method}
    N_d = |\mathcal I_{d,0}|,
    \qquad
    \mu_d =
    \frac{1}{N_d}\sum_{i\in\mathcal I_{d,0}} a_i,
    \qquad
    \sigma_d =
    \left(
    \frac{1}{N_d}
    \sum_{i\in\mathcal I_{d,0}}
    (a_i-\mu_d)^{\odot 2}
    \right)^{1/2}.
\end{equation}
We use the dataset descriptor
\begin{equation}
\label{eq:eta_descriptor_method}
    \eta_d
    =
    \{d,\mu_d,\sigma_d,\log(N_d)\},
\end{equation}
where \(d\) denotes the dataset identifier, implemented as the dataset embedding or
identifier available to the hypernetwork, and \(\mu_d,\sigma_d\) are componentwise
statistics of the initial calibration contexts.

For expert \(m\in\{1,\ldots,M\}\), the hypernetwork receives the current query
\(a_t\) and the dataset descriptor \(\eta_d\), and outputs an affine retrieval map
\begin{equation}
\label{eq:query_conditioned_expert_map}
    (A_{m,t,d},b_{m,t,d})
    =
    H_{\theta_m}^{(m)}(a_t,\eta_d).
\end{equation}
This map is then applied to any context \(a\) by
\begin{equation}
\label{eq:expert_key_map_method}
    \widetilde z_{m,t,d}(a)
    =
    A_{m,t,d}a+b_{m,t,d},
    \qquad
    u_{m,t,d}(a)
    =
    \frac{\widetilde z_{m,t,d}(a)}
    {\|\widetilde z_{m,t,d}(a)\|_2}.
\end{equation}
We write
\[
    q_{m,t}=u_{m,t,d}(a_t),
    \qquad
    k_{m,i\mid t}=u_{m,t,d}(a_i),
\]
to emphasize that the calibration key \(k_{m,i\mid t}\) is computed under the
query-conditioned map induced by the current test query \(a_t\).

Expert \(m\) retrieves the \(k\) largest cosine similarities,
\begin{equation}
\label{eq:expert_retrieval_method}
    S_{m,t}
    =
    \operatorname{TopK}_{i\in\mathcal I_{d,t}}
    \bigl(q_{m,t}^{\top}k_{m,i\mid t};k\bigr),
\end{equation}
and assigns temperature-softmax weights only on this support:
\begin{equation}
\label{eq:expert_weights_method}
    v_{m,i,t}
    =
    \frac{
        \exp(q_{m,t}^{\top}k_{m,i\mid t}/T_w)
        \mathbf 1\{i\in S_{m,t}\}
    }{
        \sum_{\ell\in S_{m,t}}
        \exp(q_{m,t}^{\top}k_{m,\ell\mid t}/T_w)
    }.
\end{equation}
The normalization makes the score a nearest-neighbor rule on the unit sphere, since
\begin{equation}
\label{eq:cosine_sphere_identity}
    \|u-u_i\|_2^2=2-2u^\top u_i .
\end{equation}
Thus top-\(k\) cosine retrieval keeps the calibration residuals closest to the query
under the learned query- and dataset-conditioned representations. Smaller \(k\) enforces
stronger locality and can reduce mass on unrelated regimes; larger \(k\) increases
effective sample size and supports stronger coverage behavior.

A lightweight gate maps the query and dataset descriptor to simplex weights
\begin{equation}
\label{eq:gate_method}
    (\pi_{1,t},\ldots,\pi_{M,t})
    =
    \operatorname{softmax}(g_{\omega}(a_t,\eta_d)),
    \qquad
    \sum_{m=1}^{M}\pi_{m,t}=1 .
\end{equation}
Each expert forms its own local residual CDF, and the gate mixes these CDFs:
\begin{equation}
\label{eq:rarecp_mixed_cdf_method}
    \widehat F_t^{\mathrm{RareCP}}(\rho)
    =
    \sum_{m=1}^{M}
    \sum_{i\in S_{m,t}}
    \pi_{m,t}v_{m,i,t}\mathbf 1\{r_i\le \rho\}.
\end{equation}
Equivalently, the final weight assigned to residual \(r_i\) is
\begin{equation}
\label{eq:effective_rarecp_weight}
    w_{i,t}^{\mathrm{RareCP}}
    =
    \sum_{m=1}^{M}
    \pi_{m,t}v_{m,i,t}\mathbf 1\{i\in S_{m,t}\}.
\end{equation}
The interval in Eq.~\eqref{eq:rarecp_interval} is obtained by applying the weighted
quantile in Eq.~\eqref{eq:rarecp_weighted_quantile} to
\(\widehat F_t^{\mathrm{RareCP}}\). Mixing residual distributions, rather than keys or
logits, preserves each expert's local neighborhood while allowing the adapter to switch
between clustered error regimes.

Unlike MoE-CP, which uses MoE gating probability vectors
as soft domain assignments and weights residuals by gating-vector similarity, RareCP makes
each expert a separate top-\(k\) retriever and lets the gate mix the resulting local CDFs
in residual-weight space~\cite{26moecp}.
\paragraph{Top-\texorpdfstring{\(K_n\)}{Kn} calibration behavior.}
We now formalize the trade-off between local adaptivity and calibration reliability induced
by this retrieval mechanism. Let
\(\widehat F_n(\rho\mid u)\) be the weighted residual CDF built from the top \(K_n\)
neighbors of a normalized query state \(u\), and let \(F_u\) be the corresponding
conditional signed-residual CDF. Theorem~\ref{thm:short_topk_coverage} gives
\begin{equation}
\label{eq:method_theory_decomposition}
    \Delta_n(u)
    :=
    \sup_{\rho\in\mathbb R}
    |\widehat F_n(\rho\mid u)-F_u(\rho)|
    \le
    R_n(u)+D_n(u)+\omega_u(h_n(u)).
\end{equation}
Here \(R_n(u)=O_p(K_n^{-1/2})\) is finite-support stochastic error, \(D_n(u)\) measures
leakage from temporal dependence, \(h_n(u)\) is the radius of the retrieved neighborhood,
and \(\omega_u(h_n(u))\) is localization bias. Under the local mass and H\"older
conditions stated in Appendix~\ref{subsec:normalized_learned_topk_short},
\begin{equation}
\label{eq:method_topk_rate}
    \Delta_n(u)
    =
    O_p\!\left(
        K_n^{-1/2}
        +
        \left(\frac{K_n}{n}\right)^{\beta/d_{\mathrm{loc}}}
    \right)
    +
    D_n(u).
\end{equation}
The conditions require \(K_n\to\infty\) and \(K_n/n\to0\): growing support controls
stochastic error, while vanishing relative support preserves locality. For a finite
expert mixture, the same argument applies expert-wise and the mixed-CDF error is bounded
by the gate-weighted sum of the expert errors (see Remark~\ref{rem:finite_mixture_topk}). Our experiments use a fixed finite \(k\).  Since \(K_n/n\to 0\) only requires
sublinear growth, we assume that fixed-\(k\) perform well for our finitely growing calibration sets. For benchmarks with more timesteps, \(k\) could be increased accordingly, but adaptive growing schedules are a natural extension to our work.

Finally, we note that the top-\(K_n\) analysis assumes a conservative split in
which the retrieval rule is fixed before the residuals used for conformal quantiles are
observed. Our experiments instead follow the data-efficient protocol of learned local CP
baselines such as HopCPT~\cite{23hopcpt}: we reuse the initial calibration period both to
learn the similarity and gating rules and to provide the residuals in
\(\mathcal C_{d,t}\). This violates the strict split condition used in the analysis, but
improves finite-sample efficiency, especially for small calibration sets where a separate
split leaves too little data for learning stable retrieval and gating representations. We
expect a sample-size crossover: with sufficiently large calibration sets, a fully split
protocol may become advantageous. We leave a detailed split-ablation to future work and
evaluate RareCP under the data-efficient protocol in Section~\ref{sec:4.2}.
\subsection{Training and inference}
\label{sec:method_training_inference}

RareCP is fitted on the initial calibration set \(\mathcal C_{d,0}\) before chronological
test evaluation. Training uses leave-one-out calibration episodes introduced by \cite{23hopcpt}. For each minibatch
\(\mathcal B\subseteq\mathcal I_{d,0}\) and query element \(j\in\mathcal B\), the model
excludes \(j\), retrieves residuals from \(\mathcal B\setminus\{j\}\), forms an interval
for \(j\), and scores that interval against the observed target \(y_j\). This mirrors
test-time calibration: the query target is unavailable when its interval is constructed.

The training objective is a smooth relaxation of the Winkler interval score. Evaluation
uses the hard score
\begin{equation}
\label{eq:winkler_score_method}
    \operatorname{Wink}_{\alpha}(\ell,u;y)
    =
    (u-\ell)
    +
    \frac{2}{\alpha}(\ell-y)\mathbf 1\{y<\ell\}
    +
    \frac{2}{\alpha}(y-u)\mathbf 1\{y>u\}.
\end{equation}
For optimization, the hard weighted quantile is replaced by a sigmoid-CDF quantile and
the miss indicators are replaced by softplus penalties. The full smooth surrogate is
defined in Appendix~\ref{subsec:smooth_winkler_loss}.

The retrieval experts are trained first. Each student hypernetwork is anchored to a
lightweight prefit teacher map
\begin{equation}
\label{eq:teacher_map_method}
    t_d^{(m)}(a)=B_{m,d}a+c_{m,d},
\end{equation}
which stabilizes the learned representations and discourages collapsed similarity maps. After
the experts are frozen, the gate is trained on leave-one-out mixed-CDF intervals, with an
entropy regularizer to discourage premature collapse to a single expert. The expert and
gate losses are given in Appendix~\ref{subsec:smooth_winkler_loss}.

At inference time, RareCP computes \(a_t\), retrieves top \(k\) residuals for each expert
using Eq.~\eqref{eq:expert_retrieval_method}, mixes their residual CDFs using
Eq.~\eqref{eq:rarecp_mixed_cdf_method}, and returns the interval in
Eq.~\eqref{eq:rarecp_interval}. Because the expert maps depend on the current query
\(a_t\), RareCP caches raw calibration contexts and residuals; the normalized calibration
keys are recomputed under the query-conditioned map for each test point. To account for distributional drift, we adapt the target miscoverage level at
test time using adaptive conformal inference~\cite{21aci}.
With
\(n_t=|\mathcal I_{d,t}|\), exact retrieval costs \(O(Mn_td_z)\) dot products per query,
and the final weighted quantile is computed over at most \(Mk\) residual entries.
\section{Experiments}
\label{sec:4.0}

We evaluate our RCP framework on the comprehensive GIFT-Eval benchmark \cite{24gifteval}. This section presents our evaluation results. All experiments were conducted on a single NVIDIA A6000 GPU (48GB VRAM) and a 64-core CPU with 256GB RAM.

\subsection{Setup}
\label{sec:4.1}

\paragraph{Benchmarks.}
We evaluate on GIFT-Eval in two configurations: \textit{Bench10} (10 core datasets) and \textit{Bench22} (the full 22-dataset suite, a superset of Bench10), covering energy, transport, climate, web, finance, and demographics. The datasets span sampling intervals from five minutes to weekly and heterogeneous noise, seasonality, scale, and drift. For each, we cap the test region at the most recent 30{,}000 samples (or the full series if shorter) and construct in chronological order train, calibration, and 7{,}500-sample evaluation splits. Bench10 is additionally reported at a 100{,}000 cap and the full set to probe scaling. HPO uses strictly earlier validation splits from the GIFT-Eval pre-train region so no test-region context enters tuning. All experiments use a 64-step context window and single-step-ahead targets and per-dataset statistics are reported in \ref{apx:bench}.

\paragraph{Baselines and backbones.}
RareCP is paired with four point-forecast backbones to demonstrate backbone-agnosticism: \textit{TiRex}, \textit{TimesFM-2.5} and \textit{TabPFN-TS} (zero-shot FMs) plus \textit{ARIMA} (AutoARIMA from \textit{statsforecast} \cite{22statsforecast}, default parameters). We compare RareCP against two baseline families on the same backbone predictions and calibration set: CP methods (Uniform SplitCP, NexCP, ACI, DtACI, HopCPT, KOWCPI, ResCP \cite{17scp, 23nexcp, 21aci, 23dtaci, 23hopcpt, 26kowcpi, 25rescp}) and FM native UQ heads (TiRex, Chronos-2, Chronos-Bolt, Moirai-2.0, TimesFM-2.5, TabPFN-TS, TOTO, Lag-Llama \cite{25tirex, 24chronos, 24moirai, 24timesfm, 25tabpfnts, 24toto, 24lagllama}). Backbone predictions are precomputed and cached, so every method scores identical point forecasts per query.

\paragraph{Training and calibration.}
RareCP's teacher encoders, student hypernetworks, and post-hoc gate are trained consecutively on the calibration split; hyperparameters come from a 100-trial HPO run on the Bench10 validation splits, matching the trial budget given to HopCPT, KOWCPI, and ResCP (full hyperparameters in \ref{apx:hyperparams}). The same Bench10-tuned hyperparameters are reused without modification on Bench22 and on the larger benchmark scenarios. At test time, the shared rolling-window calibration set $\mathcal{D}_{\text{cal}}$ is initialized from the most recent validation samples and updated first-in-first-out (FIFO) as each test query arrives. Non-deterministic methods are run on ten seeds.

\subsection{Results}
\label{sec:4.2}

Table \ref{tab:cp_methods_summary_small} summarises CP and FM-native uncertainty quantification performance at the $80\%$ target. Per-dataset breakdowns are in Tables \ref{tab:cp_methods_bench10} -\ref{tab:fm_native_uq_bench10_bench22}. We report \emph{nWink} (mean Winkler $/$ std$(y)$), \emph{nW} (mean width $/$ std$(y)$), and empirical coverage \emph{Cov}. Since std$(y)$ is a dataset-level constant, cells are directly comparable across methods and backbones. We treat normalised Winkler (Eq. \ref{eq:winkler_score_method}) as the primary metric. Width and coverage in isolation are exploitable as a method can shrink intervals simply by undercovering, whereas Winkler jointly penalises both. Among methods at the same coverage, the lower Winkler scoring one redistributes its width better (narrower on easy queries, wider on hard ones), which is exactly the behaviour we expect from an adaptive method.

\begin{table}[t]
  \caption{Result summary at 80\% target coverage. Cells show nWink (mean Winkler $/$ std$(y)$) and nW (mean width $/$ std$(y)$) $/$ Cov, where std$(y)$ is the standard deviation of the target on the eval split. \emph{Top:} per-backbone CP Average rows from Tables \ref{tab:cp_methods_bench10}, \ref{tab:cp_methods_bench22}, \ref{tab:cp_methods_bench10_100k}, \ref{tab:cp_methods_bench10_full}. \emph{Bottom:} FM native UQ from Table \ref{tab:fm_native_uq_bench10_bench22}. All rows show unweighted means over ten seeds for non-deterministic methods. Their std are displayed in the full result tables linked above. \textbf{Bold} marks the smallest displayed nWink per row among entries with $\mathrm{Cov} \ge 0.70$.}
  \label{tab:cp_methods_summary_small}
  \centering
  \resizebox{\textwidth}{!}{%
  \begin{tabular}{lcccccccccccc}
    \multicolumn{13}{c}{{\large\bfseries\itshape Main Results}} \\[2pt]
    \toprule
    & \multicolumn{2}{c}{Uniform} & \multicolumn{2}{c}{ACI} & \multicolumn{2}{c}{KOWCPI} & \multicolumn{2}{c}{HopCPT} & \multicolumn{2}{c}{ResCP} & \multicolumn{2}{c}{RareCP\,\textit{(Ours)}} \\
    \cmidrule(lr){2-3} \cmidrule(lr){4-5} \cmidrule(lr){6-7} \cmidrule(lr){8-9} \cmidrule(lr){10-11} \cmidrule(lr){12-13}
    \textbf{Backbone} & nWink $\downarrow$ & nW $\downarrow$ / Cov $\uparrow$ & nWink $\downarrow$ & nW $\downarrow$ / Cov $\uparrow$ & nWink $\downarrow$ & nW $\downarrow$ / Cov $\uparrow$ & nWink $\downarrow$ & nW $\downarrow$ / Cov $\uparrow$ & nWink $\downarrow$ & nW $\downarrow$ / Cov $\uparrow$ & nWink $\downarrow$ & nW $\downarrow$ / Cov $\uparrow$ \\
    \midrule
    \multicolumn{13}{c}{\textit{Bench10}} \\
    \addlinespace[2pt]
    TiRex & 0.72 & 0.36 / 0.79 & 0.69 & 0.38 / 0.80 & 0.74 & 0.27 / 0.65 & 0.70 & 0.38 / 0.78 & 0.68 & 0.36 / 0.77 & \textbf{0.61} & 0.38 / 0.80 \\
    TimesFM-2.5 & 0.77 & 0.38 / 0.79 & 0.73 & 0.40 / 0.80 & 0.74 & 0.28 / 0.67 & 0.77 & 0.39 / 0.78 & 0.70 & 0.36 / 0.76 & \textbf{0.62} & 0.40 / 0.80 \\
    TabPFN-TS & 0.81 & 0.39 / 0.79 & 0.77 & 0.41 / 0.80 & 0.73 & 0.30 / 0.70 & 0.81 & 0.40 / 0.79 & 0.72 & 0.37 / 0.72 & \textbf{0.63} & 0.39 / 0.79 \\
    ARIMA & 0.90 & 0.46 / 0.79 & 0.84 & 0.48 / 0.80 & 0.84 & 0.35 / 0.70 & 0.90 & 0.47 / 0.77 & 0.83 & 0.42 / 0.72 & \textbf{0.68} & 0.43 / 0.80 \\
    \midrule
    \multicolumn{13}{c}{\textit{Bench22}} \\
    \addlinespace[2pt]
    TiRex & 1.09 & 0.48 / 0.77 & 1.06 & 0.53 / 0.80 & 1.11 & 0.44 / 0.66 & 1.13 & 0.46 / 0.75 & 1.31 & 0.35 / 0.66 & \textbf{1.01} & 0.48 / 0.75 \\
    TimesFM-2.5 & 1.08 & 0.46 / 0.77 & 1.04 & 0.51 / 0.79 & 1.37 & 0.29 / 0.58 & 1.06 & 0.41 / 0.75 & 1.25 & 0.35 / 0.67 & \textbf{1.00} & 0.47 / 0.75 \\
    TabPFN-TS & 1.13 & 0.48 / 0.77 & \textbf{1.09} & 0.52 / 0.79 & 1.31 & 0.33 / 0.63 & 1.15 & 0.47 / 0.75 & 1.34 & 0.33 / 0.60 & 1.13 & 0.45 / 0.72 \\
    ARIMA & 1.29 & 0.56 / 0.77 & 1.24 & 0.61 / 0.80 & 1.56 & 0.39 / 0.61 & 1.34 & 0.52 / 0.74 & 1.53 & 0.46 / 0.61 & \textbf{1.10} & 0.51 / 0.73 \\
    \midrule
    \multicolumn{13}{c}{\textit{Bench10\_100k}} \\
    \addlinespace[2pt]
    TiRex & 0.67 & 0.32 / 0.78 & 0.64 & 0.34 / 0.80 & 0.68 & 0.21 / 0.63 & 0.65 & 0.34 / 0.78 & 0.62 & 0.33 / 0.78 & \textbf{0.52} & 0.33 / 0.80 \\
    ARIMA & 0.85 & 0.42 / 0.78 & 0.79 & 0.44 / 0.80 & 0.77 & 0.32 / 0.70 & 0.84 & 0.43 / 0.77 & 0.77 & 0.40 / 0.74 & \textbf{0.56} & 0.36 / 0.80 \\
    \midrule
    \multicolumn{13}{c}{\textit{Bench10\_full}} \\
    \addlinespace[2pt]
    TiRex & 0.68 & 0.33 / 0.78 & 0.64 & 0.34 / 0.80 & 0.69 & 0.20 / 0.62 & 0.66 & 0.35 / 0.78 & 0.63 & 0.33 / 0.77 & \textbf{0.52} & 0.33 / 0.80 \\
    ARIMA & 0.85 & 0.42 / 0.78 & 0.79 & 0.44 / 0.80 & 0.77 & 0.32 / 0.70 & 0.85 & 0.44 / 0.77 & 0.78 & 0.40 / 0.74 & \textbf{0.57} & 0.37 / 0.80 \\
    \addlinespace[10pt]
    \midrule[1pt]
    \addlinespace[6pt]
    \multicolumn{13}{c}{\textit{Foundation model native UQ}} \\
    \addlinespace[2pt]
    & \multicolumn{2}{c}{TiRex} & \multicolumn{2}{c}{Chronos-2} & \multicolumn{2}{c}{Moirai-2} & \multicolumn{2}{c}{TimesFM-2.5} & \multicolumn{2}{c}{TabPFN-TS} & \multicolumn{2}{c}{Toto} \\
    \cmidrule(lr){2-3} \cmidrule(lr){4-5} \cmidrule(lr){6-7} \cmidrule(lr){8-9} \cmidrule(lr){10-11} \cmidrule(lr){12-13}
    \textbf{Benchmark} & nWink $\downarrow$ & nW $\downarrow$ / Cov $\uparrow$ & nWink $\downarrow$ & nW $\downarrow$ / Cov $\uparrow$ & nWink $\downarrow$ & nW $\downarrow$ / Cov $\uparrow$ & nWink $\downarrow$ & nW $\downarrow$ / Cov $\uparrow$ & nWink $\downarrow$ & nW $\downarrow$ / Cov $\uparrow$ & nWink $\downarrow$ & nW $\downarrow$ / Cov $\uparrow$ \\
    \addlinespace[2pt]
    \textit{Bench10} & \textbf{0.59} & 0.39 / 0.80 & 0.64 & 0.38 / 0.76 & 0.63 & 0.40 / 0.80 & 0.66 & 0.42 / 0.83 & 0.66 & 0.38 / 0.78 & 0.71 & 0.43 / 0.76 \\
    \textit{Bench22} & 0.83 & 0.50 / 0.80 & 0.83 & 0.48 / 0.76 & \textbf{0.78} & 0.46 / 0.78 & 0.88 & 0.50 / 0.83 & 0.86 & 0.48 / 0.76 & 0.91 & 0.50 / 0.75 \\
    \bottomrule
  \end{tabular}%
  }
\end{table}

\paragraph{CP methods.}
RareCP attains the lowest average nWink on every Bench10 backbone and on three of four backbones in Bench22, trailing only ACI on TabPFN-TS in Bench22. ResCP is the consistent runner-up on Bench10; ACI takes that role on Bench22, reflecting the drift-adaptation strength of the same ACI controller that RareCP wraps. KOWCPI produces the narrowest intervals throughout but undercovers by at least $0.1$. Scaling Bench10's test region from the $30$k default to $100$k and to the full series further widens RareCP's lead on TiRex and ARIMA, where richer regime structure rewards retrieval over uniform smoothing. The strongest single per-dataset improvement is ARIMA on electricity\_H at the full series, where RareCP roughly halves Uniform's nWink (Tables~\ref{tab:cp_methods_bench10_100k}, \ref{tab:cp_methods_bench10_full}). Per-dataset gains concentrate on series with sharp regime shifts (M\_DENSE\_H, solar\_H, electricity\_H, m4\_quarterly, m4\_monthly). Recall that RareCP's hyperparameters are tuned only on Bench10, so the Bench22 and larger-set results measure transfer rather than fit.

\paragraph{Comparison with foundation model native UQ.}
On Bench10, pairing RareCP with an FM as backbone is competitive with the FM's own native UQ, despite adding only a head of $1.5$ to $5.5$M parameters on top of backbones spanning $11$M to $200$M (Table \ref{tab:model_param_counts}). The strongest native models pull ahead on Bench22, where RareCP carries over its Bench10 hyperparameters without refitting. Two structural advantages of the CP wrapper remain. First, CP provides finite-sample coverage guarantees that FM native UQ does not. Second, CP is agnostic to the backbone and wraps any point forecaster, including ARIMA, where no native UQ exists and still beats most zero-shot FMs. Within the native block, no single FM dominates: TiRex and Moirai-2 lead on different scenarios (Table \ref{tab:fm_native_uq_bench10_bench22}).

\subsection{Ablation study}
\label{sec:4.3}

Figure~\ref{fig:stepwise-ablation-waterfall} decomposes RareCP relative to
uniform weighting. A low-capacity trained dense similarity model with
cosine-normalized scores improves Winkler by $13.9\%$. Since this step changes
both the scorer and the similarity normalization, we treat it as a learned dense
reference rather than attributing the gain to cosine attention alone. On TiRex
Bench10, this variant reaches nWink $0.635$, compared to HopCPT at $0.70$ and
ResCP at $0.68$ in Table~\ref{tab:cp_methods_summary_small}. Adding hard
top-$k$ retrieval gives another $4.6\%$, showing that sparse local calibration
improves beyond learned dense weighting.

The remaining components add smaller but consistent gains: the hypernetwork adds
$1.7\%$ by conditioning retrieval on the current test context, teacher anchoring
adds $0.4\%$, residual-regime mixtures add $2.1\%$, and ACI adds $1.0\%$ for
online coverage-drift correction. Together, these steps yield the final RareCP
improvement of $23.6\%$ over uniform weighting.

Table~\ref{tab:norm-winkler-architecture} shows that the hypernetwork gain is
not simply due to added capacity. All MLP and hypernetwork variants were tuned
with Optuna TPE for $100$ trials. Fixed MLP encoders reuse calibration
embeddings across test points, while RareCP uses each test sample to emit a
small retrieval and weighting map. This makes retrieval time-dependent, but also
requires refactoring calibration samples for each query, motivating the
low-capacity emitted model. 
Our interpretation is that having too much capacity for the weighting mechanism can easily lead to overfitting. The solution, we propose, is to scale the parameters in the time dimension by allowing the current test target more control over the retrieved similarites while preventing similarity collapse. 
Empirically, the hypernetwork achieves the best
nWink, $0.6221 \pm 0.0014$, outperforming the best MLP,
$0.6325 \pm 0.0020$, and the larger MLP-5, $0.6338 \pm 0.0034$.

Finally, Table~\ref{tab:expert-count-normalized-uniform} varies only the number
of residual regime experts with the best RareCP hyperparameters fixed. Increasing
from one to three experts gives the main gain, reducing nWink from
$0.6195 \pm 0.0024$ to $0.6105 \pm 0.0014$. More experts improve only slightly,
reaching $0.6065 \pm 0.0025$ at $30$ experts, while runtime rises from
$182.4$s to $842.5$s. Thus, multiple residual regimes help, but most of the
benefit comes from separating a small number of regimes.

\begin{figure}[t]
\centering
\includegraphics[width=0.8\linewidth]{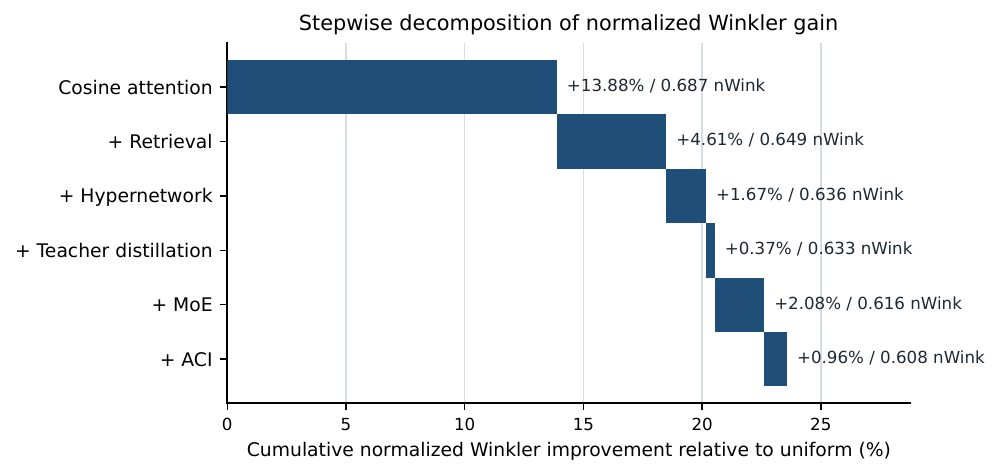}
\caption{
Stepwise RareCP ablation on Bench10. Bars show additional percentage
improvement in Winkler score over uniform weighting, averaged over forecasting
backbones. The first bar is a trained dense linear similarity model without hard
retrieval.
}
\label{fig:stepwise-ablation-waterfall}
\end{figure}

\section{Conclusion}
\label{sec:conclusion}

We introduced \textit{RareCP}, a backbone-agnostic conformal adapter for
time series forecasting that calibrates prediction intervals by retrieving
residuals from locally relevant error regimes. Its main contribution is a learned
local retrieval mechanism: sparse top-$k$ retrieval selects relevant calibration
contexts produced by time-conditioned local experts that are build for the structural bias of the residual distributions of time series forecasting models.


Across GIFT-Eval, RareCP improves normalized Winkler score over the strongest CP
baselines on Bench10 for all four forecasting backbones and remains competitive
on the larger Bench22 setting. The scaling experiments show stronger gains as
larger calibration streams expose more repeated residual regimes, while the
ablations confirm the importance of learned similarity, sparse retrieval,
hypernetwork conditioning, teacher anchoring, expert mixtures, and ACI
correction. Overall, the results suggest that conformal calibration for
time series forecasting benefits from modeling drift and residual-regime
structure separately.

\subsection{Limitations}
\label{sec:limitations}
RareCP still has important limitations. The theoretical analysis covers a clean
setting in which the retrieval rule is fixed before calibration, or learned from
data with residual labels distinct from the calibration residuals. It therefore
does not fully resolve the finite-sample trade-off between learning a stronger
retrieval rule and reserving data for calibration. Empirically, RareCP maintains
coverage close to the target on Bench10 and at larger Bench10 scales, but
coverage is weaker on the more heterogeneous Bench22 split, suggesting that
sparse or abrupt drift remains challenging. RareCP also has a higher offline
training cost than simpler conformal baselines.

\subsection{Future Work}
Future work should study the learning-calibration split at larger scale, where
using separate data to learn the retrieval metric and to compute conformal quantiles
may yield both cleaner theory and stronger empirical performance.
Another direction is to reduce training cost through cached calibration
embeddings, approximate retrieval, or lighter expert and gating
parameterizations. These extensions preserve the central principle of RareCP:
prediction intervals should be calibrated using residuals from the same local
error regime, while the retrieval geometry adapts as the time series evolves.

\clearpage




\bibliographystyle{unsrtnat}
\bibliography{references}




\appendix

\clearpage

\section{Smooth Winkler surrogate and training losses}
\label{subsec:smooth_winkler_loss}

This section defines the differentiable surrogate used to train RareCP. Evaluation uses
the hard weighted quantile in Eq.~\eqref{eq:rarecp_weighted_quantile} and the hard Winkler
score in Eq.~\eqref{eq:winkler_score_method}; the surrogate is used only for
optimization.

Consider one leave-one-out query with weighted residual support
\[
    P=\{(r_i,p_i)\}_{i=1}^{s},
    \qquad
    p_i\ge0,
    \qquad
    \sum_{i=1}^{s}p_i=1.
\]
Sort by residual value and write the sorted support as
\(\{(r_{(i)},p_{(i)})\}_{i=1}^{s}\). Define the left and right CDF endpoints of bin
\(i\) by
\begin{equation}
\label{eq:smooth_cdf_endpoints}
    C_{i-1}=\sum_{j<i}p_{(j)},
    \qquad
    C_i=\sum_{j\le i}p_{(j)},
    \qquad
    C_0=0.
\end{equation}
For quantile level \(q\in(0,1)\), sigmoid temperature \(\tau_q>0\), and
\(\sigma(x)=(1+\exp(-x))^{-1}\), define
\begin{equation}
\label{eq:smooth_quantile_bin_weight}
    b_i(q;\tau_q)
    =
    \left[
        \sigma\!\left(\frac{q-C_{i-1}}{\tau_q}\right)
        -
        \sigma\!\left(\frac{q-C_i}{\tau_q}\right)
    \right]_+,
    \qquad
    \lambda_i(q;\tau_q)
    =
    \frac{b_i(q;\tau_q)}
    {\sum_{\ell=1}^{s}b_\ell(q;\tau_q)} .
\end{equation}
The smooth weighted quantile is
\begin{equation}
\label{eq:smooth_cdf_sigmoid_quantile}
    \widetilde Q_q(P;\tau_q)
    =
    \sum_{i=1}^{s}
    \lambda_i(q;\tau_q)r_{(i)} .
\end{equation}
Large \(\tau_q\) spreads gradient across a broad CDF neighborhood; as
\(\tau_q\downarrow0\), the mass concentrates on the hard weighted-quantile bin.

For target residual \(r_j=y_j-\widehat y_j\), define the temperature-softplus penalty
\begin{equation}
\label{eq:temperature_softplus}
    \operatorname{softplus}_{\tau_p}(x)
    =
    \tau_p\log\bigl(1+\exp(x/\tau_p)\bigr),
    \qquad
    \tau_p>0.
\end{equation}
The smooth residual-space Winkler loss is
\begin{equation}
\label{eq:smooth_winkler_loss}
\begin{aligned}
    \widetilde{\operatorname{Wink}}_{\alpha}
    (P;r_j,\tau_q,\tau_p)
    &:=
    \widetilde Q_{1-\alpha/2}(P;\tau_q)
    -
    \widetilde Q_{\alpha/2}(P;\tau_q)
    \\
    &\quad+
    \frac{2}{\alpha}
    \left[
        \operatorname{softplus}_{\tau_p}
        \bigl(\widetilde Q_{\alpha/2}(P;\tau_q)-r_j\bigr)
        +
        \operatorname{softplus}_{\tau_p}
        \bigl(r_j-\widetilde Q_{1-\alpha/2}(P;\tau_q)\bigr)
    \right].
\end{aligned}
\end{equation}
The point forecast cancels in residual space, so this is equivalent to applying the
interval score to
\[
    \left[
        \widehat y_j+\widetilde Q_{\alpha/2}(P;\tau_q),
        \widehat y_j+\widetilde Q_{1-\alpha/2}(P;\tau_q)
    \right].
\]
As \(\tau_q,\tau_p\downarrow0\), Eq.~\eqref{eq:smooth_winkler_loss} recovers the hard
weighted quantiles and the hard Winkler score.

Training averages the smooth score over a small grid
\(\mathcal G_\alpha\) centered at the target miscoverage
\(\alpha_{\mathrm{tar}}\):
\begin{equation}
\label{eq:alpha_grid_loss_appendix}
    \mathcal W(P;r_j)
    =
    \frac{1}{|\mathcal G_\alpha|}
    \sum_{\alpha\in\mathcal G_\alpha}
    \widetilde{\operatorname{Wink}}_{\alpha}
    (P;r_j,\tau_q,\tau_p).
\end{equation}
In experiments, \(\tau_q\) follows a cyclic schedule: high-temperature phases provide
broad gradients for exploration of residual weights, and low-temperature phases refine
weights near the target CDF thresholds.

Let \(\mathfrak D\) denote the collection of training datasets. During leave-one-out
training, the current query is the held-out calibration context \(a_j\). Therefore, for
expert \(m\), the hypernetwork outputs the query- and dataset-conditioned affine map
\begin{equation}
\label{eq:appendix_query_conditioned_map}
    (A_{m,j,d},b_{m,j,d})
    =
    H_{\theta_m}^{(m)}(a_j,\eta_d).
\end{equation}
Here \(\eta_d=\{d,\mu_d,\sigma_d,\log(N_d)\}\) is the dataset descriptor defined in
Eq.~\eqref{eq:eta_descriptor_method}. The map
\((A_{m,j,d},b_{m,j,d})\) is applied to both the query \(a_j\) and each candidate
calibration context \(a_i\), \(i\neq j\), before computing cosine similarities. The
leave-one-out support for query \(j\in\mathcal I_{d,0}\) is
\[
    P_{m,j}^{(-j)}
    =
    \{(r_i,v_{m,i,j}^{(-j)}):i\in S_{m,j}^{(-j)}\},
\]
where \(S_{m,j}^{(-j)}\) and \(v_{m,i,j}^{(-j)}\) are computed by
Eqs.~\eqref{eq:expert_retrieval_method}--\eqref{eq:expert_weights_method}, with \(j\)
excluded and with the query-conditioned map
\((A_{m,j,d},b_{m,j,d})\).

Each student expert is anchored to a prefit teacher map for the same leave-one-out query,
\begin{equation}
\label{eq:teacher_anchor_map_appendix}
    t_{d,j}^{(m)}(a)
    =
    B_{m,d,j}a+c_{m,d,j}.
\end{equation}
The expert objective is
\begin{equation}
\label{eq:encoder_loss_appendix}
\begin{aligned}
    \mathcal L_{\mathrm{enc}}^{(m)}(\theta_m)
    &=
    \frac{1}{|\mathfrak D|}
    \sum_{d\in\mathfrak D}
    \frac{1}{|\mathcal I_{d,0}|}
    \sum_{j\in\mathcal I_{d,0}}
    \mathcal W(P_{m,j}^{(-j)};r_j)
    \\
    &\quad+
    \lambda_{\mathrm{anch}}
    \frac{1}{|\mathfrak D|}
    \sum_{d\in\mathfrak D}
    \frac{1}{|\mathcal I_{d,0}|}
    \sum_{j\in\mathcal I_{d,0}}
    \left(
        \|A_{m,j,d}-B_{m,d,j}\|_F^2
        +
        \|b_{m,j,d}-c_{m,d,j}\|_2^2
    \right).
\end{aligned}
\end{equation}
The first term trains interval efficiency under leave-one-out retrieval; the second term
regularizes the query-conditioned student geometry toward the corresponding teacher map.
If the teacher maps are dataset-level rather than query-level in an implementation, then
\(B_{m,d,j}\) and \(c_{m,d,j}\) reduce to \(B_{m,d}\) and \(c_{m,d}\).

After the experts are frozen, the gate is trained on leave-one-out mixed-CDF intervals.
For query \(j\), the gate uses the same query and descriptor,
\[
    (\pi_{1,j},\ldots,\pi_{M,j})
    =
    \operatorname{softmax}(g_\omega(a_j,\eta_d)).
\]
The RareCP leave-one-out residual weights are
\[
    w_{i,j}^{(-j)}
    =
    \sum_{m=1}^{M}
    \pi_{m,j}v_{m,i,j}^{(-j)}
    \mathbf 1\{i\in S_{m,j}^{(-j)}\}.
\]
If a calibration index appears in multiple expert supports, its weights are summed. The
mixed support is therefore
\[
    P_{\mathrm{RareCP},j}^{(-j)}
    =
    \{(r_i,w_{i,j}^{(-j)}):
    i\in\cup_{m=1}^{M}S_{m,j}^{(-j)}\}.
\]
Let \(H(\pi_j)=-\sum_{m=1}^{M}\pi_{m,j}\log\pi_{m,j}\). The gate objective is
\begin{equation}
\label{eq:gate_loss_appendix}
\begin{aligned}
    \mathcal L_{\mathrm{gate}}(\omega)
    &=
    \frac{1}{|\mathfrak D|}
    \sum_{d\in\mathfrak D}
    \frac{1}{|\mathcal I_{d,0}|}
    \sum_{j\in\mathcal I_{d,0}}
    \mathcal W(P_{\mathrm{RareCP},j}^{(-j)};r_j)
    \\
    &\quad-
    \lambda_{\mathrm{ent}}
    \frac{1}{|\mathfrak D|}
    \sum_{d\in\mathfrak D}
    \frac{1}{|\mathcal I_{d,0}|}
    \sum_{j\in\mathcal I_{d,0}}
    H(\pi_j).
\end{aligned}
\end{equation}
The entropy term encourages use of multiple experts during gate training and discourages
early collapse to a single residual regime. The outer average gives each dataset equal
weight before averaging over its calibration examples.

\clearpage

\section{Further ablations}
\begin{table}[h]
\centering
\caption{Normalized Winkler score on Tirex bench10. Scores are normalized by the test-set standard deviation per dataset and then averaged across datasets and seeds. Relative improvement is the percentage reduction in normalized Winkler versus Uniform. Lower nWink is better.}
\label{tab:norm-winkler-architecture}
\begin{tabular}{lrrr}
\toprule
Architecture & Params & nWink $\downarrow$ & vs. Uniform $\uparrow$ \\
\midrule
Uniform & $--$ & $0.7195$ & $--$ \\
MLP-2 & $28{,}416$ & $0.6374 \pm 0.0025$ & $11.4\%$ \\
MLP-3 & $58{,}496$ & $0.6325 \pm 0.0020$ & $12.1\%$ \\
MLP-4 & $149{,}376$ & $0.6421 \pm 0.0038$ & $10.8\%$ \\
MLP-5 & $3{,}305{,}276$ & $0.6338 \pm 0.0034$ & $11.9\%$ \\
Hypernetwork & $1{,}226{,}720$ & \textbf{0.6221 $\pm$ 0.0014} & $13.5\%$ \\
\bottomrule
\end{tabular}
\end{table}
\begin{table}[h]
\centering
\caption{Expert-count ablation for RareCP on the bench10. We vary the number of teacher/student experts while keeping the selected hyperparameter configuration fixed. Runtime is mean wall-clock seconds per seed.}
\label{tab:expert-count-normalized-uniform}
\small
\begin{tabular}{rcccc}
\toprule
Experts & Seeds & nWink & Uniform impr. (\%) & Avg. runtime (s) \\
\midrule
1 & 10 & 0.6195 $\pm$ 0.0024 & 13.91 $\pm$ 0.33 & 112.2 \\
3 & 10 & 0.6105 $\pm$ 0.0014 & 15.16 $\pm$ 0.20 & 182.4 \\
10 & 3 & 0.6078 $\pm$ 0.0007 & 15.53 $\pm$ 0.10 & 293.7 \\
30 & 3 & \textbf{0.6065 $\pm$ 0.0025} & \textbf{15.70 $\pm$ 0.35} & 842.5 \\
\bottomrule
\end{tabular}
\end{table}

\begin{table}[H]
\centering
\caption{Hyperparameter search spaces for the architecture ablation on Tirex bench10. We use $\mathcal{A}=\{0.20\}\cup\{0.20\pm0.02m\}:m=1,\ldots,5\}$ for the searched alpha sets.}
\label{tab:architecture-hpo-search-spaces}
\small
\begin{tabular}{llp{0.48\linewidth}}
\toprule
Group & Parameter & Search space \\
\midrule
Shared & Activation & $\{\mathrm{ReLU},\tanh,\mathrm{Softsign},\mathrm{LipSwish}\}$ \\
Shared & Top-$k$ & $\{8,16,24,32,48,64,96\}$ \\
Shared & Batch size & $\{128,256,384,512,768,1024,2048,4096\}$ \\
Shared & Training epochs & $\{50,100,200,300,500\}$ \\
Shared & Hopfield inverse temperature $\beta$ & $\log\mathcal{U}(0.5,24)$ \\
Shared & Learning rate & $\log\mathcal{U}(2{\times}10^{-5},5{\times}10^{-3})$ \\
Shared & Start temperature $\tau_{\mathrm{start}}$ & $\log\mathcal{U}(0.005,0.20)$ \\
Shared & End temperature $\tau_{\mathrm{end}}$ & $\log\mathcal{U}(2{\times}10^{-5},5{\times}10^{-3})$ \\
Shared & Penalty temperature $\tau_{\mathrm{pen}}$ & $\log\mathcal{U}(5{\times}10^{-5},3{\times}10^{-3})$ \\
Shared & Alpha set & $\mathcal{A}$ \\
\midrule
Hypernetwork & Latent dimension $d$ & $\{16,24,32,48,64,96,128\}$ \\
Hypernetwork & Hidden dimension & $\{32,64,96,128,192\}$ \\
Hypernetwork & Hidden layers & $\{1,2,3,4\}$ \\
\midrule
MLP-$N$ & Depth & $N\in\{2,3,4,5\}$ with hidden layers $=N-1$ \\
MLP-$N$ & Latent dimension $d$ & $\{16,24,32,48,64,96,128,192,884\}$ \\
MLP-$N$ & Hidden dimension & $d$ \\
\bottomrule
\end{tabular}
\end{table}

\clearpage

\section{Theoretical background}

\subsection{Motivation: top \texorpdfstring{\(K\)}{K} as proposal efficiency control}
\label{subsec:topk_importance_sampling_motivation}

We use hard top \(K\) retrieval to concentrate the retrieval proposal on locally relevant
calibration points. This concentration is useful as a tradeoff. It can reduce mass on
irrelevant points when the learned scores rank local points well, but it can also remove
target relevant points if the support is truncated too aggressively.

Fix a normalized query state \(u_\star\), and let
\[
    \mathcal I_n:=\{1,\ldots,n\}
\]
be the calibration index set. Let \(\mathcal I_\star\subseteq\mathcal I_n\) denote the
calibration points from the same local error regime as \(u_\star\), and write
\[
    \mathcal I_{\mathrm{off}}
    :=
    \mathcal I_n\setminus \mathcal I_\star
\]
for the off regime points. Let \(s_\theta(u_\star,u_i)\) be a learned retrieval score.

Dense softmax assigns
\[
    q_{\mathrm{den},i}(\theta)
    =
    \frac{
        \exp(s_\theta(u_\star,u_i)/T_w)
    }{
        \sum_{j\in\mathcal I_n}\exp(s_\theta(u_\star,u_j)/T_w)
    },
    \qquad T_w>0.
\]
Hence, whenever \(\mathcal I_{\mathrm{off}}\neq\varnothing\),
\[
    \delta_{\mathrm{den}}(\theta)
    :=
    \sum_{i\in\mathcal I_{\mathrm{off}}}
    q_{\mathrm{den},i}(\theta)
    >
    0.
\]
Thus dense softmax always spends some retrieval mass on off regime calibration points.

By contrast, hard top \(K\) restricts the proposal to
\[
    S_K(\theta)
    :=
    \operatorname{TopK}_{i\in\mathcal I_n}(s_\theta(u_\star,u_i);K)
\]
and assigns
\[
    q_{\mathrm{top},i}(\theta)
    =
    \frac{
        \mathbf 1\{i\in S_K(\theta)\}
        \exp(s_\theta(u_\star,u_i)/T_w)
    }{
        \sum_{j\in S_K(\theta)}
        \exp(s_\theta(u_\star,u_j)/T_w)
    }.
\]
Thus \(q_{\mathrm{top},i}(\theta)=0\) for \(i\notin S_K(\theta)\). If the
selected support is mostly same regime, then
\[
    \delta_{\mathrm{top}}(\theta)
    :=
    \sum_{i\in\mathcal I_{\mathrm{off}}}
    q_{\mathrm{top},i}(\theta)
\]
can be small, and it is exactly zero when
\[
    S_K(\theta)\subseteq \mathcal I_\star.
\]

This is a proposal efficiency effect subject to a support constraint. Suppose the ideal
local target distribution over calibration indices is \(\nu_\star\) and satisfies
\[
    \operatorname{supp}(\nu_\star)\subseteq\mathcal I_\star.
\]
Consider a retrieval proposal \(q\) with the usual coverage condition
\[
    q_i>0
    \qquad\text{whenever}\qquad
    \nu_{\star,i}>0.
\]
Hard truncation must be interpreted under this condition. If top \(K\) removes a target
relevant point with positive \(\nu_{\star,i}\), then the proposal no longer covers the
target. For an importance sampling draw \(I\sim q\), the weight is
\[
    \rho_I:=\frac{\nu_{\star,I}}{q_I}.
\]
Off regime draws have \(\nu_{\star,I}=0\), so they carry no target mass. If
\[
    \delta_q:=q(\mathcal I_{\mathrm{off}}),
\]
then the probability of drawing a same regime point is \(1-\delta_q\).

Equivalently, write the proposal conditional on the same regime set as
\[
    \widetilde q_i
    :=
    \frac{q_i}{1-\delta_q},
    \qquad i\in\mathcal I_\star.
\]
Then
\[
\begin{aligned}
    \mathbb E_{I\sim q}[\rho_I^2]
    &=
    \sum_{i\in\mathcal I_\star}
    q_i
    \left(
        \frac{\nu_{\star,i}}{q_i}
    \right)^2
    \\
    &=
    \frac{1}{1-\delta_q}
    \sum_{i\in\mathcal I_\star}
    \frac{\nu_{\star,i}^2}{\widetilde q_i}.
\end{aligned}
\]
Thus, for a fixed conditional same regime proposal \(\widetilde q\), off regime proposal
mass multiplies the importance weight second moment by
\[
    \frac{1}{1-\delta_q}.
\]
In this sense, retrieval mass placed on off regime points reduces proposal efficiency.
Hard top \(K\) is therefore a tradeoff. When the learned scores rank locally relevant
points highly and the retained support still covers the target relevant calibration
points, truncation can reduce off regime mass. When it drops target relevant points, uses
too small a support, or distorts the same regime proposal, it can reduce efficiency or
invalidate the importance sampler.

\subsection{Normalized learned top \texorpdfstring{\(K_n\)}{Kn} retrieval}
\label{subsec:normalized_learned_topk_short}

The preceding discussion treats top \(K\) as a finite sample proposal choice. We now state
the corresponding asymptotic calibration tradeoff. The condition \(K_n\to\infty\)
preserves effective sample size and local support, while \(K_n/n\to0\) preserves
localization.

We give a compact ResCP style argument for learned top \(K_n\) retrieval when the learned
states are normalized before retrieval. Let \(A_t\) denote the random context vector
corresponding to the observed method context \(a_t\). Let
\[
    \widetilde Z_t:=\phi(A_t)\in\mathbb R^d,
    \qquad
    U_t:=\frac{\widetilde Z_t}{\|\widetilde Z_t\|_2}\in\mathbb S^{d-1},
\]
where \(\|\widetilde Z_t\|_2>0\) almost surely. The representation map \(\phi\), the
forecasting rule \(\widehat y\), the retrieval score, and the weighting rule are trained
or selected before calibration, or using data disjoint from the calibration residual
labels, and are treated as fixed. Let \(\mathcal I_n=\{1,\ldots,n\}\) be the calibration
index set, and let \(K_n\) be a deterministic integer sequence with \(1\le K_n\le n\)
eventually.

Define the signed residual
\[
    E_t:=Y_t-\widehat y_t,
    \qquad
    \widehat y_t:=\widehat y(A_t).
\]
Let
\[
    \mathcal U:=\operatorname{supp}(U_t).
\]
For \(v\in\mathcal U\), let
\[
    F_v(r):=\mathbb P(E_\star\le r\mid U_\star=v)
\]
be a fixed regular conditional version of the conditional signed residual CDF. All
statements below are for query points \(u\in\mathcal U\) at which this version satisfies
the stated assumptions. For \(p\in(0,1)\), define
\[
    \xi_{u,p}^\star:=F_u^{-1}(p)
    =
    \inf\{r:F_u(r)\ge p\}.
\]

For the query state \(u\), retrieve the \(K_n\) calibration states with largest cosine
scores
\[
    s_i(u):=U_i^\top u,
    \qquad i\in\mathcal I_n.
\]
Ties are broken by a deterministic rule that depends only on the states and not on the
residual labels. Because \(U_i,u\in\mathbb S^{d-1}\),
\[
    \|U_i-u\|_2^2=2-2U_i^\top u.
\]
Thus top \(K_n\) retrieval by dot product is exactly nearest neighbor retrieval on the
unit sphere. Let \(S_{K_n}(u)\) be the retrieved index set, and define its radius by
\[
    h_n(u):=\max_{i\in S_{K_n}(u)}\|U_i-u\|_2.
\]

Let the retrieval weights satisfy
\[
    q_{i,n}(u)\ge0,
    \qquad
    \sum_{i\in\mathcal I_n} q_{i,n}(u)=1,
    \qquad
    q_{i,n}(u)=0
    \quad\text{for }i\notin S_{K_n}(u).
\]
The weighted empirical CDF and weighted quantile are
\[
    \widehat F_n(r\mid u)
    :=
    \sum_{i\in\mathcal I_n} q_{i,n}(u)\mathbf 1\{E_i\le r\},
\]
and
\[
    \widehat \xi_{n,p}(u)
    :=
    \inf\{r:\widehat F_n(r\mid u)\ge p\}.
\]

\begin{assumption}[Stochastic admissibility]
\label{ass:short_topk_stochastic}
The process \(\{(U_t,E_t)\}_{t\in\mathbb Z}\) is strictly stationary. For each fixed query
state \(u\), let \(\mathcal G_n(u)\) contain the query state, the normalized calibration
states, the retrieved set, and the retrieval weights:
\[
    \sigma\bigl(
        u,U_1,\dots,U_n,S_{K_n}(u),q_{1,n}(u),\dots,q_{n,n}(u)
    \bigr)
    \subseteq
    \mathcal G_n(u).
\]
The weights are \(\mathcal G_n(u)\) measurable and do not use the residual labels
directly.

Define
\[
    m_{i,n}(r;u)
    :=
    \mathbb E
    \left[
        \mathbf 1\{E_i\le r\}
        \,\middle|\,
        \mathcal G_n(u)
    \right],
\]
and the leakage term
\[
    D_n(u)
    :=
    \sup_{r\in\mathbb R}
    \left|
        \sum_{i\in\mathcal I_n}
        q_{i,n}(u)
        \{m_{i,n}(r;u)-F_{U_i}(r)\}
    \right|.
\]
Assume
\[
    D_n(u)=o_p(1).
\]
Also define the centered empirical process term
\[
    R_n(u)
    :=
    \sup_{r\in\mathbb R}
    \left|
        \sum_{i\in\mathcal I_n}
        q_{i,n}(u)
        \left[
            \mathbf 1\{E_i\le r\}
            -
            m_{i,n}(r;u)
        \right]
    \right|.
\]
Assume
\[
    \frac{
        R_n(u)
    }{
        \left(\sum_{i\in\mathcal I_n} q_{i,n}(u)^2\right)^{1/2}
    }
    =
    O_p(1).
\]
\end{assumption}

\begin{remark}[Why no separate strong mixing assumption is needed]
\label{rem:short_topk_no_mixing}
Strong mixing is not used in the proof of the theorem below. The proof only uses the high
level stochastic bounds
\[
    D_n(u)=o_p(1),
    \qquad
    R_n(u)
    =
    O_p
    \left(
        \left\{\sum_{i\in\mathcal I_n}q_{i,n}(u)^2\right\}^{1/2}
    \right),
\]
together with localization of the retrieved states. A mixing condition could be one route
to verify these bounds in a concrete time series model, but it is not needed as a separate
assumption once \(D_n(u)\) and \(R_n(u)\) are imposed directly. Moreover, nearest neighbor
retrieval may select temporally clustered or overlapping histories, so strong mixing alone
would not by itself verify the required effective sample size bound.
\end{remark}

\begin{assumption}[Top \texorpdfstring{\(K_n\)}{Kn} localization and weight size]
\label{ass:short_topk_localization}
The retrieved set and weights satisfy
\[
    K_n\to\infty,
    \qquad
    \frac{K_n}{n}\to0,
    \qquad
    h_n(u)\overset{p}{\longrightarrow}0,
\]
and
\[
    \sum_{i\in\mathcal I_n} q_{i,n}(u)^2
    =
    O_p(K_n^{-1}).
\]
\end{assumption}

\begin{assumption}[Local CDF continuity and quantile identification]
\label{ass:short_topk_continuity_quantile}
Define the local CDF modulus
\[
    \omega_u(h)
    :=
    \sup_{\substack{v\in\mathcal U\\ \|v-u\|_2\le h}}
    \sup_{r\in\mathbb R}
    |F_v(r)-F_u(r)|.
\]
Assume
\[
    \omega_u(h)\to0
    \qquad
    \text{as }h\downarrow0.
\]
For the chosen level \(p\in(0,1)\), assume \(\xi_{u,p}^\star\) is finite,
\(F_u\) is continuous at \(\xi_{u,p}^\star\),
\[
    F_u(\xi_{u,p}^\star)=p,
\]
and, for every \(\varepsilon>0\),
\[
    F_u(\xi_{u,p}^\star-\varepsilon)
    <
    p
    <
    F_u(\xi_{u,p}^\star+\varepsilon).
\]
\end{assumption}

\begin{remark}[Why the assumptions are plausible for time series]
\label{rem:short_topk_time_series}
The stochastic assumption is a high level way to express that the selected residual
indicators have effective sample size of order
\[
    \left(\sum_{i\in\mathcal I_n} q_{i,n}(u)^2\right)^{-1},
\]
which is of order \(K_n\) under Assumption~\ref{ass:short_topk_localization}. The leakage
term \(D_n(u)\) also matters. Even if retrieval does not explicitly use residual labels,
the retrieved states may contain indirect residual information through overlapping
histories. Temporal gaps, thinning, or blocking are standard ways to verify these high
level conditions in dependent data.

The localization condition is implied by ergodicity when \(u\in\mathcal U\) and
\(K_n/n\to0\). Indeed, for every \(\varepsilon>0\),
\[
    \mathbb P(\|U_t-u\|_2\le\varepsilon)>0,
\]
so the number of calibration states in this ball is asymptotically proportional to \(n\)
and eventually exceeds \(K_n\) with high probability. Hence \(h_n(u)\le\varepsilon\) with
high probability, and \(h_n(u)\to_p0\).

The weight size condition holds for uniform top \(K_n\) weights and for nonnegative score
weights whose unnormalized weights are uniformly comparable on \(S_{K_n}(u)\).
\end{remark}

\begin{theorem}[Top \texorpdfstring{\(K_n\)}{Kn} CDF consistency, quantile consistency, and coverage]
\label{thm:short_topk_coverage}
Suppose Assumptions~\ref{ass:short_topk_stochastic},
\ref{ass:short_topk_localization}, and
\ref{ass:short_topk_continuity_quantile} hold. Then
\[
    \Delta_n(u)
    :=
    \sup_{r\in\mathbb R}
    |\widehat F_n(r\mid u)-F_u(r)|
\]
satisfies
\[
    \Delta_n(u)
    \le
    R_n(u)+D_n(u)+\omega_u(h_n(u)).
\]
Moreover,
\[
    R_n(u)=O_p(K_n^{-1/2}),
\]
and therefore
\[
    \Delta_n(u)\overset{p}{\longrightarrow}0.
\]
Consequently,
\[
    \widehat \xi_{n,p}(u)
    \overset{p}{\longrightarrow}
    \xi_{u,p}^\star,
\]
and
\[
    F_u(\widehat \xi_{n,p}(u))
    \overset{p}{\longrightarrow}
    p.
\]

Assume in addition that the same quantile convergence holds under the joint conditional
law of the calibration data and target residual given \(U_\star=u\), that is,
\[
    \widehat \xi_{n,p}(u)
    \overset{p}{\longrightarrow}
    \xi_{u,p}^\star
    \quad
    \text{under }
    \mathcal L\bigl((U_i,E_i)_{i\in\mathcal I_n},E_\star\mid U_\star=u\bigr).
\]
Then
\[
    \mathbb P
    \left(
        E_\star\le \widehat \xi_{n,p}(u)
        \mid U_\star=u
    \right)
    \longrightarrow p.
\]
Taking \(p=1-\alpha\), the one sided upper prediction set
\[
    \widehat C_{n,+}^{\alpha}
    :=
    (-\infty,\,
        \widehat y_\star+\widehat \xi_{n,1-\alpha}(U_\star)
    ]
\]
satisfies, for every such \(u\),
\[
    \mathbb P
    \left(
        Y_\star
        \in
        \widehat C_{n,+}^{\alpha}
        \mid U_\star=u
    \right)
    \longrightarrow
    1-\alpha.
\]
\end{theorem}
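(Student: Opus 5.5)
The plan is a three-term triangle decomposition, followed by a standard quantile-inversion argument and a dominated-convergence step for coverage. For fixed \(r\), use \(\sum_i q_{i,n}(u)=1\) and insert \(m_{i,n}(r;u)\) and \(F_{U_i}(r)\):
\[
\widehat F_n(r\mid u)-F_u(r)
=\sum_{i}q_{i,n}(u)\bigl[\mathbf 1\{E_i\le r\}-m_{i,n}(r;u)\bigr]
+\sum_i q_{i,n}(u)\bigl[m_{i,n}(r;u)-F_{U_i}(r)\bigr]
+\sum_i q_{i,n}(u)\bigl[F_{U_i}(r)-F_u(r)\bigr].
\]
Take absolute values and the supremum over \(r\). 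The first two terms are bounded by \(R_n(u)\) and \(D_n(u)\) by definition. For the third, the weights vanish off \(S_{K_n}(u)\), so every index with positive weight satisfies \(\|U_i-u\|_2\le h_n(u)\) and \(U_i\in\mathcal U\) almost surely. Each bracket is therefore at most \(\omega_u(h_n(u))\), and convexity of the weights gives the same bound for the weighted sum. This yields the displayed inequality for \(\Delta_n(u)\).

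Next come the rate and consistency claims. By Assumption~\ref{ass:short_topk_stochastic}, \(R_n(u)=O_p((\sum_i q_{i,n}^2)^{1/2})\), and by Assumption~\ref{ass:short_topk_localization}, \(\sum_i q_{i,n}^2=O_p(K_n^{-1})\). A product of \(O_p\) terms is \(O_p\), so \(R_n(u)=O_p(K_n^{-1/2})\to_p0\) because \(K_n\to\infty\). We also have \(D_n(u)=o_p(1)\). Since \(h_n(u)\to_p0\) and \(\omega_u(h)\to0\) as \(h\downarrow0\), we get \(\omega_u(h_n(u))\to_p0\): for any \(\varepsilon\), choose \(\delta\) with \(\omega_u(\delta)<\varepsilon\), and then \(\mathbb P(\omega_u(h_n)>\varepsilon)\le\mathbb P(h_n>\delta)\to0\). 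Monotonicity of \(\omega_u\) in \(h\) is what makes this work. Hence \(\Delta_n(u)\to_p0\).

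For quantile consistency, fix \(\varepsilon>0\) and set \(\eta=\min\{p-F_u(\xi^\star-\varepsilon),\,F_u(\xi^\star+\varepsilon)-p\}>0\), which is positive by Assumption~\ref{ass:short_topk_continuity_quantile}. On the event \(\{\Delta_n<\eta\}\) two things hold.
\begin{itemize}
\item \(\widehat F_n(\xi^\star+\varepsilon\mid u)>p\), so \(\widehat\xi_{n,p}\le\xi^\star+\varepsilon\).
\item \(\widehat F_n(\xi^\star-\varepsilon\mid u)<p\). Since \(\widehat F_n\) is monotone, this gives \(\widehat F_n(\rho)<p\) for all \(\rho\le\xi^\star-\varepsilon\), so \(\widehat\xi_{n,p}>\xi^\star-\varepsilon\).
\end{itemize}
The event has probability tending to one, so \(\widehat\xi_{n,p}(u)\to_p\xi^\star_{u,p}\). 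Then \(F_u(\widehat\xi_{n,p})\to_p F_u(\xi^\star)=p\) by the continuous mapping theorem, using continuity of \(F_u\) at \(\xi^\star\) and the equality \(F_u(\xi^\star)=p\).

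The main obstacle is the coverage step, because the calibration data and the target residual are not independent under the conditional law. I would avoid any independence argument and use the additional joint-law convergence hypothesis directly. Under \(\mathcal L(\cdot\mid U_\star=u)\), with \(\widehat\xi:=\widehat\xi_{n,p}(u)\) and \(\xi^\star:=\xi^\star_{u,p}\), bound
\[
\mathbf 1\{E_\star\le\xi^\star-\varepsilon\}-\mathbf 1\{|\widehat\xi-\xi^\star|>\varepsilon\}
\le\mathbf 1\{E_\star\le\widehat\xi\}
\le\mathbf 1\{E_\star\le\xi^\star+\varepsilon\}+\mathbf 1\{|\widehat\xi-\xi^\star|>\varepsilon\}.
\]
Taking conditional expectations gives
\[
F_u(\xi^\star-\varepsilon)-o(1)\le\mathbb P(E_\star\le\widehat\xi\mid U_\star=u)\le F_u(\xi^\star+\varepsilon)+o(1).
\]
Letting \(n\to\infty\) and then \(\varepsilon\downarrow0\), continuity of \(F_u\) at \(\xi^\star\) yields the limit \(p\). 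Finally, set \(p=1-\alpha\) and use the identity \(\{Y_\star\le\widehat y_\star+\widehat\xi\}=\{E_\star\le\widehat\xi\}\), which follows from \(E_\star=Y_\star-\widehat y_\star\). This gives one-sided coverage \(1-\alpha\).
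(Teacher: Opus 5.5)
Your proposal is correct and follows essentially the same route as the paper: the same split into a centered stochastic term bounded by \(R_n(u)\), a leakage term bounded by \(D_n(u)\), and a localization term bounded by \(\omega_u(h_n(u))\); the same \(O_p\) product argument for \(R_n(u)=O_p(K_n^{-1/2})\); the same \(\eta_u(\varepsilon)\) quantile-inversion step; and the same \(\varepsilon\)-sandwich under the conditional law given \(U_\star=u\) for coverage. You also make explicit some details the paper leaves implicit, namely that monotonicity of \(\omega_u\) gives \(\omega_u(h_n(u))\to_p0\) and that \(U_i\in\mathcal U\) almost surely so that \(\omega_u\) applies to the retrieved states.
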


\begin{proof}
Decompose
\[
    \widehat F_n(r\mid u)-F_u(r)
    =
    A_n(r\mid u)+B_n(r\mid u),
\]
where
\[
    A_n(r\mid u)
    :=
    \sum_{i\in\mathcal I_n}
    q_{i,n}(u)
    \{\mathbf 1\{E_i\le r\}-F_{U_i}(r)\},
\]
and
\[
    B_n(r\mid u)
    :=
    \sum_{i\in\mathcal I_n}
    q_{i,n}(u)
    \{F_{U_i}(r)-F_u(r)\}.
\]

For the stochastic term, write
\[
\begin{aligned}
    A_n(r\mid u)
    &=
    \sum_{i\in\mathcal I_n}
    q_{i,n}(u)
    \left[
        \mathbf 1\{E_i\le r\}
        -
        m_{i,n}(r;u)
    \right]  \\
    &\quad+
    \sum_{i\in\mathcal I_n}
    q_{i,n}(u)
    \left[
        m_{i,n}(r;u)
        -
        F_{U_i}(r)
    \right].
\end{aligned}
\]
Therefore, by Assumption~\ref{ass:short_topk_stochastic},
\[
    \sup_r |A_n(r\mid u)|
    \le
    R_n(u)+D_n(u).
\]

For the localization term, all positive weight indices belong to \(S_{K_n}(u)\), so
\(\|U_i-u\|_2\le h_n(u)\). Hence
\[
\begin{aligned}
    \sup_r |B_n(r\mid u)|
    &\le
    \sum_{i\in\mathcal I_n}
    q_{i,n}(u)
    \sup_r |F_{U_i}(r)-F_u(r)|  \\
    &\le
    \omega_u(h_n(u)).
\end{aligned}
\]
Thus
\[
    \Delta_n(u)
    \le
    R_n(u)+D_n(u)+\omega_u(h_n(u)).
\]
By Assumptions~\ref{ass:short_topk_stochastic} and
\ref{ass:short_topk_localization},
\[
    R_n(u)
    =
    O_p
    \left(
        \left\{\sum_{i\in\mathcal I_n}q_{i,n}(u)^2\right\}^{1/2}
    \right)
    =
    O_p(K_n^{-1/2}).
\]
Since \(K_n\to\infty\), \(D_n(u)=o_p(1)\), \(h_n(u)\to_p0\), and
\(\omega_u(h)\to0\) as \(h\downarrow0\), we get
\[
    \Delta_n(u)\to_p0.
\]

It remains to transfer CDF consistency to quantile consistency. Fix \(\varepsilon>0\),
and set
\[
    \eta_u(\varepsilon)
    :=
    \min
    \left\{
        p-F_u(\xi_{u,p}^\star-\varepsilon),
        \,
        F_u(\xi_{u,p}^\star+\varepsilon)-p
    \right\}.
\]
By Assumption~\ref{ass:short_topk_continuity_quantile}, \(\eta_u(\varepsilon)>0\). On
the event \(\Delta_n(u)<\eta_u(\varepsilon)\),
\[
    \widehat F_n(\xi_{u,p}^\star-\varepsilon\mid u)<p,
    \qquad
    \widehat F_n(\xi_{u,p}^\star+\varepsilon\mid u)>p.
\]
By monotonicity of \(\widehat F_n(\cdot\mid u)\),
\[
    \xi_{u,p}^\star-\varepsilon
    <
    \widehat \xi_{n,p}(u)
    \le
    \xi_{u,p}^\star+\varepsilon.
\]
Thus
\[
    \widehat \xi_{n,p}(u)\to_p \xi_{u,p}^\star.
\]
Continuity of \(F_u\) at \(\xi_{u,p}^\star\) gives
\[
    F_u(\widehat \xi_{n,p}(u))\to_p F_u(\xi_{u,p}^\star)=p.
\]

Now suppose the same quantile convergence holds under the joint conditional law given
\(U_\star=u\). Let
\[
    \Xi_n:=\widehat \xi_{n,p}(u),
    \qquad
    \xi:=\xi_{u,p}^\star.
\]
For every \(\varepsilon>0\),
\[
\begin{aligned}
    \mathbb P(E_\star\le \Xi_n\mid U_\star=u)
    &\ge
    F_u(\xi-\varepsilon)
    -
    \mathbb P(\Xi_n<\xi-\varepsilon\mid U_\star=u),
\end{aligned}
\]
and
\[
\begin{aligned}
    \mathbb P(E_\star\le \Xi_n\mid U_\star=u)
    &\le
    F_u(\xi+\varepsilon)
    +
    \mathbb P(\Xi_n>\xi+\varepsilon\mid U_\star=u).
\end{aligned}
\]
The conditional convergence \(\Xi_n\to_p \xi\) and the continuity of \(F_u\) at \(\xi\)
imply
\[
    \mathbb P
    \left(
        E_\star\le \widehat \xi_{n,p}(u)
        \mid U_\star=u
    \right)
    \to p.
\]
For \(p=1-\alpha\),
\[
    E_\star\le \widehat \xi_{n,1-\alpha}(U_\star)
    \quad
    \Longleftrightarrow
    \quad
    Y_\star
    \le
    \widehat y_\star+\widehat \xi_{n,1-\alpha}(U_\star),
\]
which proves the prediction set statement.
\end{proof}

\begin{remark}[Finite mixtures of top \texorpdfstring{\(K_n\)}{Kn} retrievers]
\label{rem:finite_mixture_topk}
The same argument applies to a finite mixture once the encoders and the gate are fixed
before calibration. Suppose expert \(m\) produces a weighted CDF
\(\widehat F_{m,n}(\cdot\mid u)\) and satisfies
\[
    \Delta_{m,n}(u)
    :=
    \sup_r
    |\widehat F_{m,n}(r\mid u)-F_u(r)|.
\]
Let \(\pi_m(u)\ge0\) and \(\sum_{m=1}^{M}\pi_m(u)=1\). Then the mixed CDF satisfies
\[
\begin{aligned}
    \sup_r
    \left|
        \sum_{m=1}^{M}\pi_m(u)\widehat F_{m,n}(r\mid u)-F_u(r)
    \right|
    &=
    \sup_r
    \left|
        \sum_{m=1}^{M}\pi_m(u)
        \bigl(\widehat F_{m,n}(r\mid u)-F_u(r)\bigr)
    \right|
    \\
    &\le
    \sum_{m=1}^{M}\pi_m(u)\Delta_{m,n}(u)
    \le
    \max_{m\le M}\Delta_{m,n}(u).
\end{aligned}
\]
Thus a finite gate mixture preserves the expert wise asymptotic order.
\end{remark}

\begin{remark}[Top \texorpdfstring{\(K_n\)}{Kn} tradeoff]
The theorem gives the error decomposition
\[
    \Delta_n(u)
    \le
    R_n(u)
    +
    D_n(u)
    +
    \omega_u(h_n(u)).
\]
The first term is stochastic error. The middle term is leakage from conditioning on the
retrieved states. The last term is localization bias.

For uniform top \(K_n\) weights,
\[
    q_{i,n}(u)
    =
    \frac{1}{K_n}\mathbf 1\{i\in S_{K_n}(u)\},
\]
we have
\[
    \sum_{i\in\mathcal I_n}q_{i,n}(u)^2=\frac1{K_n}.
\]
For nonnegative normalized score weights on the retrieved set, for example
\[
    q_{i,n}(u)
    =
    \frac{
        \kappa(U_i^\top u)\mathbf 1\{i\in S_{K_n}(u)\}
    }{
        \sum_{j\in S_{K_n}(u)} \kappa(U_j^\top u)
    },
\]
where \(\kappa\ge0\) is continuous and \(\kappa(1)>0\), localization implies
\[
    U_i^\top u
    =
    1-\frac12\|U_i-u\|_2^2
    =
    1+o_p(1)
    \qquad
    \text{uniformly over }i\in S_{K_n}(u).
\]
Thus these weights are asymptotically uniform and have the same first order stochastic
rate.

If, near \(u\), the normalized learned state distribution satisfies a lower local mass
bound
\[
    \mathbb P(\|U_t-u\|_2\le h)\ge c h^{d_{\mathrm{loc}}}
\]
for small \(h\), and the corresponding ball counts concentrate, then \(K_n\to\infty\) and
\(K_n/n\to0\) imply
\[
    h_n(u)
    =
    O_p
    \left(
        \left(\frac{K_n}{n}\right)^{1/d_{\mathrm{loc}}}
    \right).
\]
If the CDFs are locally H\"older,
\[
    \omega_u(h)\le Lh^\beta,
\]
then
\[
    \Delta_n(u)
    =
    O_p
    \left(
        K_n^{-1/2}
        +
        \left(\frac{K_n}{n}\right)^{\beta/d_{\mathrm{loc}}}
    \right)
    +
    D_n(u).
\]
\end{remark}

\begin{remark}[Signed residuals]
Using signed residuals does not change the proof. The argument only uses the indicators
\(\mathbf 1\{E_i\le r\}\), weighted CDF convergence, monotonicity, and continuity of the
target CDF at the relevant quantile.

With signed residuals and \(p=1-\alpha\), the resulting prediction set is a one sided
upper set. A central two sided interval can be obtained by estimating both signed residual
quantiles:
\[
    \widehat C_n^\alpha
    :=
    \left[
        \widehat y_\star+\widehat \xi_{n,\alpha/2}(U_\star),
        \widehat y_\star+\widehat \xi_{n,1-\alpha/2}(U_\star)
    \right].
\]
Under the same assumptions applied at both levels, with joint conditional convergence of
the two estimated quantiles and continuity at both target quantiles, this interval has
asymptotic conditional coverage \(1-\alpha\) given \(U_\star=u\).
\end{remark}

\begin{remark}[Conditioning]
The guarantee is conditional on the normalized retrieval state:
\[
    \mathbb P
    \left(
        Y_\star\in \widehat C_n^\alpha
        \mid U_\star=u
    \right)
    \to
    1-\alpha.
\]
This conditioning is understood through the regular conditional distribution of
\(Y_\star\) given \(U_\star=u\). It is not automatically conditional on the unnormalized
representation \(\widetilde Z_\star\) or on the full context \(A_\star\). To lift the
guarantee, one needs an additional sufficiency condition, for example
\[
    \mathcal L(E_\star\mid A_\star)
    =
    \mathcal L(E_\star\mid U_\star).
\]
\end{remark}

\clearpage

\section{Experiments and evaluation}
\label{apx:experiments}

This section provides additional experimental details and evaluation results that complement the main paper.

\subsection{Benchmark statistics}
\label{apx:bench}

\begin{table}[h]
  \caption{Per-dataset benchmark statistics for all configurations used in the paper. $n$ is the chronologically ordered series length fed to the pipeline, capped at 30k / 100k samples or full length. Each series is split 60/15/25 into train/calibration/test, with mean, std, min, max, and Interquantile Range (IQR) reported on the test split. Sections cover the headline Bench10 split (30k cap), the twelve datasets extending Bench10 to Bench22, and the scaled Bench10 variants at 100k and full-series.}
  \label{tab:dataset_stats}
  \centering
  \small
  \begin{tabular}{lrrrrrr}
    \toprule
    \textbf{Dataset} & \multicolumn{1}{c}{$n$} & \multicolumn{1}{c}{mean} & \multicolumn{1}{c}{std} & \multicolumn{1}{c}{min} & \multicolumn{1}{c}{max} & \multicolumn{1}{c}{IQR} \\
    \midrule
    \multicolumn{7}{c}{\textit{Bench10 (default cap, 30k)}} \\
    \addlinespace[2pt]
    electricity\_H & 30,000 & 1,390.9 & 1,398.2 & 152.81 & 7,000 & 916.57 \\
    electricity\_15T & 30,000 & 335.37 & 333.75 & 14.81 & 1,674.1 & 235.09 \\
    solar\_H & 30,000 & 23.17 & 41.99 & 0.00 & 223.50 & 28.45 \\
    LOOP\_SEATTLE\_5T & 30,000 & 62.45 & 4.357 & 30.46 & 72.19 & 4.467 \\
    SZ\_TAXI\_15T & 30,000 & 10.94 & 13.21 & 0.00 & 61.73 & 15.11 \\
    M\_DENSE\_H & 28,770 & 863.80 & 849.91 & 0.00 & 4,725 & 1,260 \\
    temperature\_rain & 30,000 & 11.31 & 18.00 & 2.10 & 862.60 & 14.20 \\
    m4\_hourly & 19,458 & 95.45 & 138.39 & 11.00 & 1,744.0 & 85.10 \\
    m4\_daily & 30,000 & 8,640.8 & 6,250.2 & 256.10 & 24,361.3 & 12,347.7 \\
    m4\_weekly & 3,024 & 5,116.6 & 5,327.1 & 317.84 & 28,522.0 & 5,945.2 \\
    \midrule
    \multicolumn{7}{c}{\textit{Bench22 extension (additional 12 datasets beyond Bench10)}} \\
    \addlinespace[2pt]
    solar\_10T & 30,000 & 2.932 & 6.016 & 0.00 & 35.35 & 2.600 \\
    LOOP\_SEATTLE\_H & 30,000 & 61.31 & 4.964 & 16.63 & 70.73 & 4.083 \\
    m4\_monthly & 25,959 & 7,135.9 & 6,537.2 & 240.00 & 52,410.6 & 8,603 \\
    electricity\_D & 23,542 & 33,448.7 & 34,480.9 & 0.00 & 198,887.3 & 34,982.8 \\
    SZ\_TAXI\_H & 14,820 & 11.07 & 11.54 & 0.00 & 57.01 & 15.56 \\
    m4\_quarterly & 308 & 9,247.4 & 4,353.4 & 150.10 & 19,220 & 489.63 \\
    saugeenday\_D & 599 & 27.17 & 26.05 & 7.6 & 187 & 20.20 \\
    M\_DENSE\_D & 2,670 & 895.11 & 559.24 & 0.00 & 1,951 & 1,166.3 \\
    us\_births\_D & 599 & 10,924.3 & 1,306.4 & 7,835 & 12,851 & 2,154 \\
    saugeenday\_W & 159 & 40.72 & 47.45 & 7.97 & 223.26 & 31.51 \\
    us\_births\_W & 111 & 77,022.2 & 3,195.4 & 69,577 & 82,962 & 5,014 \\
    saugeenday\_M & 83 & 32.49 & 31.35 & 9.577 & 116.75 & 23.88 \\
    \midrule
    \multicolumn{7}{c}{\textit{Bench10 (100k cap)}} \\
    \addlinespace[2pt]
    electricity\_H & 100,000 & 1,362.3 & 1,586.6 & 0.00 & 11,617.8 & 1,274.8 \\
    electricity\_15T & 100,000 & 333.10 & 405.66 & 0.00 & 2,938 & 313.31 \\
    solar\_H & 100,000 & 25.14 & 46.12 & 0.00 & 272.15 & 32.15 \\
    LOOP\_SEATTLE\_5T & 100,000 & 61.23 & 7.995 & 1.39 & 76.64 & 5.242 \\
    SZ\_TAXI\_15T & 52,260 & 11.04 & 11.94 & 0.00 & 61.73 & 15.73 \\
    M\_DENSE\_H & 28,770 & 863.80 & 849.91 & 0.00 & 4,725 & 1,260 \\
    temperature\_rain & 100,000 & 12.60 & 13.70 & -2.10 & 862.60 & 16.43 \\
    m4\_hourly & 19,458 & 95.45 & 138.39 & 11.00 & 1,744 & 85.10 \\
    m4\_daily & 38,740 & 8,167.8 & 6,197.9 & 256.10 & 24,361.3 & 12,080.4 \\
    m4\_weekly & 3,024 & 5,116.6 & 5,327.1 & 317.84 & 28,522 & 5,945.2 \\
    \midrule
    \multicolumn{7}{c}{\textit{Bench10 (full-series)}} \\
    \addlinespace[2pt]
    electricity\_H & 151,522 & 1,264.1 & 1,423.5 & 0.00 & 11,617.8 & 1,258.8 \\
    electricity\_15T & 151,522 & 312.04 & 363.12 & 0.00 & 2,938 & 298.84 \\
    solar\_H & 124,807 & 26.28 & 48.08 & 0.00 & 272.15 & 34.05 \\
    LOOP\_SEATTLE\_5T & 309,757 & 59.47 & 8.492 & 1.393 & 76.93 & 7.213 \\
    SZ\_TAXI\_15T & 52,260 & 11.04 & 11.94 & 0.00 & 61.73 & 15.73 \\
    M\_DENSE\_H & 28,770 & 863.80 & 849.91 & 0.00 & 4,725 & 1,260 \\
    temperature\_rain & 101,193 & 12.62 & 13.67 & -2.1 & 862.60 & 16.48 \\
    m4\_hourly & 19,458 & 95.45 & 138.39 & 11.00 & 1,744 & 85.10 \\
    m4\_daily & 38,740 & 8,167.8 & 6,197.9 & 256.10 & 24,361.3 & 12,080.4 \\
    m4\_weekly & 3,024 & 5,116.6 & 5,327.1 & 317.84 & 28,522 & 5,945.2 \\
    \bottomrule
  \end{tabular}
\end{table}

\clearpage

\subsection{Full experiment results}
\label{apx:results}

\subsubsection{Full conformal prediction baseline results - Bench10}

\begin{table}[h]
  \caption{CP methods at 80\% target coverage on Gift-Eval Bench10 (10 datasets, 4 backbones: TiRex, TimesFM-2.5, TabPFN-TS, ARIMA). Cells show nWink (mean Winkler $/$ std$(y)$) and nW (mean width $/$ std$(y)$) $/$ Cov, where std$(y)$ is the standard deviation of the target on the eval split. ResCP, HopCPT and RareCP report mean$\,\pm\,$std over ten seeds, other methods are deterministic. \textbf{Bold} marks the smallest displayed nWink per row among methods with $\mathrm{Cov} \ge 0.70$. Each backbone block ends with an \textit{Average} row over its 10 datasets.}
  \label{tab:cp_methods_bench10}
  \centering
  \resizebox{\textwidth}{!}{%
  \begin{tabular}{lcccccccccccccccc}
    \multicolumn{17}{c}{{\large\bfseries\itshape Bench10}} \\[2pt]
    \toprule
    & \multicolumn{2}{c}{Uniform} & \multicolumn{2}{c}{ACI} & \multicolumn{2}{c}{DtACI} & \multicolumn{2}{c}{NexCP} & \multicolumn{2}{c}{KOWCPI} & \multicolumn{2}{c}{HopCPT} & \multicolumn{2}{c}{ResCP} & \multicolumn{2}{c}{RareCP\,\textit{(Ours)}} \\
    \cmidrule(lr){2-3} \cmidrule(lr){4-5} \cmidrule(lr){6-7} \cmidrule(lr){8-9} \cmidrule(lr){10-11} \cmidrule(lr){12-13} \cmidrule(lr){14-15} \cmidrule(lr){16-17}
    \textbf{Dataset} & nWink $\downarrow$ & nW $\downarrow$ / Cov $\uparrow$ & nWink $\downarrow$ & nW $\downarrow$ / Cov $\uparrow$ & nWink $\downarrow$ & nW $\downarrow$ / Cov $\uparrow$ & nWink $\downarrow$ & nW $\downarrow$ / Cov $\uparrow$ & nWink $\downarrow$ & nW $\downarrow$ / Cov $\uparrow$ & nWink $\downarrow$ & nW $\downarrow$ / Cov $\uparrow$ & nWink $\downarrow$ & nW $\downarrow$ / Cov $\uparrow$ & nWink $\downarrow$ & nW $\downarrow$ / Cov $\uparrow$ \\
    \midrule
    \multicolumn{17}{c}{\textit{TiRex backbone}} \\
    \addlinespace[2pt]
    electricity\_H & 0.26 & 0.15 / 0.80 & 0.23 & 0.13 / 0.80 & 0.27 & 0.17 / 0.78 & 0.22 & 0.12 / 0.80 & 0.25 & 0.11 / 0.78 & 0.24\,{\scriptsize $\pm$\,0.01} & 0.12 / 0.78\,{\scriptsize $\pm$\,0.01} & 0.26\,{\scriptsize $\pm$\,0.01} & 0.11 / 0.76\,{\scriptsize $\pm$\,0.06} & \textbf{0.20}\,{\scriptsize $\pm$\,0.00} & 0.12 / 0.80\,{\scriptsize $\pm$\,0.00} \\
    electricity\_15T & 0.33 & 0.19 / 0.81 & 0.30 & 0.16 / 0.80 & 0.33 & 0.20 / 0.79 & 0.29 & 0.16 / 0.80 & 0.33 & 0.14 / 0.70 & 0.31\,{\scriptsize $\pm$\,0.01} & 0.17 / 0.79\,{\scriptsize $\pm$\,0.00} & 0.31\,{\scriptsize $\pm$\,0.01} & 0.14 / 0.76\,{\scriptsize $\pm$\,0.06} & \textbf{0.25}\,{\scriptsize $\pm$\,0.00} & 0.15 / 0.80\,{\scriptsize $\pm$\,0.00} \\
    solar\_H & 0.74 & 0.23 / 0.79 & 0.74 & 0.25 / 0.80 & 0.72 & 0.27 / 0.79 & 0.74 & 0.25 / 0.80 & 0.75 & 0.15 / 0.64 & 0.54\,{\scriptsize $\pm$\,0.01} & 0.25 / 0.78\,{\scriptsize $\pm$\,0.01} & 0.52\,{\scriptsize $\pm$\,0.00} & 0.25 / 0.79\,{\scriptsize $\pm$\,0.00} & \textbf{0.46}\,{\scriptsize $\pm$\,0.00} & 0.27 / 0.80\,{\scriptsize $\pm$\,0.00} \\
    LOOP\_SEATTLE\_5T & 2.24 & 1.40 / 0.80 & 2.21 & 1.43 / 0.80 & 2.19 & 1.44 / 0.79 & 2.22 & 1.42 / 0.80 & 2.27 & 1.46 / 0.81 & 2.26\,{\scriptsize $\pm$\,0.00} & 1.41 / 0.80\,{\scriptsize $\pm$\,0.00} & 2.26\,{\scriptsize $\pm$\,0.01} & 1.50 / 0.82\,{\scriptsize $\pm$\,0.00} & \textbf{2.11}\,{\scriptsize $\pm$\,0.00} & 1.45 / 0.80\,{\scriptsize $\pm$\,0.00} \\
    SZ\_TAXI\_15T & 1.00 & 0.52 / 0.78 & \textbf{0.85} & 0.52 / 0.81 & 1.00 & 0.64 / 0.79 & \textbf{0.85} & 0.50 / 0.81 & 1.04 & 0.26 / 0.61 & 1.03\,{\scriptsize $\pm$\,0.00} & 0.53 / 0.77\,{\scriptsize $\pm$\,0.00} & 0.91\,{\scriptsize $\pm$\,0.01} & 0.50 / 0.81\,{\scriptsize $\pm$\,0.01} & 0.90\,{\scriptsize $\pm$\,0.00} & 0.60 / 0.80\,{\scriptsize $\pm$\,0.00} \\
    M\_DENSE\_H & 0.96 & 0.49 / 0.79 & 0.92 & 0.50 / 0.80 & 0.99 & 0.58 / 0.79 & 0.92 & 0.50 / 0.80 & 1.02 & 0.20 / 0.54 & 0.95\,{\scriptsize $\pm$\,0.03} & 0.57 / 0.83\,{\scriptsize $\pm$\,0.03} & 0.90\,{\scriptsize $\pm$\,0.01} & 0.50 / 0.79\,{\scriptsize $\pm$\,0.01} & \textbf{0.63}\,{\scriptsize $\pm$\,0.00} & 0.44 / 0.80\,{\scriptsize $\pm$\,0.00} \\
    temperature\_rain & 0.82 & 0.35 / 0.79 & 0.85 & 0.39 / 0.80 & 0.83 & 0.37 / 0.79 & 0.81 & 0.35 / 0.80 & 0.82 & 0.16 / 0.44 & 0.83\,{\scriptsize $\pm$\,0.00} & 0.36 / 0.80\,{\scriptsize $\pm$\,0.01} & \textbf{0.74}\,{\scriptsize $\pm$\,0.01} & 0.30 / 0.81\,{\scriptsize $\pm$\,0.00} & 0.83\,{\scriptsize $\pm$\,0.02} & 0.34 / 0.80\,{\scriptsize $\pm$\,0.00} \\
    m4\_hourly & 0.63 & 0.22 / 0.73 & 0.59 & 0.29 / 0.80 & 0.63 & 0.36 / 0.76 & 0.59 & 0.26 / 0.79 & 0.70 & 0.14 / 0.57 & 0.63\,{\scriptsize $\pm$\,0.00} & 0.24 / 0.75\,{\scriptsize $\pm$\,0.00} & 0.64\,{\scriptsize $\pm$\,0.02} & 0.19 / 0.67\,{\scriptsize $\pm$\,0.01} & \textbf{0.52}\,{\scriptsize $\pm$\,0.00} & 0.31 / 0.80\,{\scriptsize $\pm$\,0.00} \\
    m4\_daily & 0.06 & 0.03 / 0.81 & 0.06 & 0.03 / 0.80 & 0.06 & 0.04 / 0.79 & 0.06 & 0.03 / 0.80 & 0.06 & 0.02 / 0.66 & 0.06\,{\scriptsize $\pm$\,0.01} & 0.03 / 0.77\,{\scriptsize $\pm$\,0.08} & 0.05\,{\scriptsize $\pm$\,0.00} & 0.03 / 0.78\,{\scriptsize $\pm$\,0.03} & \textbf{0.04}\,{\scriptsize $\pm$\,0.00} & 0.03 / 0.80\,{\scriptsize $\pm$\,0.00} \\
    m4\_weekly & \textbf{0.14} & 0.06 / 0.78 & \textbf{0.14} & 0.06 / 0.79 & \textbf{0.14} & 0.06 / 0.77 & \textbf{0.14} & 0.06 / 0.79 & 0.15 & 0.06 / 0.74 & 0.16\,{\scriptsize $\pm$\,0.00} & 0.09 / 0.78\,{\scriptsize $\pm$\,0.01} & 0.15\,{\scriptsize $\pm$\,0.00} & 0.07 / 0.71\,{\scriptsize $\pm$\,0.02} & \textbf{0.14}\,{\scriptsize $\pm$\,0.00} & 0.07 / 0.78\,{\scriptsize $\pm$\,0.00} \\
    \addlinespace[2pt]
    \cmidrule(lr){1-17}
    \textit{Average} & 0.72 & 0.36 / 0.79 & 0.69 & 0.38 / 0.80 & 0.72 & 0.41 / 0.78 & 0.68 & 0.36 / 0.80 & 0.74 & 0.27 / 0.65 & 0.70 & 0.38 / 0.78 & 0.68 & 0.36 / 0.77 & \textbf{0.61} & 0.38 / 0.80 \\
    \midrule
    \multicolumn{17}{c}{\textit{TimesFM-2.5 backbone}} \\
    \addlinespace[2pt]
    electricity\_H & 0.34 & 0.20 / 0.80 & 0.28 & 0.17 / 0.80 & 0.35 & 0.22 / 0.79 & 0.28 & 0.16 / 0.80 & 0.31 & 0.09 / 0.63 & 0.31\,{\scriptsize $\pm$\,0.00} & 0.17 / 0.78\,{\scriptsize $\pm$\,0.00} & 0.33\,{\scriptsize $\pm$\,0.01} & 0.15 / 0.80\,{\scriptsize $\pm$\,0.01} & \textbf{0.21}\,{\scriptsize $\pm$\,0.00} & 0.13 / 0.80\,{\scriptsize $\pm$\,0.00} \\
    electricity\_15T & 0.37 & 0.20 / 0.81 & 0.34 & 0.17 / 0.80 & 0.37 & 0.21 / 0.79 & 0.33 & 0.17 / 0.80 & 0.35 & 0.10 / 0.65 & 0.36\,{\scriptsize $\pm$\,0.00} & 0.18 / 0.79\,{\scriptsize $\pm$\,0.00} & 0.35\,{\scriptsize $\pm$\,0.01} & 0.17 / 0.81\,{\scriptsize $\pm$\,0.02} & \textbf{0.25}\,{\scriptsize $\pm$\,0.00} & 0.16 / 0.80\,{\scriptsize $\pm$\,0.00} \\
    solar\_H & 1.03 & 0.36 / 0.79 & 1.02 & 0.39 / 0.80 & 0.98 & 0.42 / 0.79 & 1.02 & 0.38 / 0.80 & 0.79 & 0.28 / 0.75 & 1.04\,{\scriptsize $\pm$\,0.09} & 0.41 / 0.80\,{\scriptsize $\pm$\,0.02} & 0.76\,{\scriptsize $\pm$\,0.01} & 0.32 / 0.78\,{\scriptsize $\pm$\,0.01} & \textbf{0.55}\,{\scriptsize $\pm$\,0.00} & 0.36 / 0.80\,{\scriptsize $\pm$\,0.00} \\
    LOOP\_SEATTLE\_5T & 2.22 & 1.38 / 0.80 & 2.19 & 1.41 / 0.80 & 2.17 & 1.41 / 0.79 & 2.19 & 1.40 / 0.80 & 2.21 & 1.16 / 0.73 & 2.27\,{\scriptsize $\pm$\,0.14} & 1.38 / 0.78\,{\scriptsize $\pm$\,0.04} & 2.16\,{\scriptsize $\pm$\,0.01} & 1.40 / 0.79\,{\scriptsize $\pm$\,0.00} & \textbf{2.11}\,{\scriptsize $\pm$\,0.00} & 1.47 / 0.80\,{\scriptsize $\pm$\,0.00} \\
    SZ\_TAXI\_15T & 0.99 & 0.51 / 0.78 & 0.84 & 0.51 / 0.81 & 0.98 & 0.63 / 0.79 & \textbf{0.83} & 0.49 / 0.81 & 0.97 & 0.30 / 0.66 & 1.02\,{\scriptsize $\pm$\,0.00} & 0.51 / 0.76\,{\scriptsize $\pm$\,0.00} & 0.84\,{\scriptsize $\pm$\,0.00} & 0.46 / 0.80\,{\scriptsize $\pm$\,0.01} & 0.90\,{\scriptsize $\pm$\,0.00} & 0.61 / 0.80\,{\scriptsize $\pm$\,0.00} \\
    M\_DENSE\_H & 0.97 & 0.49 / 0.79 & 0.93 & 0.52 / 0.80 & 1.00 & 0.57 / 0.79 & 0.93 & 0.51 / 0.80 & 0.99 & 0.30 / 0.63 & 0.96\,{\scriptsize $\pm$\,0.00} & 0.51 / 0.80\,{\scriptsize $\pm$\,0.00} & 0.90\,{\scriptsize $\pm$\,0.00} & 0.46 / 0.76\,{\scriptsize $\pm$\,0.01} & \textbf{0.61}\,{\scriptsize $\pm$\,0.00} & 0.44 / 0.80\,{\scriptsize $\pm$\,0.00} \\
    temperature\_rain & 0.83 & 0.35 / 0.79 & 0.81 & 0.35 / 0.80 & 0.83 & 0.38 / 0.79 & 0.81 & 0.35 / 0.80 & 0.87 & 0.28 / 0.71 & 0.83\,{\scriptsize $\pm$\,0.00} & 0.36 / 0.79\,{\scriptsize $\pm$\,0.00} & \textbf{0.77}\,{\scriptsize $\pm$\,0.00} & 0.33 / 0.81\,{\scriptsize $\pm$\,0.00} & 0.82\,{\scriptsize $\pm$\,0.03} & 0.38 / 0.80\,{\scriptsize $\pm$\,0.00} \\
    m4\_hourly & 0.72 & 0.24 / 0.72 & 0.67 & 0.33 / 0.80 & 0.71 & 0.39 / 0.76 & 0.67 & 0.29 / 0.79 & 0.70 & 0.17 / 0.58 & 0.72\,{\scriptsize $\pm$\,0.00} & 0.27 / 0.73\,{\scriptsize $\pm$\,0.00} & 0.68\,{\scriptsize $\pm$\,0.03} & 0.21 / 0.67\,{\scriptsize $\pm$\,0.01} & \textbf{0.57}\,{\scriptsize $\pm$\,0.00} & 0.35 / 0.80\,{\scriptsize $\pm$\,0.00} \\
    m4\_daily & 0.06 & 0.03 / 0.81 & 0.06 & 0.03 / 0.80 & 0.06 & 0.04 / 0.79 & 0.06 & 0.03 / 0.80 & 0.06 & 0.02 / 0.67 & 0.07\,{\scriptsize $\pm$\,0.01} & 0.03 / 0.76\,{\scriptsize $\pm$\,0.08} & 0.06\,{\scriptsize $\pm$\,0.00} & 0.03 / 0.78\,{\scriptsize $\pm$\,0.01} & \textbf{0.04}\,{\scriptsize $\pm$\,0.00} & 0.03 / 0.80\,{\scriptsize $\pm$\,0.00} \\
    m4\_weekly & \textbf{0.15} & 0.07 / 0.79 & \textbf{0.15} & 0.07 / 0.79 & \textbf{0.15} & 0.06 / 0.78 & \textbf{0.15} & 0.07 / 0.80 & 0.16 & 0.04 / 0.67 & 0.16\,{\scriptsize $\pm$\,0.02} & 0.08 / 0.79\,{\scriptsize $\pm$\,0.02} & 0.16\,{\scriptsize $\pm$\,0.01} & 0.05 / 0.63\,{\scriptsize $\pm$\,0.07} & \textbf{0.15}\,{\scriptsize $\pm$\,0.00} & 0.09 / 0.78\,{\scriptsize $\pm$\,0.00} \\
    \addlinespace[2pt]
    \cmidrule(lr){1-17}
    \textit{Average} & 0.77 & 0.38 / 0.79 & 0.73 & 0.40 / 0.80 & 0.76 & 0.43 / 0.78 & 0.73 & 0.39 / 0.80 & 0.74 & 0.28 / 0.67 & 0.77 & 0.39 / 0.78 & 0.70 & 0.36 / 0.76 & \textbf{0.62} & 0.40 / 0.80 \\
    \midrule
    \multicolumn{17}{c}{\textit{TabPFN-TS backbone}} \\
    \addlinespace[2pt]
    electricity\_H & 0.32 & 0.17 / 0.80 & 0.27 & 0.15 / 0.80 & 0.32 & 0.20 / 0.79 & 0.27 & 0.14 / 0.80 & 0.28 & 0.10 / 0.68 & 0.30\,{\scriptsize $\pm$\,0.01} & 0.16 / 0.77\,{\scriptsize $\pm$\,0.00} & 0.30\,{\scriptsize $\pm$\,0.01} & 0.12 / 0.74\,{\scriptsize $\pm$\,0.04} & \textbf{0.23}\,{\scriptsize $\pm$\,0.00} & 0.14 / 0.79\,{\scriptsize $\pm$\,0.00} \\
    electricity\_15T & 0.35 & 0.19 / 0.81 & 0.32 & 0.17 / 0.80 & 0.35 & 0.20 / 0.79 & 0.32 & 0.16 / 0.80 & 0.33 & 0.12 / 0.69 & 0.34\,{\scriptsize $\pm$\,0.01} & 0.17 / 0.79\,{\scriptsize $\pm$\,0.00} & 0.34\,{\scriptsize $\pm$\,0.00} & 0.14 / 0.74\,{\scriptsize $\pm$\,0.04} & \textbf{0.25}\,{\scriptsize $\pm$\,0.00} & 0.16 / 0.80\,{\scriptsize $\pm$\,0.00} \\
    solar\_H & 1.17 & 0.35 / 0.80 & 1.16 & 0.39 / 0.80 & 1.12 & 0.39 / 0.79 & 1.16 & 0.39 / 0.80 & 0.68 & 0.25 / 0.74 & 1.12\,{\scriptsize $\pm$\,0.03} & 0.40 / 0.81\,{\scriptsize $\pm$\,0.05} & 0.66\,{\scriptsize $\pm$\,0.01} & 0.32 / 0.75\,{\scriptsize $\pm$\,0.02} & \textbf{0.51}\,{\scriptsize $\pm$\,0.00} & 0.32 / 0.80\,{\scriptsize $\pm$\,0.00} \\
    LOOP\_SEATTLE\_5T & 2.27 & 1.43 / 0.80 & 2.24 & 1.45 / 0.80 & 2.24 & 1.48 / 0.79 & 2.25 & 1.45 / 0.80 & 2.23 & 1.27 / 0.76 & 2.31\,{\scriptsize $\pm$\,0.01} & 1.46 / 0.80\,{\scriptsize $\pm$\,0.00} & 2.26\,{\scriptsize $\pm$\,0.01} & 1.51 / 0.80\,{\scriptsize $\pm$\,0.00} & \textbf{2.14}\,{\scriptsize $\pm$\,0.00} & 1.45 / 0.80\,{\scriptsize $\pm$\,0.00} \\
    SZ\_TAXI\_15T & 1.00 & 0.52 / 0.79 & 0.86 & 0.51 / 0.81 & 1.01 & 0.64 / 0.79 & \textbf{0.85} & 0.50 / 0.81 & 0.92 & 0.36 / 0.71 & 1.01\,{\scriptsize $\pm$\,0.01} & 0.52 / 0.77\,{\scriptsize $\pm$\,0.01} & 0.86\,{\scriptsize $\pm$\,0.00} & 0.46 / 0.74\,{\scriptsize $\pm$\,0.01} & 0.93\,{\scriptsize $\pm$\,0.00} & 0.63 / 0.81\,{\scriptsize $\pm$\,0.00} \\
    M\_DENSE\_H & 1.08 & 0.52 / 0.79 & 1.04 & 0.54 / 0.80 & 1.11 & 0.60 / 0.79 & 1.04 & 0.53 / 0.80 & 1.06 & 0.36 / 0.66 & 1.08\,{\scriptsize $\pm$\,0.00} & 0.54 / 0.80\,{\scriptsize $\pm$\,0.01} & 0.97\,{\scriptsize $\pm$\,0.00} & 0.53 / 0.75\,{\scriptsize $\pm$\,0.01} & \textbf{0.65}\,{\scriptsize $\pm$\,0.00} & 0.47 / 0.80\,{\scriptsize $\pm$\,0.00} \\
    temperature\_rain & 0.86 & 0.38 / 0.79 & 0.84 & 0.39 / 0.80 & 0.87 & 0.41 / 0.79 & 0.84 & 0.38 / 0.80 & 0.82 & 0.30 / 0.74 & 0.86\,{\scriptsize $\pm$\,0.00} & 0.38 / 0.79\,{\scriptsize $\pm$\,0.01} & 0.77\,{\scriptsize $\pm$\,0.00} & 0.33 / 0.81\,{\scriptsize $\pm$\,0.00} & \textbf{0.75}\,{\scriptsize $\pm$\,0.01} & 0.28 / 0.79\,{\scriptsize $\pm$\,0.00} \\
    m4\_hourly & 0.82 & 0.26 / 0.72 & 0.76 & 0.36 / 0.80 & 0.81 & 0.45 / 0.76 & 0.77 & 0.31 / 0.79 & 0.74 & 0.21 / 0.63 & 0.82\,{\scriptsize $\pm$\,0.00} & 0.29 / 0.74\,{\scriptsize $\pm$\,0.00} & 0.80\,{\scriptsize $\pm$\,0.04} & 0.22 / 0.65\,{\scriptsize $\pm$\,0.04} & \textbf{0.64}\,{\scriptsize $\pm$\,0.00} & 0.38 / 0.80\,{\scriptsize $\pm$\,0.00} \\
    m4\_daily & 0.06 & 0.03 / 0.80 & 0.06 & 0.03 / 0.80 & 0.07 & 0.04 / 0.79 & 0.06 & 0.03 / 0.80 & 0.06 & 0.02 / 0.71 & 0.06\,{\scriptsize $\pm$\,0.01} & 0.03 / 0.80\,{\scriptsize $\pm$\,0.04} & 0.06\,{\scriptsize $\pm$\,0.00} & 0.03 / 0.75\,{\scriptsize $\pm$\,0.04} & \textbf{0.04}\,{\scriptsize $\pm$\,0.00} & 0.03 / 0.80\,{\scriptsize $\pm$\,0.00} \\
    m4\_weekly & 0.16 & 0.06 / 0.79 & 0.16 & 0.06 / 0.79 & 0.16 & 0.06 / 0.76 & 0.16 & 0.06 / 0.80 & 0.16 & 0.05 / 0.68 & 0.17\,{\scriptsize $\pm$\,0.01} & 0.07 / 0.79\,{\scriptsize $\pm$\,0.02} & 0.20\,{\scriptsize $\pm$\,0.01} & 0.04 / 0.48\,{\scriptsize $\pm$\,0.11} & \textbf{0.15}\,{\scriptsize $\pm$\,0.00} & 0.07 / 0.76\,{\scriptsize $\pm$\,0.00} \\
    \addlinespace[2pt]
    \cmidrule(lr){1-17}
    \textit{Average} & 0.81 & 0.39 / 0.79 & 0.77 & 0.41 / 0.80 & 0.81 & 0.45 / 0.78 & 0.77 & 0.40 / 0.80 & 0.73 & 0.30 / 0.70 & 0.81 & 0.40 / 0.79 & 0.72 & 0.37 / 0.72 & \textbf{0.63} & 0.39 / 0.79 \\
    \midrule
    \multicolumn{17}{c}{\textit{ARIMA backbone}} \\
    \addlinespace[2pt]
    electricity\_H & 0.56 & 0.34 / 0.80 & 0.46 & 0.30 / 0.80 & 0.57 & 0.38 / 0.79 & 0.45 & 0.27 / 0.80 & 0.45 & 0.18 / 0.70 & 0.48\,{\scriptsize $\pm$\,0.00} & 0.28 / 0.78\,{\scriptsize $\pm$\,0.00} & 0.51\,{\scriptsize $\pm$\,0.02} & 0.19 / 0.70\,{\scriptsize $\pm$\,0.02} & \textbf{0.25}\,{\scriptsize $\pm$\,0.00} & 0.17 / 0.80\,{\scriptsize $\pm$\,0.00} \\
    electricity\_15T & 0.36 & 0.20 / 0.81 & 0.33 & 0.18 / 0.80 & 0.37 & 0.22 / 0.79 & 0.32 & 0.17 / 0.81 & 0.33 & 0.13 / 0.70 & 0.34\,{\scriptsize $\pm$\,0.01} & 0.17 / 0.79\,{\scriptsize $\pm$\,0.00} & 0.36\,{\scriptsize $\pm$\,0.01} & 0.14 / 0.74\,{\scriptsize $\pm$\,0.03} & \textbf{0.26}\,{\scriptsize $\pm$\,0.00} & 0.17 / 0.80\,{\scriptsize $\pm$\,0.00} \\
    solar\_H & 1.25 & 0.58 / 0.79 & 1.23 & 0.63 / 0.80 & 1.27 & 0.64 / 0.79 & 1.23 & 0.61 / 0.80 & 1.01 & 0.38 / 0.74 & 1.23\,{\scriptsize $\pm$\,0.02} & 0.63 / 0.79\,{\scriptsize $\pm$\,0.00} & 0.87\,{\scriptsize $\pm$\,0.01} & 0.45 / 0.79\,{\scriptsize $\pm$\,0.01} & \textbf{0.59}\,{\scriptsize $\pm$\,0.00} & 0.35 / 0.80\,{\scriptsize $\pm$\,0.00} \\
    LOOP\_SEATTLE\_5T & 2.29 & 1.46 / 0.80 & 2.25 & 1.48 / 0.80 & 2.25 & 1.48 / 0.79 & 2.27 & 1.48 / 0.80 & 2.23 & 1.29 / 0.75 & 2.34\,{\scriptsize $\pm$\,0.02} & 1.47 / 0.79\,{\scriptsize $\pm$\,0.00} & 2.30\,{\scriptsize $\pm$\,0.01} & 1.54 / 0.80\,{\scriptsize $\pm$\,0.00} & \textbf{2.16}\,{\scriptsize $\pm$\,0.00} & 1.47 / 0.80\,{\scriptsize $\pm$\,0.00} \\
    SZ\_TAXI\_15T & 1.02 & 0.53 / 0.78 & 0.87 & 0.53 / 0.80 & 1.00 & 0.65 / 0.79 & \textbf{0.86} & 0.52 / 0.81 & 0.94 & 0.38 / 0.72 & 1.02\,{\scriptsize $\pm$\,0.01} & 0.54 / 0.75\,{\scriptsize $\pm$\,0.00} & 0.88\,{\scriptsize $\pm$\,0.00} & 0.48 / 0.73\,{\scriptsize $\pm$\,0.02} & 0.93\,{\scriptsize $\pm$\,0.01} & 0.64 / 0.80\,{\scriptsize $\pm$\,0.00} \\
    M\_DENSE\_H & 1.34 & 0.72 / 0.79 & 1.26 & 0.73 / 0.80 & 1.38 & 0.84 / 0.80 & 1.27 & 0.72 / 0.81 & 1.25 & 0.53 / 0.69 & 1.31\,{\scriptsize $\pm$\,0.00} & 0.72 / 0.79\,{\scriptsize $\pm$\,0.00} & 1.19\,{\scriptsize $\pm$\,0.01} & 0.68 / 0.78\,{\scriptsize $\pm$\,0.00} & \textbf{0.67}\,{\scriptsize $\pm$\,0.00} & 0.47 / 0.80\,{\scriptsize $\pm$\,0.00} \\
    temperature\_rain & 1.01 & 0.36 / 0.79 & 0.95 & 0.46 / 0.80 & 0.99 & 0.37 / 0.78 & 0.94 & 0.38 / 0.80 & 1.03 & 0.29 / 0.73 & 1.03\,{\scriptsize $\pm$\,0.00} & 0.36 / 0.76\,{\scriptsize $\pm$\,0.00} & \textbf{0.93}\,{\scriptsize $\pm$\,0.01} & 0.34 / 0.78\,{\scriptsize $\pm$\,0.00} & 1.04\,{\scriptsize $\pm$\,0.07} & 0.50 / 0.79\,{\scriptsize $\pm$\,0.00} \\
    m4\_hourly & 0.95 & 0.33 / 0.73 & 0.85 & 0.43 / 0.80 & 0.93 & 0.53 / 0.76 & 0.88 & 0.40 / 0.79 & 0.88 & 0.26 / 0.61 & 0.98\,{\scriptsize $\pm$\,0.00} & 0.39 / 0.75\,{\scriptsize $\pm$\,0.00} & 0.99\,{\scriptsize $\pm$\,0.01} & 0.27 / 0.65\,{\scriptsize $\pm$\,0.02} & \textbf{0.69}\,{\scriptsize $\pm$\,0.00} & 0.44 / 0.80\,{\scriptsize $\pm$\,0.00} \\
    m4\_daily & 0.06 & 0.03 / 0.81 & 0.06 & 0.03 / 0.80 & 0.06 & 0.04 / 0.79 & 0.06 & 0.03 / 0.80 & 0.05 & 0.02 / 0.72 & 0.07\,{\scriptsize $\pm$\,0.01} & 0.03 / 0.76\,{\scriptsize $\pm$\,0.06} & 0.06\,{\scriptsize $\pm$\,0.00} & 0.03 / 0.75\,{\scriptsize $\pm$\,0.03} & \textbf{0.04}\,{\scriptsize $\pm$\,0.00} & 0.03 / 0.80\,{\scriptsize $\pm$\,0.00} \\
    m4\_weekly & 0.17 & 0.06 / 0.77 & 0.17 & 0.07 / 0.79 & 0.17 & 0.07 / 0.78 & 0.17 & 0.07 / 0.79 & 0.18 & 0.05 / 0.64 & 0.17\,{\scriptsize $\pm$\,0.01} & 0.08 / 0.78\,{\scriptsize $\pm$\,0.02} & 0.20\,{\scriptsize $\pm$\,0.01} & 0.05 / 0.48\,{\scriptsize $\pm$\,0.10} & \textbf{0.16}\,{\scriptsize $\pm$\,0.00} & 0.09 / 0.78\,{\scriptsize $\pm$\,0.00} \\
    \addlinespace[2pt]
    \cmidrule(lr){1-17}
    \textit{Average} & 0.90 & 0.46 / 0.79 & 0.84 & 0.48 / 0.80 & 0.90 & 0.52 / 0.79 & 0.84 & 0.47 / 0.80 & 0.84 & 0.35 / 0.70 & 0.90 & 0.47 / 0.77 & 0.83 & 0.42 / 0.72 & \textbf{0.68} & 0.43 / 0.80 \\
    \bottomrule
  \end{tabular}%
  }
\end{table}

\clearpage

\subsubsection{Full conformal prediction baseline results - Bench22}

\begin{table}[h]
  \caption{CP methods at 80\% target coverage on Gift-Eval Bench22 (22 datasets, 4 backbones: TiRex, TimesFM-2.5, TabPFN-TS, ARIMA). Cells show nWink (mean Winkler $/$ std$(y)$) and nW (mean width $/$ std$(y)$) $/$ Cov, where std$(y)$ is the standard deviation of the target on the eval split. ResCP, HopCPT and RareCP report mean$\,\pm\,$std over ten seeds, other methods are deterministic. \textbf{Bold} marks the smallest displayed nWink per row among methods with $\mathrm{Cov} \ge 0.70$. Each backbone block ends with an \textit{Average} row over its 22 datasets.}
  \label{tab:cp_methods_bench22}
  \centering
  \resizebox{\textwidth}{!}{%
  \begin{tabular}{lcccccccccccccccc}
    \multicolumn{17}{c}{{\large\bfseries\itshape Bench22}} \\[2pt]
    \toprule
    & \multicolumn{2}{c}{Uniform} & \multicolumn{2}{c}{ACI} & \multicolumn{2}{c}{DtACI} & \multicolumn{2}{c}{NexCP} & \multicolumn{2}{c}{KOWCPI} & \multicolumn{2}{c}{HopCPT} & \multicolumn{2}{c}{ResCP} & \multicolumn{2}{c}{RareCP\,\textit{(Ours)}} \\
    \cmidrule(lr){2-3} \cmidrule(lr){4-5} \cmidrule(lr){6-7} \cmidrule(lr){8-9} \cmidrule(lr){10-11} \cmidrule(lr){12-13} \cmidrule(lr){14-15} \cmidrule(lr){16-17}
    \textbf{Dataset} & nWink $\downarrow$ & nW $\downarrow$ / Cov $\uparrow$ & nWink $\downarrow$ & nW $\downarrow$ / Cov $\uparrow$ & nWink $\downarrow$ & nW $\downarrow$ / Cov $\uparrow$ & nWink $\downarrow$ & nW $\downarrow$ / Cov $\uparrow$ & nWink $\downarrow$ & nW $\downarrow$ / Cov $\uparrow$ & nWink $\downarrow$ & nW $\downarrow$ / Cov $\uparrow$ & nWink $\downarrow$ & nW $\downarrow$ / Cov $\uparrow$ & nWink $\downarrow$ & nW $\downarrow$ / Cov $\uparrow$ \\
    \midrule
    \multicolumn{17}{c}{\textit{TiRex backbone}} \\
    \addlinespace[2pt]
    electricity\_H & 0.26 & 0.15 / 0.80 & 0.23 & 0.13 / 0.80 & 0.27 & 0.17 / 0.78 & 0.22 & 0.12 / 0.80 & 0.25 & 0.11 / 0.78 & 0.24\,{\scriptsize $\pm$\,0.01} & 0.12 / 0.78\,{\scriptsize $\pm$\,0.00} & 0.26\,{\scriptsize $\pm$\,0.01} & 0.11 / 0.76\,{\scriptsize $\pm$\,0.06} & \textbf{0.21}\,{\scriptsize $\pm$\,0.00} & 0.13 / 0.80\,{\scriptsize $\pm$\,0.00} \\
    electricity\_15T & 0.33 & 0.19 / 0.81 & 0.30 & 0.16 / 0.80 & 0.33 & 0.20 / 0.79 & 0.29 & 0.16 / 0.80 & 0.33 & 0.14 / 0.70 & 0.31\,{\scriptsize $\pm$\,0.01} & 0.16 / 0.79\,{\scriptsize $\pm$\,0.00} & 0.31\,{\scriptsize $\pm$\,0.01} & 0.14 / 0.76\,{\scriptsize $\pm$\,0.06} & \textbf{0.25}\,{\scriptsize $\pm$\,0.00} & 0.17 / 0.80\,{\scriptsize $\pm$\,0.00} \\
    solar\_H & 0.74 & 0.23 / 0.79 & 0.74 & 0.25 / 0.80 & 0.72 & 0.27 / 0.79 & 0.74 & 0.25 / 0.80 & 0.75 & 0.15 / 0.64 & 0.54\,{\scriptsize $\pm$\,0.01} & 0.25 / 0.78\,{\scriptsize $\pm$\,0.01} & 0.52\,{\scriptsize $\pm$\,0.00} & 0.25 / 0.79\,{\scriptsize $\pm$\,0.00} & \textbf{0.48}\,{\scriptsize $\pm$\,0.00} & 0.29 / 0.80\,{\scriptsize $\pm$\,0.00} \\
    LOOP\_SEATTLE\_5T & 2.24 & 1.40 / 0.80 & 2.21 & 1.43 / 0.80 & 2.19 & 1.44 / 0.79 & 2.22 & 1.42 / 0.80 & 2.27 & 1.46 / 0.81 & 2.26\,{\scriptsize $\pm$\,0.00} & 1.41 / 0.80\,{\scriptsize $\pm$\,0.00} & 2.26\,{\scriptsize $\pm$\,0.01} & 1.50 / 0.82\,{\scriptsize $\pm$\,0.00} & \textbf{2.16}\,{\scriptsize $\pm$\,0.01} & 1.50 / 0.80\,{\scriptsize $\pm$\,0.00} \\
    SZ\_TAXI\_15T & 1.00 & 0.52 / 0.78 & \textbf{0.85} & 0.52 / 0.81 & 1.00 & 0.64 / 0.79 & \textbf{0.85} & 0.50 / 0.81 & 1.04 & 0.26 / 0.61 & 1.03\,{\scriptsize $\pm$\,0.00} & 0.53 / 0.77\,{\scriptsize $\pm$\,0.00} & 0.91\,{\scriptsize $\pm$\,0.01} & 0.50 / 0.81\,{\scriptsize $\pm$\,0.01} & 0.93\,{\scriptsize $\pm$\,0.00} & 0.63 / 0.80\,{\scriptsize $\pm$\,0.00} \\
    M\_DENSE\_H & 0.96 & 0.49 / 0.79 & 0.92 & 0.50 / 0.80 & 0.99 & 0.58 / 0.79 & 0.92 & 0.50 / 0.80 & 1.02 & 0.20 / 0.54 & 0.94\,{\scriptsize $\pm$\,0.04} & 0.57 / 0.83\,{\scriptsize $\pm$\,0.03} & 0.90\,{\scriptsize $\pm$\,0.01} & 0.50 / 0.79\,{\scriptsize $\pm$\,0.01} & \textbf{0.63}\,{\scriptsize $\pm$\,0.00} & 0.46 / 0.80\,{\scriptsize $\pm$\,0.00} \\
    temperature\_rain & 0.82 & 0.35 / 0.79 & 0.85 & 0.39 / 0.80 & 0.83 & 0.37 / 0.79 & 0.81 & 0.35 / 0.80 & 0.82 & 0.16 / 0.44 & 0.83\,{\scriptsize $\pm$\,0.00} & 0.36 / 0.80\,{\scriptsize $\pm$\,0.01} & \textbf{0.74}\,{\scriptsize $\pm$\,0.01} & 0.30 / 0.81\,{\scriptsize $\pm$\,0.00} & 0.80\,{\scriptsize $\pm$\,0.06} & 0.36 / 0.79\,{\scriptsize $\pm$\,0.00} \\
    m4\_hourly & 0.63 & 0.22 / 0.73 & 0.59 & 0.29 / 0.80 & 0.63 & 0.36 / 0.76 & 0.59 & 0.26 / 0.79 & 0.70 & 0.14 / 0.57 & 0.63\,{\scriptsize $\pm$\,0.00} & 0.24 / 0.75\,{\scriptsize $\pm$\,0.00} & 0.64\,{\scriptsize $\pm$\,0.02} & 0.19 / 0.67\,{\scriptsize $\pm$\,0.01} & \textbf{0.56}\,{\scriptsize $\pm$\,0.00} & 0.35 / 0.79\,{\scriptsize $\pm$\,0.00} \\
    m4\_daily & 0.06 & 0.03 / 0.81 & 0.06 & 0.03 / 0.80 & 0.06 & 0.04 / 0.79 & 0.06 & 0.03 / 0.80 & 0.06 & 0.02 / 0.66 & 0.06\,{\scriptsize $\pm$\,0.00} & 0.03 / 0.80\,{\scriptsize $\pm$\,0.01} & 0.05\,{\scriptsize $\pm$\,0.00} & 0.03 / 0.78\,{\scriptsize $\pm$\,0.03} & \textbf{0.04}\,{\scriptsize $\pm$\,0.00} & 0.03 / 0.80\,{\scriptsize $\pm$\,0.00} \\
    m4\_weekly & \textbf{0.14} & 0.06 / 0.78 & \textbf{0.14} & 0.06 / 0.79 & \textbf{0.14} & 0.06 / 0.77 & \textbf{0.14} & 0.06 / 0.79 & 0.15 & 0.06 / 0.74 & 0.16\,{\scriptsize $\pm$\,0.00} & 0.09 / 0.78\,{\scriptsize $\pm$\,0.02} & 0.15\,{\scriptsize $\pm$\,0.00} & 0.07 / 0.71\,{\scriptsize $\pm$\,0.02} & \textbf{0.14}\,{\scriptsize $\pm$\,0.00} & 0.08 / 0.78\,{\scriptsize $\pm$\,0.00} \\
    solar\_10T & 0.48 & 0.11 / 0.80 & 0.47 & 0.13 / 0.80 & 0.48 & 0.15 / 0.79 & 0.47 & 0.14 / 0.81 & 0.44 & 0.06 / 0.68 & 0.48\,{\scriptsize $\pm$\,0.16} & 0.19 / 0.76\,{\scriptsize $\pm$\,0.06} & \textbf{0.33}\,{\scriptsize $\pm$\,0.01} & 0.11 / 0.76\,{\scriptsize $\pm$\,0.04} & 0.36\,{\scriptsize $\pm$\,0.00} & 0.17 / 0.80\,{\scriptsize $\pm$\,0.00} \\
    LOOP\_SEATTLE\_H & 1.62 & 0.82 / 0.79 & 1.59 & 0.85 / 0.80 & 1.64 & 0.92 / 0.79 & 1.59 & 0.83 / 0.80 & 1.69 & 0.72 / 0.70 & 1.64\,{\scriptsize $\pm$\,0.00} & 0.85 / 0.80\,{\scriptsize $\pm$\,0.01} & 1.49\,{\scriptsize $\pm$\,0.01} & 0.94 / 0.82\,{\scriptsize $\pm$\,0.00} & \textbf{1.25}\,{\scriptsize $\pm$\,0.00} & 0.90 / 0.80\,{\scriptsize $\pm$\,0.00} \\
    m4\_monthly & 0.18 & 0.06 / 0.81 & 0.16 & 0.06 / 0.80 & 0.17 & 0.07 / 0.78 & 0.16 & 0.06 / 0.80 & 0.17 & 0.04 / 0.65 & 0.18\,{\scriptsize $\pm$\,0.01} & 0.06 / 0.77\,{\scriptsize $\pm$\,0.02} & 0.15\,{\scriptsize $\pm$\,0.01} & 0.07 / 0.81\,{\scriptsize $\pm$\,0.05} & \textbf{0.13}\,{\scriptsize $\pm$\,0.00} & 0.08 / 0.79\,{\scriptsize $\pm$\,0.00} \\
    electricity\_D & 0.24 & 0.08 / 0.77 & 0.21 & 0.10 / 0.80 & 0.25 & 0.13 / 0.78 & 0.21 & 0.09 / 0.80 & 0.23 & 0.05 / 0.58 & 0.26\,{\scriptsize $\pm$\,0.00} & 0.09 / 0.76\,{\scriptsize $\pm$\,0.00} & 0.22\,{\scriptsize $\pm$\,0.01} & 0.06 / 0.70\,{\scriptsize $\pm$\,0.02} & \textbf{0.17}\,{\scriptsize $\pm$\,0.00} & 0.11 / 0.80\,{\scriptsize $\pm$\,0.00} \\
    SZ\_TAXI\_H & 0.84 & 0.43 / 0.77 & \textbf{0.78} & 0.48 / 0.80 & 0.81 & 0.47 / 0.78 & \textbf{0.78} & 0.46 / 0.80 & 0.86 & 0.37 / 0.68 & 0.86\,{\scriptsize $\pm$\,0.03} & 0.51 / 0.78\,{\scriptsize $\pm$\,0.03} & 0.82\,{\scriptsize $\pm$\,0.00} & 0.47 / 0.79\,{\scriptsize $\pm$\,0.01} & 0.83\,{\scriptsize $\pm$\,0.00} & 0.57 / 0.80\,{\scriptsize $\pm$\,0.00} \\
    m4\_quarterly & 0.14 & 0.10 / 0.91 & 0.13 & 0.08 / 0.91 & 0.13 & 0.08 / 0.91 & 0.13 & 0.08 / 0.91 & 0.12 & 0.06 / 0.90 & 0.18\,{\scriptsize $\pm$\,0.09} & 0.17 / 0.97\,{\scriptsize $\pm$\,0.01} & 0.14\,{\scriptsize $\pm$\,0.01} & 0.03 / 0.55\,{\scriptsize $\pm$\,0.15} & \textbf{0.09}\,{\scriptsize $\pm$\,0.00} & 0.05 / 0.71\,{\scriptsize $\pm$\,0.01} \\
    saugeenday\_D & 0.93 & 0.20 / 0.71 & \textbf{0.90} & 0.22 / 0.72 & 0.93 & 0.19 / 0.72 & 0.92 & 0.19 / 0.70 & 0.92 & 0.18 / 0.67 & 0.95\,{\scriptsize $\pm$\,0.05} & 0.17 / 0.65\,{\scriptsize $\pm$\,0.04} & 0.90\,{\scriptsize $\pm$\,0.02} & 0.13 / 0.43\,{\scriptsize $\pm$\,0.03} & 1.07\,{\scriptsize $\pm$\,0.02} & 0.41 / 0.69\,{\scriptsize $\pm$\,0.01} \\
    M\_DENSE\_D & 0.86 & 0.17 / 0.58 & 0.78 & 0.34 / 0.80 & 0.80 & 0.22 / 0.67 & 0.78 & 0.29 / 0.76 & 0.73 & 0.12 / 0.36 & 0.89\,{\scriptsize $\pm$\,0.00} & 0.19 / 0.56\,{\scriptsize $\pm$\,0.00} & 0.64\,{\scriptsize $\pm$\,0.02} & 0.24 / 0.57\,{\scriptsize $\pm$\,0.02} & \textbf{0.49}\,{\scriptsize $\pm$\,0.01} & 0.33 / 0.80\,{\scriptsize $\pm$\,0.00} \\
    us\_births\_D & 1.35 & 0.56 / 0.71 & \textbf{1.33} & 0.63 / 0.74 & 1.34 & 0.58 / 0.72 & 1.34 & 0.60 / 0.74 & 1.57 & 0.55 / 0.64 & 1.44\,{\scriptsize $\pm$\,0.01} & 0.55 / 0.65\,{\scriptsize $\pm$\,0.00} & 1.80\,{\scriptsize $\pm$\,0.02} & 0.45 / 0.51\,{\scriptsize $\pm$\,0.01} & 1.61\,{\scriptsize $\pm$\,0.03} & 0.95 / 0.74\,{\scriptsize $\pm$\,0.00} \\
    saugeenday\_W & 3.29 & 0.81 / 0.80 & 3.34 & 0.96 / 0.83 & \textbf{3.28} & 0.92 / 0.83 & 3.37 & 0.89 / 0.80 & 3.84 & 1.30 / 0.71 & 3.25\,{\scriptsize $\pm$\,0.01} & 0.34 / 0.59\,{\scriptsize $\pm$\,0.03} & 4.31\,{\scriptsize $\pm$\,0.18} & 0.75 / 0.38\,{\scriptsize $\pm$\,0.06} & 3.05\,{\scriptsize $\pm$\,0.07} & 0.91 / 0.59\,{\scriptsize $\pm$\,0.02} \\
    us\_births\_W & 2.51 & 1.02 / 0.66 & 2.45 & 1.16 / 0.69 & 2.52 & 1.14 / 0.66 & 2.51 & 1.06 / 0.66 & \textbf{2.40} & 1.33 / 0.72 & 2.68\,{\scriptsize $\pm$\,0.00} & 0.76 / 0.55\,{\scriptsize $\pm$\,0.00} & 4.20\,{\scriptsize $\pm$\,0.31} & 0.63 / 0.34\,{\scriptsize $\pm$\,0.02} & 2.98\,{\scriptsize $\pm$\,0.04} & 1.40 / 0.62\,{\scriptsize $\pm$\,0.02} \\
    saugeenday\_M & 4.41 & 2.57 / 0.82 & 4.28 & 2.78 / 0.82 & 4.45 & 2.95 / 0.82 & 4.24 & 2.74 / 0.82 & \textbf{4.03} & 2.30 / 0.82 & 4.98\,{\scriptsize $\pm$\,2.84} & 2.45 / 0.76\,{\scriptsize $\pm$\,0.27} & 7.14\,{\scriptsize $\pm$\,0.45} & 0.22 / 0.12\,{\scriptsize $\pm$\,0.05} & 4.08\,{\scriptsize $\pm$\,0.34} & 0.70 / 0.42\,{\scriptsize $\pm$\,0.02} \\
    \addlinespace[2pt]
    \cmidrule(lr){1-17}
    \textit{Average} & 1.09 & 0.48 / 0.77 & 1.06 & 0.53 / 0.80 & 1.09 & 0.54 / 0.78 & 1.06 & 0.50 / 0.79 & 1.11 & 0.44 / 0.66 & 1.13 & 0.46 / 0.75 & 1.31 & 0.35 / 0.66 & \textbf{1.01} & 0.48 / 0.75 \\
    \midrule
    \multicolumn{17}{c}{\textit{TimesFM-2.5 backbone}} \\
    \addlinespace[2pt]
    electricity\_H & 0.34 & 0.20 / 0.80 & 0.28 & 0.17 / 0.80 & 0.35 & 0.22 / 0.79 & 0.28 & 0.16 / 0.80 & 0.31 & 0.09 / 0.63 & 0.31\,{\scriptsize $\pm$\,0.00} & 0.18 / 0.79\,{\scriptsize $\pm$\,0.00} & 0.33\,{\scriptsize $\pm$\,0.01} & 0.15 / 0.80\,{\scriptsize $\pm$\,0.01} & \textbf{0.22}\,{\scriptsize $\pm$\,0.00} & 0.14 / 0.80\,{\scriptsize $\pm$\,0.00} \\
    electricity\_15T & 0.37 & 0.20 / 0.81 & 0.34 & 0.17 / 0.80 & 0.37 & 0.21 / 0.79 & 0.33 & 0.17 / 0.80 & 0.35 & 0.10 / 0.65 & 0.36\,{\scriptsize $\pm$\,0.00} & 0.18 / 0.79\,{\scriptsize $\pm$\,0.00} & 0.35\,{\scriptsize $\pm$\,0.01} & 0.17 / 0.81\,{\scriptsize $\pm$\,0.02} & \textbf{0.25}\,{\scriptsize $\pm$\,0.00} & 0.17 / 0.80\,{\scriptsize $\pm$\,0.00} \\
    solar\_H & 1.03 & 0.36 / 0.79 & 1.02 & 0.39 / 0.80 & 0.98 & 0.42 / 0.79 & 1.02 & 0.38 / 0.80 & 0.79 & 0.28 / 0.75 & 0.98\,{\scriptsize $\pm$\,0.07} & 0.41 / 0.81\,{\scriptsize $\pm$\,0.03} & 0.76\,{\scriptsize $\pm$\,0.01} & 0.32 / 0.78\,{\scriptsize $\pm$\,0.01} & \textbf{0.57}\,{\scriptsize $\pm$\,0.00} & 0.38 / 0.80\,{\scriptsize $\pm$\,0.00} \\
    LOOP\_SEATTLE\_5T & 2.22 & 1.38 / 0.80 & 2.19 & 1.41 / 0.80 & 2.17 & 1.41 / 0.79 & 2.19 & 1.40 / 0.80 & 2.21 & 1.16 / 0.73 & 2.25\,{\scriptsize $\pm$\,0.09} & 1.39 / 0.79\,{\scriptsize $\pm$\,0.03} & 2.16\,{\scriptsize $\pm$\,0.01} & 1.40 / 0.79\,{\scriptsize $\pm$\,0.00} & \textbf{2.14}\,{\scriptsize $\pm$\,0.01} & 1.51 / 0.80\,{\scriptsize $\pm$\,0.00} \\
    SZ\_TAXI\_15T & 0.99 & 0.51 / 0.78 & 0.84 & 0.51 / 0.81 & 0.98 & 0.63 / 0.79 & \textbf{0.83} & 0.49 / 0.81 & 0.97 & 0.30 / 0.66 & 1.02\,{\scriptsize $\pm$\,0.00} & 0.51 / 0.76\,{\scriptsize $\pm$\,0.00} & 0.84\,{\scriptsize $\pm$\,0.00} & 0.46 / 0.80\,{\scriptsize $\pm$\,0.01} & 0.95\,{\scriptsize $\pm$\,0.01} & 0.65 / 0.80\,{\scriptsize $\pm$\,0.00} \\
    M\_DENSE\_H & 0.97 & 0.49 / 0.79 & 0.93 & 0.52 / 0.80 & 1.00 & 0.57 / 0.79 & 0.93 & 0.51 / 0.80 & 0.99 & 0.30 / 0.63 & 0.96\,{\scriptsize $\pm$\,0.00} & 0.51 / 0.80\,{\scriptsize $\pm$\,0.00} & 0.90\,{\scriptsize $\pm$\,0.00} & 0.46 / 0.76\,{\scriptsize $\pm$\,0.01} & \textbf{0.62}\,{\scriptsize $\pm$\,0.00} & 0.46 / 0.80\,{\scriptsize $\pm$\,0.00} \\
    temperature\_rain & 0.83 & 0.35 / 0.79 & 0.81 & 0.35 / 0.80 & 0.83 & 0.38 / 0.79 & 0.81 & 0.35 / 0.80 & 0.87 & 0.28 / 0.71 & 0.83\,{\scriptsize $\pm$\,0.00} & 0.36 / 0.79\,{\scriptsize $\pm$\,0.01} & \textbf{0.77}\,{\scriptsize $\pm$\,0.00} & 0.33 / 0.81\,{\scriptsize $\pm$\,0.00} & 0.90\,{\scriptsize $\pm$\,0.03} & 0.45 / 0.79\,{\scriptsize $\pm$\,0.00} \\
    m4\_hourly & 0.72 & 0.24 / 0.72 & 0.67 & 0.33 / 0.80 & 0.71 & 0.39 / 0.76 & 0.67 & 0.29 / 0.79 & 0.70 & 0.17 / 0.58 & 0.72\,{\scriptsize $\pm$\,0.00} & 0.27 / 0.73\,{\scriptsize $\pm$\,0.00} & 0.68\,{\scriptsize $\pm$\,0.03} & 0.21 / 0.67\,{\scriptsize $\pm$\,0.01} & \textbf{0.59}\,{\scriptsize $\pm$\,0.00} & 0.38 / 0.79\,{\scriptsize $\pm$\,0.00} \\
    m4\_daily & 0.06 & 0.03 / 0.81 & 0.06 & 0.03 / 0.80 & 0.06 & 0.04 / 0.79 & 0.06 & 0.03 / 0.80 & 0.06 & 0.02 / 0.67 & 0.06\,{\scriptsize $\pm$\,0.01} & 0.03 / 0.79\,{\scriptsize $\pm$\,0.02} & 0.06\,{\scriptsize $\pm$\,0.00} & 0.03 / 0.78\,{\scriptsize $\pm$\,0.01} & \textbf{0.04}\,{\scriptsize $\pm$\,0.00} & 0.03 / 0.80\,{\scriptsize $\pm$\,0.00} \\
    m4\_weekly & \textbf{0.15} & 0.07 / 0.79 & \textbf{0.15} & 0.07 / 0.79 & \textbf{0.15} & 0.06 / 0.78 & \textbf{0.15} & 0.07 / 0.80 & 0.16 & 0.04 / 0.67 & \textbf{0.15}\,{\scriptsize $\pm$\,0.00} & 0.07 / 0.79\,{\scriptsize $\pm$\,0.01} & 0.16\,{\scriptsize $\pm$\,0.01} & 0.05 / 0.63\,{\scriptsize $\pm$\,0.07} & \textbf{0.15}\,{\scriptsize $\pm$\,0.00} & 0.09 / 0.77\,{\scriptsize $\pm$\,0.00} \\
    solar\_10T & 0.50 & 0.13 / 0.80 & 0.49 & 0.15 / 0.80 & 0.47 & 0.14 / 0.79 & 0.49 & 0.16 / 0.81 & 0.36 & 0.09 / 0.64 & 0.50\,{\scriptsize $\pm$\,0.02} & 0.15 / 0.80\,{\scriptsize $\pm$\,0.02} & 0.38\,{\scriptsize $\pm$\,0.02} & 0.11 / 0.80\,{\scriptsize $\pm$\,0.02} & \textbf{0.36}\,{\scriptsize $\pm$\,0.00} & 0.17 / 0.80\,{\scriptsize $\pm$\,0.00} \\
    LOOP\_SEATTLE\_H & 1.59 & 0.83 / 0.80 & 1.56 & 0.86 / 0.80 & 1.64 & 0.95 / 0.79 & 1.56 & 0.85 / 0.80 & 1.51 & 0.72 / 0.75 & 1.60\,{\scriptsize $\pm$\,0.00} & 0.85 / 0.80\,{\scriptsize $\pm$\,0.00} & 1.46\,{\scriptsize $\pm$\,0.01} & 0.82 / 0.79\,{\scriptsize $\pm$\,0.01} & \textbf{1.22}\,{\scriptsize $\pm$\,0.00} & 0.88 / 0.80\,{\scriptsize $\pm$\,0.00} \\
    m4\_monthly & 0.18 & 0.06 / 0.81 & 0.16 & 0.06 / 0.80 & 0.17 & 0.07 / 0.78 & 0.16 & 0.06 / 0.80 & 0.15 & 0.04 / 0.67 & 0.18\,{\scriptsize $\pm$\,0.01} & 0.06 / 0.78\,{\scriptsize $\pm$\,0.02} & 0.14\,{\scriptsize $\pm$\,0.00} & 0.05 / 0.79\,{\scriptsize $\pm$\,0.01} & \textbf{0.13}\,{\scriptsize $\pm$\,0.00} & 0.08 / 0.79\,{\scriptsize $\pm$\,0.00} \\
    electricity\_D & 0.24 & 0.08 / 0.77 & 0.20 & 0.10 / 0.80 & 0.24 & 0.11 / 0.78 & 0.21 & 0.09 / 0.80 & 0.20 & 0.05 / 0.59 & 0.26\,{\scriptsize $\pm$\,0.00} & 0.09 / 0.76\,{\scriptsize $\pm$\,0.00} & 0.22\,{\scriptsize $\pm$\,0.01} & 0.07 / 0.76\,{\scriptsize $\pm$\,0.01} & \textbf{0.18}\,{\scriptsize $\pm$\,0.00} & 0.11 / 0.80\,{\scriptsize $\pm$\,0.00} \\
    SZ\_TAXI\_H & 0.84 & 0.44 / 0.77 & \textbf{0.78} & 0.49 / 0.80 & 0.81 & 0.47 / 0.77 & \textbf{0.78} & 0.47 / 0.80 & 0.89 & 0.28 / 0.61 & 0.86\,{\scriptsize $\pm$\,0.06} & 0.46 / 0.75\,{\scriptsize $\pm$\,0.05} & 0.79\,{\scriptsize $\pm$\,0.00} & 0.42 / 0.75\,{\scriptsize $\pm$\,0.01} & 0.83\,{\scriptsize $\pm$\,0.01} & 0.56 / 0.80\,{\scriptsize $\pm$\,0.00} \\
    m4\_quarterly & 0.12 & 0.07 / 0.90 & \textbf{0.11} & 0.06 / 0.90 & 0.12 & 0.07 / 0.88 & \textbf{0.11} & 0.06 / 0.90 & 0.13 & 0.03 / 0.62 & 0.17\,{\scriptsize $\pm$\,0.02} & 0.16 / 0.97\,{\scriptsize $\pm$\,0.01} & 0.10\,{\scriptsize $\pm$\,0.01} & 0.03 / 0.51\,{\scriptsize $\pm$\,0.07} & 0.08\,{\scriptsize $\pm$\,0.00} & 0.03 / 0.68\,{\scriptsize $\pm$\,0.01} \\
    saugeenday\_D & 1.03 & 0.19 / 0.70 & \textbf{0.99} & 0.23 / 0.73 & 1.02 & 0.19 / 0.70 & 1.01 & 0.19 / 0.70 & 0.85 & 0.17 / 0.51 & 1.10\,{\scriptsize $\pm$\,0.01} & 0.17 / 0.63\,{\scriptsize $\pm$\,0.01} & 0.91\,{\scriptsize $\pm$\,0.04} & 0.19 / 0.50\,{\scriptsize $\pm$\,0.05} & 1.06\,{\scriptsize $\pm$\,0.02} & 0.40 / 0.69\,{\scriptsize $\pm$\,0.00} \\
    M\_DENSE\_D & 0.82 & 0.16 / 0.58 & 0.73 & 0.31 / 0.80 & 0.75 & 0.22 / 0.66 & 0.73 & 0.28 / 0.78 & 0.70 & 0.19 / 0.61 & 0.86\,{\scriptsize $\pm$\,0.01} & 0.17 / 0.55\,{\scriptsize $\pm$\,0.01} & 0.72\,{\scriptsize $\pm$\,0.02} & 0.24 / 0.64\,{\scriptsize $\pm$\,0.05} & \textbf{0.49}\,{\scriptsize $\pm$\,0.01} & 0.34 / 0.79\,{\scriptsize $\pm$\,0.00} \\
    us\_births\_D & \textbf{1.17} & 0.53 / 0.76 & 1.18 & 0.53 / 0.77 & 1.18 & 0.52 / 0.77 & \textbf{1.17} & 0.54 / 0.77 & 1.37 & 0.38 / 0.58 & 1.20\,{\scriptsize $\pm$\,0.00} & 0.46 / 0.71\,{\scriptsize $\pm$\,0.00} & 1.33\,{\scriptsize $\pm$\,0.03} & 0.36 / 0.55\,{\scriptsize $\pm$\,0.02} & 1.61\,{\scriptsize $\pm$\,0.05} & 0.85 / 0.70\,{\scriptsize $\pm$\,0.01} \\
    saugeenday\_W & 2.87 & 0.60 / 0.76 & 2.87 & 0.91 / 0.76 & \textbf{2.79} & 0.82 / 0.76 & 2.89 & 0.73 / 0.76 & 4.90 & 0.75 / 0.32 & 2.79\,{\scriptsize $\pm$\,0.00} & 0.32 / 0.63\,{\scriptsize $\pm$\,0.00} & 4.40\,{\scriptsize $\pm$\,0.10} & 0.63 / 0.34\,{\scriptsize $\pm$\,0.01} & 3.22\,{\scriptsize $\pm$\,0.10} & 0.75 / 0.54\,{\scriptsize $\pm$\,0.02} \\
    us\_births\_W & 2.60 & 0.98 / 0.59 & 2.59 & 1.18 / 0.66 & 2.64 & 1.10 / 0.66 & 2.58 & 1.04 / 0.62 & 6.24 & 0.28 / 0.10 & 2.83\,{\scriptsize $\pm$\,0.00} & 0.72 / 0.52\,{\scriptsize $\pm$\,0.00} & 3.85\,{\scriptsize $\pm$\,0.12} & 0.72 / 0.41\,{\scriptsize $\pm$\,0.00} & 3.07\,{\scriptsize $\pm$\,0.06} & 1.27 / 0.68\,{\scriptsize $\pm$\,0.01} \\
    saugeenday\_M & 4.04 & 2.12 / 0.82 & 3.93 & 2.38 / 0.82 & 4.12 & 2.58 / 0.82 & 3.89 & 2.34 / 0.82 & 5.40 & 0.59 / 0.14 & \textbf{3.33}\,{\scriptsize $\pm$\,0.00} & 1.41 / 0.82\,{\scriptsize $\pm$\,0.00} & 6.13\,{\scriptsize $\pm$\,0.23} & 0.52 / 0.23\,{\scriptsize $\pm$\,0.00} & 3.37\,{\scriptsize $\pm$\,0.08} & 0.74 / 0.41\,{\scriptsize $\pm$\,0.04} \\
    \addlinespace[2pt]
    \cmidrule(lr){1-17}
    \textit{Average} & 1.08 & 0.46 / 0.77 & 1.04 & 0.51 / 0.79 & 1.07 & 0.53 / 0.77 & 1.04 & 0.48 / 0.79 & 1.37 & 0.29 / 0.58 & 1.06 & 0.41 / 0.75 & 1.25 & 0.35 / 0.67 & \textbf{1.00} & 0.47 / 0.75 \\
    \midrule
    \multicolumn{17}{c}{\textit{TabPFN-TS backbone}} \\
    \addlinespace[2pt]
    electricity\_H & 0.32 & 0.17 / 0.80 & 0.27 & 0.15 / 0.80 & 0.32 & 0.20 / 0.79 & 0.27 & 0.14 / 0.80 & 0.28 & 0.10 / 0.68 & 0.30\,{\scriptsize $\pm$\,0.01} & 0.16 / 0.77\,{\scriptsize $\pm$\,0.00} & 0.30\,{\scriptsize $\pm$\,0.01} & 0.12 / 0.74\,{\scriptsize $\pm$\,0.04} & \textbf{0.23}\,{\scriptsize $\pm$\,0.00} & 0.15 / 0.80\,{\scriptsize $\pm$\,0.00} \\
    electricity\_15T & 0.35 & 0.19 / 0.81 & 0.32 & 0.17 / 0.80 & 0.35 & 0.20 / 0.79 & 0.32 & 0.16 / 0.80 & 0.33 & 0.12 / 0.69 & 0.34\,{\scriptsize $\pm$\,0.00} & 0.17 / 0.79\,{\scriptsize $\pm$\,0.00} & 0.34\,{\scriptsize $\pm$\,0.00} & 0.14 / 0.74\,{\scriptsize $\pm$\,0.04} & \textbf{0.25}\,{\scriptsize $\pm$\,0.00} & 0.17 / 0.80\,{\scriptsize $\pm$\,0.00} \\
    solar\_H & 1.17 & 0.35 / 0.80 & 1.16 & 0.39 / 0.80 & 1.12 & 0.39 / 0.79 & 1.16 & 0.39 / 0.80 & 0.68 & 0.25 / 0.74 & 1.13\,{\scriptsize $\pm$\,0.04} & 0.24 / 0.76\,{\scriptsize $\pm$\,0.05} & 0.66\,{\scriptsize $\pm$\,0.01} & 0.32 / 0.75\,{\scriptsize $\pm$\,0.02} & \textbf{0.52}\,{\scriptsize $\pm$\,0.00} & 0.33 / 0.79\,{\scriptsize $\pm$\,0.00} \\
    LOOP\_SEATTLE\_5T & 2.27 & 1.43 / 0.80 & 2.24 & 1.45 / 0.80 & 2.24 & 1.48 / 0.79 & 2.25 & 1.45 / 0.80 & 2.23 & 1.27 / 0.76 & 2.31\,{\scriptsize $\pm$\,0.00} & 1.46 / 0.80\,{\scriptsize $\pm$\,0.00} & 2.26\,{\scriptsize $\pm$\,0.01} & 1.51 / 0.80\,{\scriptsize $\pm$\,0.00} & \textbf{2.16}\,{\scriptsize $\pm$\,0.01} & 1.49 / 0.80\,{\scriptsize $\pm$\,0.00} \\
    SZ\_TAXI\_15T & 1.00 & 0.52 / 0.79 & 0.86 & 0.51 / 0.81 & 1.01 & 0.64 / 0.79 & \textbf{0.85} & 0.50 / 0.81 & 0.92 & 0.36 / 0.71 & 1.01\,{\scriptsize $\pm$\,0.02} & 0.52 / 0.77\,{\scriptsize $\pm$\,0.01} & 0.86\,{\scriptsize $\pm$\,0.00} & 0.46 / 0.74\,{\scriptsize $\pm$\,0.01} & 0.95\,{\scriptsize $\pm$\,0.00} & 0.65 / 0.80\,{\scriptsize $\pm$\,0.00} \\
    M\_DENSE\_H & 1.08 & 0.52 / 0.79 & 1.04 & 0.54 / 0.80 & 1.11 & 0.60 / 0.79 & 1.04 & 0.53 / 0.80 & 1.06 & 0.36 / 0.66 & 1.08\,{\scriptsize $\pm$\,0.00} & 0.54 / 0.80\,{\scriptsize $\pm$\,0.00} & 0.97\,{\scriptsize $\pm$\,0.00} & 0.53 / 0.75\,{\scriptsize $\pm$\,0.01} & \textbf{0.67}\,{\scriptsize $\pm$\,0.00} & 0.48 / 0.80\,{\scriptsize $\pm$\,0.00} \\
    temperature\_rain & 0.86 & 0.38 / 0.79 & 0.84 & 0.39 / 0.80 & 0.87 & 0.41 / 0.79 & 0.84 & 0.38 / 0.80 & 0.82 & 0.30 / 0.74 & 0.86\,{\scriptsize $\pm$\,0.00} & 0.38 / 0.78\,{\scriptsize $\pm$\,0.01} & \textbf{0.77}\,{\scriptsize $\pm$\,0.00} & 0.33 / 0.81\,{\scriptsize $\pm$\,0.00} & \textbf{0.77}\,{\scriptsize $\pm$\,0.03} & 0.33 / 0.79\,{\scriptsize $\pm$\,0.00} \\
    m4\_hourly & 0.82 & 0.26 / 0.72 & 0.76 & 0.36 / 0.80 & 0.81 & 0.45 / 0.76 & 0.77 & 0.31 / 0.79 & 0.74 & 0.21 / 0.63 & 0.82\,{\scriptsize $\pm$\,0.00} & 0.29 / 0.74\,{\scriptsize $\pm$\,0.00} & 0.80\,{\scriptsize $\pm$\,0.04} & 0.22 / 0.65\,{\scriptsize $\pm$\,0.04} & \textbf{0.64}\,{\scriptsize $\pm$\,0.00} & 0.42 / 0.79\,{\scriptsize $\pm$\,0.00} \\
    m4\_daily & 0.06 & 0.03 / 0.80 & 0.06 & 0.03 / 0.80 & 0.07 & 0.04 / 0.79 & 0.06 & 0.03 / 0.80 & 0.06 & 0.02 / 0.71 & 0.06\,{\scriptsize $\pm$\,0.00} & 0.03 / 0.80\,{\scriptsize $\pm$\,0.06} & 0.06\,{\scriptsize $\pm$\,0.00} & 0.03 / 0.75\,{\scriptsize $\pm$\,0.04} & \textbf{0.04}\,{\scriptsize $\pm$\,0.00} & 0.03 / 0.80\,{\scriptsize $\pm$\,0.00} \\
    m4\_weekly & 0.16 & 0.06 / 0.79 & 0.16 & 0.06 / 0.79 & 0.16 & 0.06 / 0.76 & 0.16 & 0.06 / 0.80 & 0.16 & 0.05 / 0.68 & 0.17\,{\scriptsize $\pm$\,0.02} & 0.08 / 0.79\,{\scriptsize $\pm$\,0.03} & 0.20\,{\scriptsize $\pm$\,0.01} & 0.04 / 0.48\,{\scriptsize $\pm$\,0.11} & \textbf{0.15}\,{\scriptsize $\pm$\,0.00} & 0.08 / 0.77\,{\scriptsize $\pm$\,0.00} \\
    solar\_10T & 0.87 & 0.15 / 0.80 & 0.85 & 0.14 / 0.80 & 0.84 & 0.14 / 0.80 & 0.85 & 0.19 / 0.83 & \textbf{0.36} & 0.15 / 0.71 & 0.93\,{\scriptsize $\pm$\,0.10} & 0.44 / 0.82\,{\scriptsize $\pm$\,0.09} & 0.40\,{\scriptsize $\pm$\,0.01} & 0.15 / 0.76\,{\scriptsize $\pm$\,0.02} & \textbf{0.36}\,{\scriptsize $\pm$\,0.00} & 0.17 / 0.80\,{\scriptsize $\pm$\,0.00} \\
    LOOP\_SEATTLE\_H & 1.68 & 0.87 / 0.80 & 1.64 & 0.92 / 0.80 & 1.70 & 0.99 / 0.79 & 1.64 & 0.90 / 0.80 & 1.57 & 0.77 / 0.75 & 1.72\,{\scriptsize $\pm$\,0.00} & 0.89 / 0.79\,{\scriptsize $\pm$\,0.00} & 1.52\,{\scriptsize $\pm$\,0.01} & 0.90 / 0.78\,{\scriptsize $\pm$\,0.00} & \textbf{1.29}\,{\scriptsize $\pm$\,0.00} & 0.93 / 0.80\,{\scriptsize $\pm$\,0.00} \\
    m4\_monthly & 0.18 & 0.06 / 0.81 & 0.16 & 0.06 / 0.80 & 0.17 & 0.07 / 0.78 & 0.16 & 0.06 / 0.79 & 0.14 & 0.04 / 0.70 & 0.19\,{\scriptsize $\pm$\,0.00} & 0.06 / 0.77\,{\scriptsize $\pm$\,0.01} & 0.14\,{\scriptsize $\pm$\,0.00} & 0.05 / 0.71\,{\scriptsize $\pm$\,0.04} & \textbf{0.13}\,{\scriptsize $\pm$\,0.00} & 0.08 / 0.80\,{\scriptsize $\pm$\,0.00} \\
    electricity\_D & 0.25 & 0.08 / 0.77 & 0.21 & 0.11 / 0.80 & 0.26 & 0.13 / 0.78 & 0.22 & 0.10 / 0.80 & 0.21 & 0.06 / 0.65 & 0.27\,{\scriptsize $\pm$\,0.00} & 0.10 / 0.77\,{\scriptsize $\pm$\,0.00} & 0.24\,{\scriptsize $\pm$\,0.01} & 0.07 / 0.71\,{\scriptsize $\pm$\,0.04} & \textbf{0.18}\,{\scriptsize $\pm$\,0.00} & 0.11 / 0.80\,{\scriptsize $\pm$\,0.00} \\
    SZ\_TAXI\_H & 0.86 & 0.45 / 0.77 & \textbf{0.80} & 0.50 / 0.80 & 0.84 & 0.49 / 0.77 & \textbf{0.80} & 0.49 / 0.80 & 0.85 & 0.35 / 0.68 & 0.85\,{\scriptsize $\pm$\,0.00} & 0.45 / 0.75\,{\scriptsize $\pm$\,0.01} & 0.82\,{\scriptsize $\pm$\,0.00} & 0.45 / 0.74\,{\scriptsize $\pm$\,0.01} & 0.84\,{\scriptsize $\pm$\,0.00} & 0.56 / 0.79\,{\scriptsize $\pm$\,0.00} \\
    m4\_quarterly & 0.14 & 0.11 / 0.90 & 0.14 & 0.09 / 0.90 & 0.14 & 0.08 / 0.90 & 0.14 & 0.10 / 0.90 & 0.15 & 0.04 / 0.62 & 0.15\,{\scriptsize $\pm$\,0.00} & 0.13 / 0.93\,{\scriptsize $\pm$\,0.01} & 0.15\,{\scriptsize $\pm$\,0.02} & 0.02 / 0.26\,{\scriptsize $\pm$\,0.18} & \textbf{0.08}\,{\scriptsize $\pm$\,0.00} & 0.04 / 0.70\,{\scriptsize $\pm$\,0.01} \\
    saugeenday\_D & 1.21 & 0.21 / 0.66 & \textbf{1.15} & 0.25 / 0.72 & 1.20 & 0.22 / 0.68 & 1.19 & 0.21 / 0.66 & 1.00 & 0.17 / 0.50 & 1.27\,{\scriptsize $\pm$\,0.01} & 0.20 / 0.62\,{\scriptsize $\pm$\,0.05} & 1.23\,{\scriptsize $\pm$\,0.06} & 0.13 / 0.28\,{\scriptsize $\pm$\,0.14} & 1.23\,{\scriptsize $\pm$\,0.01} & 0.41 / 0.64\,{\scriptsize $\pm$\,0.01} \\
    M\_DENSE\_D & 0.90 & 0.15 / 0.56 & 0.82 & 0.33 / 0.80 & 0.84 & 0.23 / 0.65 & 0.83 & 0.28 / 0.76 & 0.75 & 0.21 / 0.61 & 0.93\,{\scriptsize $\pm$\,0.00} & 0.18 / 0.56\,{\scriptsize $\pm$\,0.00} & 0.77\,{\scriptsize $\pm$\,0.02} & 0.20 / 0.47\,{\scriptsize $\pm$\,0.06} & \textbf{0.50}\,{\scriptsize $\pm$\,0.01} & 0.35 / 0.79\,{\scriptsize $\pm$\,0.00} \\
    us\_births\_D & 1.30 & 0.54 / 0.74 & \textbf{1.29} & 0.55 / 0.75 & 1.30 & 0.54 / 0.75 & \textbf{1.29} & 0.55 / 0.75 & 1.42 & 0.37 / 0.50 & 1.48\,{\scriptsize $\pm$\,0.00} & 0.44 / 0.58\,{\scriptsize $\pm$\,0.00} & 1.39\,{\scriptsize $\pm$\,0.04} & 0.37 / 0.49\,{\scriptsize $\pm$\,0.01} & 1.79\,{\scriptsize $\pm$\,0.03} & 0.69 / 0.58\,{\scriptsize $\pm$\,0.01} \\
    saugeenday\_W & 3.16 & 0.72 / 0.78 & 3.23 & 0.86 / 0.78 & \textbf{3.08} & 0.85 / 0.78 & 3.17 & 0.80 / 0.78 & 4.17 & 0.50 / 0.34 & 3.13\,{\scriptsize $\pm$\,0.03} & 0.56 / 0.77\,{\scriptsize $\pm$\,0.05} & 5.02\,{\scriptsize $\pm$\,0.04} & 0.36 / 0.20\,{\scriptsize $\pm$\,0.03} & 4.05\,{\scriptsize $\pm$\,0.03} & 0.62 / 0.50\,{\scriptsize $\pm$\,0.01} \\
    us\_births\_W & 2.86 & 1.05 / 0.62 & 2.75 & 1.19 / 0.66 & 2.77 & 1.17 / 0.66 & 2.84 & 1.11 / 0.66 & 5.01 & 0.61 / 0.34 & 3.21\,{\scriptsize $\pm$\,0.00} & 0.79 / 0.52\,{\scriptsize $\pm$\,0.00} & 4.60\,{\scriptsize $\pm$\,0.22} & 0.51 / 0.29\,{\scriptsize $\pm$\,0.03} & 3.38\,{\scriptsize $\pm$\,0.10} & 1.45 / 0.49\,{\scriptsize $\pm$\,0.03} \\
    saugeenday\_M & 3.37 & 2.16 / 0.82 & 3.20 & 2.32 / 0.82 & 3.23 & 2.35 / 0.82 & 3.19 & 2.31 / 0.82 & 5.87 & 0.88 / 0.50 & \textbf{2.99}\,{\scriptsize $\pm$\,0.08} & 2.19 / 0.87\,{\scriptsize $\pm$\,0.05} & 6.06\,{\scriptsize $\pm$\,0.42} & 0.35 / 0.18\,{\scriptsize $\pm$\,0.02} & 4.69\,{\scriptsize $\pm$\,0.60} & 0.47 / 0.25\,{\scriptsize $\pm$\,0.05} \\
    \addlinespace[2pt]
    \cmidrule(lr){1-17}
    \textit{Average} & 1.13 & 0.48 / 0.77 & \textbf{1.09} & 0.52 / 0.79 & 1.11 & 0.53 / 0.77 & \textbf{1.09} & 0.50 / 0.79 & 1.31 & 0.33 / 0.63 & 1.15 & 0.47 / 0.75 & 1.34 & 0.33 / 0.60 & 1.13 & 0.45 / 0.72 \\
    \midrule
    \multicolumn{17}{c}{\textit{ARIMA backbone}} \\
    \addlinespace[2pt]
    electricity\_H & 0.56 & 0.34 / 0.80 & 0.46 & 0.30 / 0.80 & 0.57 & 0.38 / 0.79 & 0.45 & 0.27 / 0.80 & 0.45 & 0.18 / 0.70 & 0.48\,{\scriptsize $\pm$\,0.01} & 0.28 / 0.78\,{\scriptsize $\pm$\,0.00} & 0.51\,{\scriptsize $\pm$\,0.02} & 0.19 / 0.70\,{\scriptsize $\pm$\,0.02} & \textbf{0.23}\,{\scriptsize $\pm$\,0.00} & 0.16 / 0.80\,{\scriptsize $\pm$\,0.00} \\
    electricity\_15T & 0.36 & 0.20 / 0.81 & 0.33 & 0.18 / 0.80 & 0.37 & 0.22 / 0.79 & 0.32 & 0.17 / 0.81 & 0.33 & 0.13 / 0.70 & 0.34\,{\scriptsize $\pm$\,0.01} & 0.18 / 0.79\,{\scriptsize $\pm$\,0.00} & 0.36\,{\scriptsize $\pm$\,0.01} & 0.14 / 0.74\,{\scriptsize $\pm$\,0.03} & \textbf{0.26}\,{\scriptsize $\pm$\,0.00} & 0.17 / 0.80\,{\scriptsize $\pm$\,0.00} \\
    solar\_H & 1.25 & 0.58 / 0.79 & 1.23 & 0.63 / 0.80 & 1.27 & 0.64 / 0.79 & 1.23 & 0.61 / 0.80 & 1.01 & 0.38 / 0.74 & 1.22\,{\scriptsize $\pm$\,0.02} & 0.65 / 0.79\,{\scriptsize $\pm$\,0.00} & 0.87\,{\scriptsize $\pm$\,0.01} & 0.45 / 0.79\,{\scriptsize $\pm$\,0.01} & \textbf{0.62}\,{\scriptsize $\pm$\,0.00} & 0.40 / 0.80\,{\scriptsize $\pm$\,0.00} \\
    LOOP\_SEATTLE\_5T & 2.29 & 1.46 / 0.80 & 2.25 & 1.48 / 0.80 & 2.25 & 1.48 / 0.79 & 2.27 & 1.48 / 0.80 & 2.23 & 1.29 / 0.75 & 2.34\,{\scriptsize $\pm$\,0.01} & 1.47 / 0.79\,{\scriptsize $\pm$\,0.00} & 2.30\,{\scriptsize $\pm$\,0.01} & 1.54 / 0.80\,{\scriptsize $\pm$\,0.00} & \textbf{2.18}\,{\scriptsize $\pm$\,0.01} & 1.52 / 0.80\,{\scriptsize $\pm$\,0.00} \\
    SZ\_TAXI\_15T & 1.02 & 0.53 / 0.78 & 0.87 & 0.53 / 0.80 & 1.00 & 0.65 / 0.79 & \textbf{0.86} & 0.52 / 0.81 & 0.94 & 0.38 / 0.72 & 1.02\,{\scriptsize $\pm$\,0.00} & 0.54 / 0.75\,{\scriptsize $\pm$\,0.00} & 0.88\,{\scriptsize $\pm$\,0.00} & 0.48 / 0.73\,{\scriptsize $\pm$\,0.02} & 0.94\,{\scriptsize $\pm$\,0.01} & 0.66 / 0.80\,{\scriptsize $\pm$\,0.00} \\
    M\_DENSE\_H & 1.34 & 0.72 / 0.79 & 1.26 & 0.73 / 0.80 & 1.38 & 0.84 / 0.80 & 1.27 & 0.72 / 0.81 & 1.25 & 0.53 / 0.69 & 1.31\,{\scriptsize $\pm$\,0.01} & 0.72 / 0.78\,{\scriptsize $\pm$\,0.00} & 1.19\,{\scriptsize $\pm$\,0.01} & 0.68 / 0.78\,{\scriptsize $\pm$\,0.00} & \textbf{0.68}\,{\scriptsize $\pm$\,0.00} & 0.49 / 0.80\,{\scriptsize $\pm$\,0.00} \\
    temperature\_rain & 1.01 & 0.36 / 0.79 & 0.95 & 0.46 / 0.80 & 0.99 & 0.37 / 0.78 & 0.94 & 0.38 / 0.80 & 1.03 & 0.29 / 0.73 & 1.03\,{\scriptsize $\pm$\,0.00} & 0.35 / 0.75\,{\scriptsize $\pm$\,0.01} & \textbf{0.93}\,{\scriptsize $\pm$\,0.01} & 0.34 / 0.78\,{\scriptsize $\pm$\,0.00} & 1.07\,{\scriptsize $\pm$\,0.05} & 0.48 / 0.79\,{\scriptsize $\pm$\,0.00} \\
    m4\_hourly & 0.95 & 0.33 / 0.73 & 0.85 & 0.43 / 0.80 & 0.93 & 0.53 / 0.76 & 0.88 & 0.40 / 0.79 & 0.88 & 0.26 / 0.61 & 0.98\,{\scriptsize $\pm$\,0.00} & 0.39 / 0.75\,{\scriptsize $\pm$\,0.00} & 0.99\,{\scriptsize $\pm$\,0.01} & 0.27 / 0.65\,{\scriptsize $\pm$\,0.02} & \textbf{0.67}\,{\scriptsize $\pm$\,0.01} & 0.44 / 0.80\,{\scriptsize $\pm$\,0.00} \\
    m4\_daily & 0.06 & 0.03 / 0.81 & 0.06 & 0.03 / 0.80 & 0.06 & 0.04 / 0.79 & 0.06 & 0.03 / 0.80 & 0.05 & 0.02 / 0.72 & 0.06\,{\scriptsize $\pm$\,0.01} & 0.03 / 0.76\,{\scriptsize $\pm$\,0.06} & 0.06\,{\scriptsize $\pm$\,0.00} & 0.03 / 0.75\,{\scriptsize $\pm$\,0.03} & \textbf{0.04}\,{\scriptsize $\pm$\,0.00} & 0.03 / 0.80\,{\scriptsize $\pm$\,0.00} \\
    m4\_weekly & 0.17 & 0.06 / 0.77 & 0.17 & 0.07 / 0.79 & 0.17 & 0.07 / 0.78 & 0.17 & 0.07 / 0.79 & 0.18 & 0.05 / 0.64 & 0.17\,{\scriptsize $\pm$\,0.01} & 0.08 / 0.79\,{\scriptsize $\pm$\,0.02} & 0.20\,{\scriptsize $\pm$\,0.01} & 0.05 / 0.48\,{\scriptsize $\pm$\,0.10} & \textbf{0.15}\,{\scriptsize $\pm$\,0.00} & 0.09 / 0.78\,{\scriptsize $\pm$\,0.00} \\
    solar\_10T & 0.52 & 0.13 / 0.79 & 0.50 & 0.15 / 0.80 & 0.43 & 0.14 / 0.79 & 0.51 & 0.17 / 0.81 & \textbf{0.36} & 0.11 / 0.72 & 0.55\,{\scriptsize $\pm$\,0.00} & 0.16 / 0.78\,{\scriptsize $\pm$\,0.02} & 0.38\,{\scriptsize $\pm$\,0.01} & 0.12 / 0.73\,{\scriptsize $\pm$\,0.02} & 0.38\,{\scriptsize $\pm$\,0.00} & 0.15 / 0.80\,{\scriptsize $\pm$\,0.00} \\
    LOOP\_SEATTLE\_H & 1.95 & 0.98 / 0.79 & 1.90 & 1.00 / 0.80 & 1.94 & 1.09 / 0.79 & 1.90 & 1.00 / 0.80 & 1.74 & 0.88 / 0.75 & 1.98\,{\scriptsize $\pm$\,0.00} & 0.97 / 0.78\,{\scriptsize $\pm$\,0.00} & 1.77\,{\scriptsize $\pm$\,0.01} & 1.07 / 0.80\,{\scriptsize $\pm$\,0.00} & \textbf{1.33}\,{\scriptsize $\pm$\,0.00} & 0.94 / 0.80\,{\scriptsize $\pm$\,0.00} \\
    m4\_monthly & 0.21 & 0.06 / 0.81 & 0.19 & 0.07 / 0.80 & 0.20 & 0.08 / 0.78 & 0.19 & 0.07 / 0.79 & 0.17 & 0.05 / 0.70 & 0.22\,{\scriptsize $\pm$\,0.02} & 0.07 / 0.76\,{\scriptsize $\pm$\,0.02} & 0.17\,{\scriptsize $\pm$\,0.00} & 0.06 / 0.67\,{\scriptsize $\pm$\,0.04} & \textbf{0.14}\,{\scriptsize $\pm$\,0.00} & 0.09 / 0.80\,{\scriptsize $\pm$\,0.00} \\
    electricity\_D & 0.26 & 0.09 / 0.77 & 0.22 & 0.11 / 0.80 & 0.27 & 0.14 / 0.78 & 0.23 & 0.10 / 0.80 & 0.21 & 0.07 / 0.66 & 0.28\,{\scriptsize $\pm$\,0.00} & 0.10 / 0.76\,{\scriptsize $\pm$\,0.00} & 0.26\,{\scriptsize $\pm$\,0.00} & 0.07 / 0.72\,{\scriptsize $\pm$\,0.02} & \textbf{0.20}\,{\scriptsize $\pm$\,0.00} & 0.12 / 0.80\,{\scriptsize $\pm$\,0.00} \\
    SZ\_TAXI\_H & 0.88 & 0.46 / 0.77 & \textbf{0.81} & 0.52 / 0.80 & 0.85 & 0.51 / 0.77 & 0.82 & 0.50 / 0.80 & 0.85 & 0.37 / 0.68 & 0.89\,{\scriptsize $\pm$\,0.01} & 0.52 / 0.78\,{\scriptsize $\pm$\,0.00} & 0.85\,{\scriptsize $\pm$\,0.00} & 0.47 / 0.73\,{\scriptsize $\pm$\,0.01} & \textbf{0.81}\,{\scriptsize $\pm$\,0.00} & 0.54 / 0.80\,{\scriptsize $\pm$\,0.00} \\
    m4\_quarterly & 0.15 & 0.09 / 0.90 & \textbf{0.14} & 0.07 / 0.90 & \textbf{0.14} & 0.06 / 0.88 & \textbf{0.14} & 0.07 / 0.90 & 0.18 & 0.03 / 0.49 & 0.21\,{\scriptsize $\pm$\,0.02} & 0.20 / 0.97\,{\scriptsize $\pm$\,0.01} & 0.16\,{\scriptsize $\pm$\,0.02} & 0.02 / 0.21\,{\scriptsize $\pm$\,0.17} & 0.09\,{\scriptsize $\pm$\,0.00} & 0.04 / 0.61\,{\scriptsize $\pm$\,0.01} \\
    saugeenday\_D & 1.17 & 0.26 / 0.77 & \textbf{1.12} & 0.25 / 0.74 & \textbf{1.12} & 0.23 / 0.72 & 1.14 & 0.23 / 0.72 & 1.26 & 0.15 / 0.42 & 1.21\,{\scriptsize $\pm$\,0.05} & 0.41 / 0.76\,{\scriptsize $\pm$\,0.03} & 1.38\,{\scriptsize $\pm$\,0.13} & 0.17 / 0.27\,{\scriptsize $\pm$\,0.16} & 1.15\,{\scriptsize $\pm$\,0.03} & 0.43 / 0.63\,{\scriptsize $\pm$\,0.01} \\
    M\_DENSE\_D & 1.41 & 0.34 / 0.54 & 1.08 & 0.72 / 0.81 & 1.19 & 0.47 / 0.64 & 1.10 & 0.65 / 0.76 & 1.04 & 0.41 / 0.69 & 1.39\,{\scriptsize $\pm$\,0.04} & 0.48 / 0.58\,{\scriptsize $\pm$\,0.02} & 0.97\,{\scriptsize $\pm$\,0.03} & 0.44 / 0.59\,{\scriptsize $\pm$\,0.05} & \textbf{0.58}\,{\scriptsize $\pm$\,0.01} & 0.40 / 0.80\,{\scriptsize $\pm$\,0.00} \\
    us\_births\_D & 2.92 & 2.01 / 0.79 & 2.93 & 1.99 / 0.79 & 2.93 & 2.00 / 0.78 & 2.91 & 2.05 / 0.79 & 2.93 & 1.07 / 0.49 & 3.33\,{\scriptsize $\pm$\,0.04} & 1.72 / 0.68\,{\scriptsize $\pm$\,0.01} & 3.61\,{\scriptsize $\pm$\,0.05} & 1.53 / 0.58\,{\scriptsize $\pm$\,0.01} & \textbf{2.16}\,{\scriptsize $\pm$\,0.07} & 1.38 / 0.72\,{\scriptsize $\pm$\,0.01} \\
    saugeenday\_W & 3.56 & 1.02 / 0.78 & 3.57 & 1.13 / 0.83 & \textbf{3.53} & 1.05 / 0.80 & 3.56 & 1.11 / 0.80 & 5.10 & 0.39 / 0.29 & 3.70\,{\scriptsize $\pm$\,0.00} & 0.35 / 0.51\,{\scriptsize $\pm$\,0.00} & 4.40\,{\scriptsize $\pm$\,0.03} & 0.58 / 0.16\,{\scriptsize $\pm$\,0.01} & 3.79\,{\scriptsize $\pm$\,0.16} & 0.74 / 0.45\,{\scriptsize $\pm$\,0.02} \\
    us\_births\_W & 3.01 & 1.09 / 0.62 & 2.88 & 1.22 / 0.62 & 2.86 & 1.20 / 0.62 & 2.89 & 1.15 / 0.62 & 6.28 & 0.51 / 0.10 & 3.44\,{\scriptsize $\pm$\,0.00} & 0.72 / 0.48\,{\scriptsize $\pm$\,0.00} & 5.46\,{\scriptsize $\pm$\,0.09} & 0.61 / 0.32\,{\scriptsize $\pm$\,0.02} & 3.24\,{\scriptsize $\pm$\,0.08} & 1.26 / 0.52\,{\scriptsize $\pm$\,0.02} \\
    saugeenday\_M & 3.41 & 1.24 / 0.82 & 3.41 & 1.25 / 0.82 & 3.57 & 1.41 / 0.82 & 3.42 & 1.26 / 0.82 & 5.82 & 1.00 / 0.32 & \textbf{3.39}\,{\scriptsize $\pm$\,0.22} & 1.15 / 0.73\,{\scriptsize $\pm$\,0.01} & 5.92\,{\scriptsize $\pm$\,0.06} & 0.80 / 0.34\,{\scriptsize $\pm$\,0.03} & 3.49\,{\scriptsize $\pm$\,0.50} & 0.59 / 0.30\,{\scriptsize $\pm$\,0.02} \\
    \addlinespace[2pt]
    \cmidrule(lr){1-17}
    \textit{Average} & 1.29 & 0.56 / 0.77 & 1.24 & 0.61 / 0.80 & 1.27 & 0.62 / 0.77 & 1.24 & 0.59 / 0.79 & 1.56 & 0.39 / 0.61 & 1.34 & 0.52 / 0.74 & 1.53 & 0.46 / 0.61 & \textbf{1.10} & 0.51 / 0.73 \\
    \bottomrule
  \end{tabular}%
  }
\end{table}

\clearpage

\subsubsection{Full conformal prediction baseline results - Bench10\_100k}

\begin{table}[h]
  \caption{CP methods at 80\% target coverage on Gift-Eval Bench10 with the 100k test set (10 datasets, 2 backbones: TiRex and ARIMA). Cells show nWink (mean Winkler $/$ std$(y)$) and nW (mean width $/$ std$(y)$) $/$ Cov, where std$(y)$ is the standard deviation of the target on the eval split. ResCP and HopCPT report mean$\,\pm\,$std over ten seeds, RareCP over three seeds, other methods are deterministic. \textbf{Bold} marks the smallest displayed nWink per row among methods with $\mathrm{Cov} \ge 0.70$. Each backbone block ends with an \textit{Average} row over its 10 datasets.}
  \label{tab:cp_methods_bench10_100k}
  \centering
  \resizebox{\textwidth}{!}{%
  \begin{tabular}{lcccccccccccccccc}
    \multicolumn{17}{c}{{\large\bfseries\itshape Bench10\_100k}} \\[2pt]
    \toprule
    & \multicolumn{2}{c}{Uniform} & \multicolumn{2}{c}{ACI} & \multicolumn{2}{c}{DtACI} & \multicolumn{2}{c}{NexCP} & \multicolumn{2}{c}{KOWCPI} & \multicolumn{2}{c}{HopCPT} & \multicolumn{2}{c}{ResCP} & \multicolumn{2}{c}{RareCP\,\textit{(Ours)}} \\
    \cmidrule(lr){2-3} \cmidrule(lr){4-5} \cmidrule(lr){6-7} \cmidrule(lr){8-9} \cmidrule(lr){10-11} \cmidrule(lr){12-13} \cmidrule(lr){14-15} \cmidrule(lr){16-17}
    \textbf{Dataset} & nWink $\downarrow$ & nW $\downarrow$ / Cov $\uparrow$ & nWink $\downarrow$ & nW $\downarrow$ / Cov $\uparrow$ & nWink $\downarrow$ & nW $\downarrow$ / Cov $\uparrow$ & nWink $\downarrow$ & nW $\downarrow$ / Cov $\uparrow$ & nWink $\downarrow$ & nW $\downarrow$ / Cov $\uparrow$ & nWink $\downarrow$ & nW $\downarrow$ / Cov $\uparrow$ & nWink $\downarrow$ & nW $\downarrow$ / Cov $\uparrow$ & nWink $\downarrow$ & nW $\downarrow$ / Cov $\uparrow$ \\
    \midrule
    \multicolumn{17}{c}{\textit{TiRex backbone}} \\
    \addlinespace[2pt]
    electricity\_H & 0.25 & 0.11 / 0.77 & 0.21 & 0.12 / 0.80 & 0.26 & 0.17 / 0.79 & 0.21 & 0.11 / 0.80 & 0.26 & 0.06 / 0.66 & 0.24\,{\scriptsize $\pm$\,0.00} & 0.11 / 0.77\,{\scriptsize $\pm$\,0.00} & 0.25\,{\scriptsize $\pm$\,0.01} & 0.07 / 0.74\,{\scriptsize $\pm$\,0.04} & \textbf{0.19}\,{\scriptsize $\pm$\,0.00} & 0.12 / 0.80\,{\scriptsize $\pm$\,0.00} \\
    electricity\_15T & 0.29 & 0.13 / 0.78 & 0.25 & 0.14 / 0.80 & 0.31 & 0.20 / 0.79 & 0.25 & 0.13 / 0.80 & 0.30 & 0.08 / 0.64 & 0.28\,{\scriptsize $\pm$\,0.00} & 0.13 / 0.77\,{\scriptsize $\pm$\,0.00} & 0.29\,{\scriptsize $\pm$\,0.01} & 0.09 / 0.72\,{\scriptsize $\pm$\,0.06} & \textbf{0.22}\,{\scriptsize $\pm$\,0.00} & 0.14 / 0.80\,{\scriptsize $\pm$\,0.00} \\
    solar\_H & 0.71 & 0.23 / 0.80 & 0.70 & 0.25 / 0.80 & 0.68 & 0.28 / 0.79 & 0.70 & 0.25 / 0.80 & 0.70 & 0.13 / 0.61 & 0.54\,{\scriptsize $\pm$\,0.00} & 0.24 / 0.78\,{\scriptsize $\pm$\,0.01} & 0.49\,{\scriptsize $\pm$\,0.00} & 0.27 / 0.84\,{\scriptsize $\pm$\,0.00} & \textbf{0.39}\,{\scriptsize $\pm$\,0.00} & 0.24 / 0.80\,{\scriptsize $\pm$\,0.00} \\
    LOOP\_SEATTLE\_5T & 1.52 & 0.86 / 0.80 & 1.49 & 0.88 / 0.80 & 1.51 & 0.97 / 0.79 & 1.50 & 0.87 / 0.80 & 1.57 & 0.84 / 0.77 & 1.53\,{\scriptsize $\pm$\,0.00} & 0.87 / 0.80\,{\scriptsize $\pm$\,0.00} & 1.48\,{\scriptsize $\pm$\,0.01} & 1.00 / 0.85\,{\scriptsize $\pm$\,0.00} & \textbf{1.30}\,{\scriptsize $\pm$\,0.00} & 0.88 / 0.80\,{\scriptsize $\pm$\,0.00} \\
    SZ\_TAXI\_15T & 1.14 & 0.61 / 0.78 & 1.01 & 0.62 / 0.80 & 1.15 & 0.75 / 0.79 & \textbf{1.00} & 0.59 / 0.80 & 1.18 & 0.34 / 0.61 & 1.08\,{\scriptsize $\pm$\,0.01} & 0.60 / 0.76\,{\scriptsize $\pm$\,0.00} & 1.05\,{\scriptsize $\pm$\,0.01} & 0.63 / 0.82\,{\scriptsize $\pm$\,0.01} & 1.04\,{\scriptsize $\pm$\,0.00} & 0.70 / 0.80\,{\scriptsize $\pm$\,0.00} \\
    M\_DENSE\_H & 0.96 & 0.49 / 0.79 & 0.92 & 0.50 / 0.80 & 0.99 & 0.58 / 0.79 & 0.92 & 0.50 / 0.80 & 1.02 & 0.20 / 0.54 & 0.95\,{\scriptsize $\pm$\,0.03} & 0.57 / 0.83\,{\scriptsize $\pm$\,0.03} & 0.90\,{\scriptsize $\pm$\,0.01} & 0.50 / 0.79\,{\scriptsize $\pm$\,0.01} & \textbf{0.62}\,{\scriptsize $\pm$\,0.00} & 0.43 / 0.80\,{\scriptsize $\pm$\,0.00} \\
    temperature\_rain & 1.03 & 0.48 / 0.80 & 1.04 & 0.48 / 0.80 & 1.06 & 0.53 / 0.80 & 1.02 & 0.48 / 0.81 & 0.88 & 0.18 / 0.45 & 1.03\,{\scriptsize $\pm$\,0.00} & 0.49 / 0.80\,{\scriptsize $\pm$\,0.00} & 0.90\,{\scriptsize $\pm$\,0.01} & 0.46 / 0.84\,{\scriptsize $\pm$\,0.00} & \textbf{0.71}\,{\scriptsize $\pm$\,0.00} & 0.37 / 0.80\,{\scriptsize $\pm$\,0.00} \\
    m4\_hourly & 0.63 & 0.22 / 0.73 & 0.59 & 0.29 / 0.80 & 0.63 & 0.36 / 0.76 & 0.59 & 0.26 / 0.79 & 0.70 & 0.14 / 0.57 & 0.63\,{\scriptsize $\pm$\,0.00} & 0.24 / 0.75\,{\scriptsize $\pm$\,0.00} & 0.64\,{\scriptsize $\pm$\,0.02} & 0.19 / 0.67\,{\scriptsize $\pm$\,0.01} & \textbf{0.53}\,{\scriptsize $\pm$\,0.00} & 0.32 / 0.80\,{\scriptsize $\pm$\,0.00} \\
    m4\_daily & 0.06 & 0.03 / 0.81 & 0.06 & 0.03 / 0.80 & 0.06 & 0.03 / 0.79 & 0.06 & 0.03 / 0.80 & 0.06 & 0.02 / 0.67 & 0.06\,{\scriptsize $\pm$\,0.00} & 0.03 / 0.79\,{\scriptsize $\pm$\,0.01} & 0.05\,{\scriptsize $\pm$\,0.00} & 0.03 / 0.78\,{\scriptsize $\pm$\,0.04} & \textbf{0.04}\,{\scriptsize $\pm$\,0.00} & 0.03 / 0.80\,{\scriptsize $\pm$\,0.00} \\
    m4\_weekly & \textbf{0.14} & 0.06 / 0.78 & \textbf{0.14} & 0.06 / 0.79 & \textbf{0.14} & 0.06 / 0.77 & \textbf{0.14} & 0.06 / 0.79 & 0.15 & 0.06 / 0.74 & 0.16\,{\scriptsize $\pm$\,0.00} & 0.09 / 0.78\,{\scriptsize $\pm$\,0.01} & 0.15\,{\scriptsize $\pm$\,0.00} & 0.07 / 0.71\,{\scriptsize $\pm$\,0.02} & \textbf{0.14}\,{\scriptsize $\pm$\,0.00} & 0.08 / 0.78\,{\scriptsize $\pm$\,0.00} \\
    \addlinespace[2pt]
    \cmidrule(lr){1-17}
    \textit{Average} & 0.67 & 0.32 / 0.78 & 0.64 & 0.34 / 0.80 & 0.68 & 0.39 / 0.79 & 0.64 & 0.33 / 0.80 & 0.68 & 0.21 / 0.63 & 0.65 & 0.34 / 0.78 & 0.62 & 0.33 / 0.78 & \textbf{0.52} & 0.33 / 0.80 \\
    \midrule
    \multicolumn{17}{c}{\textit{ARIMA backbone}} \\
    \addlinespace[2pt]
    electricity\_H & 0.53 & 0.26 / 0.77 & 0.44 & 0.28 / 0.80 & 0.55 & 0.36 / 0.79 & 0.43 & 0.26 / 0.80 & 0.43 & 0.17 / 0.72 & 0.48\,{\scriptsize $\pm$\,0.00} & 0.25 / 0.77\,{\scriptsize $\pm$\,0.00} & 0.48\,{\scriptsize $\pm$\,0.03} & 0.17 / 0.73\,{\scriptsize $\pm$\,0.01} & \textbf{0.22}\,{\scriptsize $\pm$\,0.00} & 0.15 / 0.80\,{\scriptsize $\pm$\,0.00} \\
    electricity\_15T & 0.33 & 0.14 / 0.78 & 0.28 & 0.16 / 0.80 & 0.34 & 0.21 / 0.79 & 0.28 & 0.15 / 0.80 & 0.28 & 0.10 / 0.70 & 0.31\,{\scriptsize $\pm$\,0.00} & 0.15 / 0.77\,{\scriptsize $\pm$\,0.00} & 0.33\,{\scriptsize $\pm$\,0.01} & 0.10 / 0.73\,{\scriptsize $\pm$\,0.02} & \textbf{0.23}\,{\scriptsize $\pm$\,0.00} & 0.15 / 0.80\,{\scriptsize $\pm$\,0.00} \\
    solar\_H & 1.21 & 0.60 / 0.80 & 1.19 & 0.63 / 0.80 & 1.23 & 0.65 / 0.80 & 1.19 & 0.62 / 0.80 & 0.98 & 0.39 / 0.76 & 1.19\,{\scriptsize $\pm$\,0.01} & 0.63 / 0.79\,{\scriptsize $\pm$\,0.00} & 0.85\,{\scriptsize $\pm$\,0.01} & 0.51 / 0.84\,{\scriptsize $\pm$\,0.00} & \textbf{0.46}\,{\scriptsize $\pm$\,0.00} & 0.25 / 0.80\,{\scriptsize $\pm$\,0.00} \\
    LOOP\_SEATTLE\_5T & 1.56 & 0.90 / 0.80 & 1.53 & 0.91 / 0.80 & 1.54 & 0.99 / 0.79 & 1.54 & 0.91 / 0.80 & 1.47 & 0.79 / 0.75 & 1.60\,{\scriptsize $\pm$\,0.04} & 0.94 / 0.80\,{\scriptsize $\pm$\,0.01} & 1.51\,{\scriptsize $\pm$\,0.01} & 1.04 / 0.84\,{\scriptsize $\pm$\,0.00} & \textbf{1.31}\,{\scriptsize $\pm$\,0.00} & 0.89 / 0.80\,{\scriptsize $\pm$\,0.00} \\
    SZ\_TAXI\_15T & 1.16 & 0.64 / 0.78 & 1.03 & 0.63 / 0.80 & 1.15 & 0.75 / 0.80 & \textbf{1.02} & 0.62 / 0.80 & 1.07 & 0.45 / 0.69 & 1.15\,{\scriptsize $\pm$\,0.01} & 0.62 / 0.77\,{\scriptsize $\pm$\,0.00} & 1.04\,{\scriptsize $\pm$\,0.00} & 0.64 / 0.78\,{\scriptsize $\pm$\,0.01} & 1.06\,{\scriptsize $\pm$\,0.00} & 0.74 / 0.80\,{\scriptsize $\pm$\,0.00} \\
    M\_DENSE\_H & 1.34 & 0.72 / 0.79 & 1.26 & 0.73 / 0.80 & 1.38 & 0.84 / 0.80 & 1.27 & 0.72 / 0.81 & 1.25 & 0.53 / 0.69 & 1.31\,{\scriptsize $\pm$\,0.00} & 0.72 / 0.79\,{\scriptsize $\pm$\,0.00} & 1.19\,{\scriptsize $\pm$\,0.01} & 0.68 / 0.78\,{\scriptsize $\pm$\,0.00} & \textbf{0.66}\,{\scriptsize $\pm$\,0.00} & 0.46 / 0.80\,{\scriptsize $\pm$\,0.00} \\
    temperature\_rain & 1.15 & 0.52 / 0.80 & 1.11 & 0.55 / 0.80 & 1.12 & 0.56 / 0.79 & 1.10 & 0.53 / 0.81 & 1.09 & 0.43 / 0.76 & 1.17\,{\scriptsize $\pm$\,0.00} & 0.52 / 0.77\,{\scriptsize $\pm$\,0.01} & 1.09\,{\scriptsize $\pm$\,0.00} & 0.53 / 0.83\,{\scriptsize $\pm$\,0.00} & \textbf{0.81}\,{\scriptsize $\pm$\,0.02} & 0.45 / 0.80\,{\scriptsize $\pm$\,0.00} \\
    m4\_hourly & 0.95 & 0.33 / 0.73 & 0.85 & 0.43 / 0.80 & 0.93 & 0.53 / 0.76 & 0.88 & 0.40 / 0.79 & 0.88 & 0.26 / 0.61 & 0.98\,{\scriptsize $\pm$\,0.00} & 0.39 / 0.75\,{\scriptsize $\pm$\,0.00} & 0.99\,{\scriptsize $\pm$\,0.01} & 0.27 / 0.65\,{\scriptsize $\pm$\,0.02} & \textbf{0.67}\,{\scriptsize $\pm$\,0.00} & 0.43 / 0.80\,{\scriptsize $\pm$\,0.00} \\
    m4\_daily & 0.06 & 0.03 / 0.81 & 0.06 & 0.03 / 0.80 & 0.06 & 0.03 / 0.79 & 0.06 & 0.03 / 0.80 & 0.05 & 0.02 / 0.72 & 0.06\,{\scriptsize $\pm$\,0.01} & 0.03 / 0.74\,{\scriptsize $\pm$\,0.06} & 0.06\,{\scriptsize $\pm$\,0.00} & 0.03 / 0.76\,{\scriptsize $\pm$\,0.02} & \textbf{0.04}\,{\scriptsize $\pm$\,0.00} & 0.03 / 0.80\,{\scriptsize $\pm$\,0.00} \\
    m4\_weekly & 0.17 & 0.06 / 0.77 & 0.17 & 0.07 / 0.79 & 0.17 & 0.07 / 0.78 & 0.17 & 0.07 / 0.79 & 0.18 & 0.05 / 0.64 & 0.17\,{\scriptsize $\pm$\,0.00} & 0.08 / 0.79\,{\scriptsize $\pm$\,0.02} & 0.20\,{\scriptsize $\pm$\,0.01} & 0.05 / 0.48\,{\scriptsize $\pm$\,0.10} & \textbf{0.15}\,{\scriptsize $\pm$\,0.00} & 0.09 / 0.78\,{\scriptsize $\pm$\,0.00} \\
    \addlinespace[2pt]
    \cmidrule(lr){1-17}
    \textit{Average} & 0.85 & 0.42 / 0.78 & 0.79 & 0.44 / 0.80 & 0.85 & 0.50 / 0.79 & 0.79 & 0.43 / 0.80 & 0.77 & 0.32 / 0.70 & 0.84 & 0.43 / 0.77 & 0.77 & 0.40 / 0.74 & \textbf{0.56} & 0.36 / 0.80 \\
    \bottomrule
  \end{tabular}%
  }
\end{table}

\subsubsection{Full conformal prediction baseline results - Bench10\_full}

\begin{table}[h]
  \caption{CP methods at 80\% target coverage on Gift-Eval Bench10 with the full test set (10 datasets, 2 backbones: TiRex and ARIMA). Cells show nWink (mean Winkler $/$ std$(y)$) and nW (mean width $/$ std$(y)$) $/$ Cov, where std$(y)$ is the standard deviation of the target on the eval split. ResCP and HopCPT report mean$\,\pm\,$std over ten seeds, RareCP over three seeds, other methods are deterministic. \textbf{Bold} marks the smallest displayed nWink per row among methods with $\mathrm{Cov} \ge 0.70$. Each backbone block ends with an \textit{Average} row over its 10 datasets.}
  \label{tab:cp_methods_bench10_full}
  \centering
  \resizebox{\textwidth}{!}{%
  \begin{tabular}{lcccccccccccccccc}
    \multicolumn{17}{c}{{\large\bfseries\itshape Bench10\_full}} \\[2pt]
    \toprule
    & \multicolumn{2}{c}{Uniform} & \multicolumn{2}{c}{ACI} & \multicolumn{2}{c}{DtACI} & \multicolumn{2}{c}{NexCP} & \multicolumn{2}{c}{KOWCPI} & \multicolumn{2}{c}{HopCPT} & \multicolumn{2}{c}{ResCP} & \multicolumn{2}{c}{RareCP\,\textit{(Ours)}} \\
    \cmidrule(lr){2-3} \cmidrule(lr){4-5} \cmidrule(lr){6-7} \cmidrule(lr){8-9} \cmidrule(lr){10-11} \cmidrule(lr){12-13} \cmidrule(lr){14-15} \cmidrule(lr){16-17}
    \textbf{Dataset} & nWink $\downarrow$ & nW $\downarrow$ / Cov $\uparrow$ & nWink $\downarrow$ & nW $\downarrow$ / Cov $\uparrow$ & nWink $\downarrow$ & nW $\downarrow$ / Cov $\uparrow$ & nWink $\downarrow$ & nW $\downarrow$ / Cov $\uparrow$ & nWink $\downarrow$ & nW $\downarrow$ / Cov $\uparrow$ & nWink $\downarrow$ & nW $\downarrow$ / Cov $\uparrow$ & nWink $\downarrow$ & nW $\downarrow$ / Cov $\uparrow$ & nWink $\downarrow$ & nW $\downarrow$ / Cov $\uparrow$ \\
    \midrule
    \multicolumn{17}{c}{\textit{TiRex backbone}} \\
    \addlinespace[2pt]
    electricity\_H & 0.26 & 0.12 / 0.77 & 0.22 & 0.13 / 0.80 & 0.27 & 0.17 / 0.79 & 0.22 & 0.12 / 0.80 & 0.27 & 0.07 / 0.65 & 0.25\,{\scriptsize $\pm$\,0.00} & 0.12 / 0.77\,{\scriptsize $\pm$\,0.00} & 0.27\,{\scriptsize $\pm$\,0.01} & 0.08 / 0.72\,{\scriptsize $\pm$\,0.05} & \textbf{0.20}\,{\scriptsize $\pm$\,0.00} & 0.12 / 0.80\,{\scriptsize $\pm$\,0.00} \\
    electricity\_15T & 0.31 & 0.14 / 0.78 & 0.27 & 0.15 / 0.80 & 0.33 & 0.22 / 0.79 & 0.27 & 0.15 / 0.80 & 0.33 & 0.09 / 0.66 & 0.30\,{\scriptsize $\pm$\,0.00} & 0.15 / 0.77\,{\scriptsize $\pm$\,0.00} & 0.33\,{\scriptsize $\pm$\,0.01} & 0.10 / 0.70\,{\scriptsize $\pm$\,0.06} & \textbf{0.24}\,{\scriptsize $\pm$\,0.00} & 0.15 / 0.80\,{\scriptsize $\pm$\,0.00} \\
    solar\_H & 0.70 & 0.23 / 0.80 & 0.69 & 0.25 / 0.80 & 0.67 & 0.28 / 0.79 & 0.69 & 0.24 / 0.80 & 0.69 & 0.13 / 0.60 & 0.50\,{\scriptsize $\pm$\,0.01} & 0.25 / 0.79\,{\scriptsize $\pm$\,0.01} & 0.48\,{\scriptsize $\pm$\,0.00} & 0.26 / 0.84\,{\scriptsize $\pm$\,0.00} & \textbf{0.37}\,{\scriptsize $\pm$\,0.00} & 0.24 / 0.80\,{\scriptsize $\pm$\,0.00} \\
    LOOP\_SEATTLE\_5T & 1.54 & 0.88 / 0.80 & 1.51 & 0.89 / 0.80 & 1.56 & 1.01 / 0.79 & 1.52 & 0.89 / 0.80 & 1.65 & 0.77 / 0.70 & 1.58\,{\scriptsize $\pm$\,0.03} & 0.91 / 0.80\,{\scriptsize $\pm$\,0.01} & 1.51\,{\scriptsize $\pm$\,0.01} & 1.00 / 0.84\,{\scriptsize $\pm$\,0.00} & \textbf{1.31}\,{\scriptsize $\pm$\,0.00} & 0.90 / 0.80\,{\scriptsize $\pm$\,0.00} \\
    SZ\_TAXI\_15T & 1.14 & 0.61 / 0.78 & 1.01 & 0.62 / 0.80 & 1.15 & 0.75 / 0.79 & \textbf{1.00} & 0.59 / 0.80 & 1.18 & 0.34 / 0.61 & 1.08\,{\scriptsize $\pm$\,0.01} & 0.60 / 0.76\,{\scriptsize $\pm$\,0.00} & 1.05\,{\scriptsize $\pm$\,0.01} & 0.63 / 0.82\,{\scriptsize $\pm$\,0.01} & 1.03\,{\scriptsize $\pm$\,0.00} & 0.69 / 0.80\,{\scriptsize $\pm$\,0.00} \\
    M\_DENSE\_H & 0.96 & 0.49 / 0.79 & 0.92 & 0.50 / 0.80 & 0.99 & 0.58 / 0.79 & 0.92 & 0.50 / 0.80 & 1.02 & 0.20 / 0.54 & 0.95\,{\scriptsize $\pm$\,0.03} & 0.57 / 0.83\,{\scriptsize $\pm$\,0.03} & 0.90\,{\scriptsize $\pm$\,0.01} & 0.50 / 0.79\,{\scriptsize $\pm$\,0.01} & \textbf{0.61}\,{\scriptsize $\pm$\,0.00} & 0.43 / 0.80\,{\scriptsize $\pm$\,0.00} \\
    temperature\_rain & 1.03 & 0.48 / 0.80 & 1.04 & 0.48 / 0.80 & 1.06 & 0.53 / 0.80 & 1.02 & 0.48 / 0.81 & 0.88 & 0.18 / 0.45 & 1.04\,{\scriptsize $\pm$\,0.00} & 0.50 / 0.80\,{\scriptsize $\pm$\,0.01} & 0.90\,{\scriptsize $\pm$\,0.01} & 0.46 / 0.84\,{\scriptsize $\pm$\,0.00} & \textbf{0.71}\,{\scriptsize $\pm$\,0.01} & 0.37 / 0.80\,{\scriptsize $\pm$\,0.00} \\
    m4\_hourly & 0.63 & 0.22 / 0.73 & 0.59 & 0.29 / 0.80 & 0.63 & 0.36 / 0.76 & 0.59 & 0.26 / 0.79 & 0.70 & 0.14 / 0.57 & 0.63\,{\scriptsize $\pm$\,0.00} & 0.24 / 0.75\,{\scriptsize $\pm$\,0.00} & 0.64\,{\scriptsize $\pm$\,0.02} & 0.19 / 0.67\,{\scriptsize $\pm$\,0.01} & \textbf{0.53}\,{\scriptsize $\pm$\,0.00} & 0.32 / 0.80\,{\scriptsize $\pm$\,0.00} \\
    m4\_daily & 0.06 & 0.03 / 0.81 & 0.06 & 0.03 / 0.80 & 0.06 & 0.03 / 0.79 & 0.06 & 0.03 / 0.80 & 0.06 & 0.02 / 0.67 & 0.06\,{\scriptsize $\pm$\,0.00} & 0.03 / 0.79\,{\scriptsize $\pm$\,0.01} & 0.05\,{\scriptsize $\pm$\,0.00} & 0.03 / 0.78\,{\scriptsize $\pm$\,0.04} & \textbf{0.04}\,{\scriptsize $\pm$\,0.00} & 0.03 / 0.80\,{\scriptsize $\pm$\,0.00} \\
    m4\_weekly & 0.14 & 0.06 / 0.78 & 0.14 & 0.06 / 0.79 & 0.14 & 0.06 / 0.77 & 0.14 & 0.06 / 0.79 & 0.15 & 0.06 / 0.74 & 0.16\,{\scriptsize $\pm$\,0.00} & 0.09 / 0.78\,{\scriptsize $\pm$\,0.01} & 0.15\,{\scriptsize $\pm$\,0.00} & 0.07 / 0.71\,{\scriptsize $\pm$\,0.02} & \textbf{0.13}\,{\scriptsize $\pm$\,0.00} & 0.07 / 0.78\,{\scriptsize $\pm$\,0.00} \\
    \addlinespace[2pt]
    \cmidrule(lr){1-17}
    \textit{Average} & 0.68 & 0.33 / 0.78 & 0.64 & 0.34 / 0.80 & 0.69 & 0.40 / 0.79 & 0.64 & 0.33 / 0.80 & 0.69 & 0.20 / 0.62 & 0.66 & 0.35 / 0.78 & 0.63 & 0.33 / 0.77 & \textbf{0.52} & 0.33 / 0.80 \\
    \midrule
    \multicolumn{17}{c}{\textit{ARIMA backbone}} \\
    \addlinespace[2pt]
    electricity\_H & 0.51 & 0.26 / 0.78 & 0.43 & 0.27 / 0.80 & 0.53 & 0.36 / 0.79 & 0.42 & 0.26 / 0.80 & 0.43 & 0.17 / 0.70 & 0.47\,{\scriptsize $\pm$\,0.00} & 0.26 / 0.77\,{\scriptsize $\pm$\,0.00} & 0.49\,{\scriptsize $\pm$\,0.02} & 0.16 / 0.70\,{\scriptsize $\pm$\,0.02} & \textbf{0.23}\,{\scriptsize $\pm$\,0.00} & 0.16 / 0.80\,{\scriptsize $\pm$\,0.00} \\
    electricity\_15T & 0.35 & 0.16 / 0.78 & 0.30 & 0.17 / 0.80 & 0.36 & 0.23 / 0.79 & 0.30 & 0.16 / 0.80 & 0.30 & 0.11 / 0.69 & 0.33\,{\scriptsize $\pm$\,0.00} & 0.16 / 0.77\,{\scriptsize $\pm$\,0.00} & 0.36\,{\scriptsize $\pm$\,0.01} & 0.10 / 0.70\,{\scriptsize $\pm$\,0.01} & \textbf{0.24}\,{\scriptsize $\pm$\,0.00} & 0.16 / 0.80\,{\scriptsize $\pm$\,0.00} \\
    solar\_H & 1.20 & 0.60 / 0.80 & 1.19 & 0.63 / 0.80 & 1.22 & 0.64 / 0.80 & 1.18 & 0.62 / 0.80 & 0.97 & 0.38 / 0.75 & 1.18\,{\scriptsize $\pm$\,0.01} & 0.63 / 0.79\,{\scriptsize $\pm$\,0.01} & 0.85\,{\scriptsize $\pm$\,0.01} & 0.50 / 0.84\,{\scriptsize $\pm$\,0.00} & \textbf{0.45}\,{\scriptsize $\pm$\,0.00} & 0.24 / 0.80\,{\scriptsize $\pm$\,0.00} \\
    LOOP\_SEATTLE\_5T & 1.58 & 0.92 / 0.80 & 1.55 & 0.93 / 0.80 & 1.59 & 1.04 / 0.79 & 1.56 & 0.93 / 0.80 & 1.51 & 0.78 / 0.73 & 1.66\,{\scriptsize $\pm$\,0.04} & 0.97 / 0.79\,{\scriptsize $\pm$\,0.01} & 1.54\,{\scriptsize $\pm$\,0.01} & 1.04 / 0.84\,{\scriptsize $\pm$\,0.00} & \textbf{1.33}\,{\scriptsize $\pm$\,0.00} & 0.92 / 0.80\,{\scriptsize $\pm$\,0.00} \\
    SZ\_TAXI\_15T & 1.16 & 0.64 / 0.78 & 1.03 & 0.63 / 0.80 & 1.15 & 0.75 / 0.80 & \textbf{1.02} & 0.62 / 0.80 & 1.07 & 0.45 / 0.69 & 1.15\,{\scriptsize $\pm$\,0.01} & 0.62 / 0.77\,{\scriptsize $\pm$\,0.00} & 1.04\,{\scriptsize $\pm$\,0.00} & 0.64 / 0.78\,{\scriptsize $\pm$\,0.01} & 1.07\,{\scriptsize $\pm$\,0.01} & 0.74 / 0.80\,{\scriptsize $\pm$\,0.00} \\
    M\_DENSE\_H & 1.34 & 0.72 / 0.79 & 1.26 & 0.73 / 0.80 & 1.38 & 0.84 / 0.80 & 1.27 & 0.72 / 0.81 & 1.25 & 0.53 / 0.69 & 1.31\,{\scriptsize $\pm$\,0.01} & 0.72 / 0.79\,{\scriptsize $\pm$\,0.00} & 1.19\,{\scriptsize $\pm$\,0.01} & 0.68 / 0.78\,{\scriptsize $\pm$\,0.00} & \textbf{0.66}\,{\scriptsize $\pm$\,0.00} & 0.46 / 0.80\,{\scriptsize $\pm$\,0.00} \\
    temperature\_rain & 1.15 & 0.52 / 0.80 & 1.11 & 0.55 / 0.80 & 1.12 & 0.56 / 0.79 & 1.10 & 0.53 / 0.81 & 1.09 & 0.43 / 0.76 & 1.17\,{\scriptsize $\pm$\,0.01} & 0.53 / 0.78\,{\scriptsize $\pm$\,0.01} & 1.09\,{\scriptsize $\pm$\,0.00} & 0.53 / 0.83\,{\scriptsize $\pm$\,0.00} & \textbf{0.82}\,{\scriptsize $\pm$\,0.02} & 0.43 / 0.80\,{\scriptsize $\pm$\,0.00} \\
    m4\_hourly & 0.95 & 0.33 / 0.73 & 0.85 & 0.43 / 0.80 & 0.93 & 0.53 / 0.76 & 0.88 & 0.40 / 0.79 & 0.88 & 0.26 / 0.61 & 0.98\,{\scriptsize $\pm$\,0.00} & 0.39 / 0.75\,{\scriptsize $\pm$\,0.00} & 0.99\,{\scriptsize $\pm$\,0.01} & 0.27 / 0.65\,{\scriptsize $\pm$\,0.02} & \textbf{0.69}\,{\scriptsize $\pm$\,0.00} & 0.44 / 0.80\,{\scriptsize $\pm$\,0.00} \\
    m4\_daily & 0.06 & 0.03 / 0.81 & 0.06 & 0.03 / 0.80 & 0.06 & 0.03 / 0.79 & 0.06 & 0.03 / 0.80 & 0.05 & 0.02 / 0.72 & 0.06\,{\scriptsize $\pm$\,0.01} & 0.03 / 0.74\,{\scriptsize $\pm$\,0.07} & 0.06\,{\scriptsize $\pm$\,0.00} & 0.03 / 0.76\,{\scriptsize $\pm$\,0.02} & \textbf{0.04}\,{\scriptsize $\pm$\,0.00} & 0.03 / 0.80\,{\scriptsize $\pm$\,0.00} \\
    m4\_weekly & 0.17 & 0.06 / 0.77 & 0.17 & 0.07 / 0.79 & 0.17 & 0.07 / 0.78 & 0.17 & 0.07 / 0.79 & 0.18 & 0.05 / 0.64 & 0.17\,{\scriptsize $\pm$\,0.00} & 0.08 / 0.79\,{\scriptsize $\pm$\,0.01} & 0.20\,{\scriptsize $\pm$\,0.01} & 0.05 / 0.48\,{\scriptsize $\pm$\,0.10} & \textbf{0.16}\,{\scriptsize $\pm$\,0.00} & 0.09 / 0.78\,{\scriptsize $\pm$\,0.00} \\
    \addlinespace[2pt]
    \cmidrule(lr){1-17}
    \textit{Average} & 0.85 & 0.42 / 0.78 & 0.79 & 0.44 / 0.80 & 0.85 & 0.51 / 0.79 & 0.80 & 0.43 / 0.80 & 0.77 & 0.32 / 0.70 & 0.85 & 0.44 / 0.77 & 0.78 & 0.40 / 0.74 & \textbf{0.57} & 0.37 / 0.80 \\
    \bottomrule
  \end{tabular}%
  }
\end{table}

\clearpage

\subsubsection{Full foundation model results - Bench10 \& Bench22}

\begin{table}[h]
  \caption{Native UQ of foundation model forecasters at 80\% target coverage on Gift-Eval Bench10 (top, 10 datasets) and Bench22 (bottom, 12 additional datasets). Cells show nWink (mean Winkler $/$ std$(y)$) and nW (mean width $/$ std$(y)$) $/$ Cov, where std$(y)$ is the standard deviation of the target on the eval split. Toto and Lag-Llama report mean$\,\pm\,$std over ten seeds, other FMs are deterministic. \textbf{Bold} marks the smallest displayed nWink per row among FMs with $\mathrm{Cov} \ge 0.70$. Each block ends with an \textit{Average} row (Bench10 over 10 datasets, Bench22 over the full 22).}
  \label{tab:fm_native_uq_bench10_bench22}
  \centering
  \resizebox{\textwidth}{!}{%
  \begin{tabular}{lcccccccccccccccc}
    \toprule
    & \multicolumn{2}{c}{TiRex} & \multicolumn{2}{c}{Chronos-2} & \multicolumn{2}{c}{ChronosBolt} & \multicolumn{2}{c}{Moirai-2} & \multicolumn{2}{c}{TimesFM-2.5} & \multicolumn{2}{c}{TabPFN-TS} & \multicolumn{2}{c}{Toto} & \multicolumn{2}{c}{Lag-Llama} \\
    \cmidrule(lr){2-3} \cmidrule(lr){4-5} \cmidrule(lr){6-7} \cmidrule(lr){8-9} \cmidrule(lr){10-11} \cmidrule(lr){12-13} \cmidrule(lr){14-15} \cmidrule(lr){16-17}
    \textbf{Dataset} & nWink $\downarrow$ & nW $\downarrow$ / Cov $\uparrow$ & nWink $\downarrow$ & nW $\downarrow$ / Cov $\uparrow$ & nWink $\downarrow$ & nW $\downarrow$ / Cov $\uparrow$ & nWink $\downarrow$ & nW $\downarrow$ / Cov $\uparrow$ & nWink $\downarrow$ & nW $\downarrow$ / Cov $\uparrow$ & nWink $\downarrow$ & nW $\downarrow$ / Cov $\uparrow$ & nWink $\downarrow$ & nW $\downarrow$ / Cov $\uparrow$ & nWink $\downarrow$ & nW $\downarrow$ / Cov $\uparrow$ \\
    \midrule
    \multicolumn{17}{c}{\textit{bench10 (10 datasets)}} \\
    \addlinespace[2pt]
    electricity\_H & \textbf{0.20} & 0.13 / 0.81 & 0.22 & 0.14 / 0.80 & 0.22 & 0.15 / 0.85 & 0.23 & 0.15 / 0.81 & 0.26 & 0.18 / 0.82 & 0.24 & 0.14 / 0.78 & 0.29\,{\scriptsize $\pm$\,0.00} & 0.17 / 0.72\,{\scriptsize $\pm$\,0.00} & 0.51\,{\scriptsize $\pm$\,0.00} & 0.24 / 0.71\,{\scriptsize $\pm$\,0.00} \\
    electricity\_15T & \textbf{0.27} & 0.17 / 0.81 & 0.30 & 0.16 / 0.74 & 0.28 & 0.19 / 0.83 & 0.29 & 0.17 / 0.79 & 0.32 & 0.21 / 0.85 & 0.32 & 0.17 / 0.76 & 0.33\,{\scriptsize $\pm$\,0.00} & 0.15 / 0.65\,{\scriptsize $\pm$\,0.00} & 0.60\,{\scriptsize $\pm$\,0.00} & 0.29 / 0.62\,{\scriptsize $\pm$\,0.00} \\
    solar\_H & \textbf{0.43} & 0.30 / 0.82 & 0.51 & 0.36 / 0.92 & 0.51 & 0.39 / 0.94 & 0.66 & 0.44 / 0.90 & 0.78 & 0.55 / 0.88 & 0.65 & 0.36 / 0.89 & 0.60\,{\scriptsize $\pm$\,0.00} & 0.25 / 0.83\,{\scriptsize $\pm$\,0.00} & 0.88\,{\scriptsize $\pm$\,0.00} & 0.43 / 0.85\,{\scriptsize $\pm$\,0.00} \\
    LOOP\_SEATTLE\_5T & \textbf{2.12} & 1.45 / 0.79 & 2.13 & 1.40 / 0.77 & 2.17 & 1.41 / 0.78 & \textbf{2.12} & 1.40 / 0.78 & 2.14 & 1.40 / 0.78 & 2.19 & 1.36 / 0.75 & 2.16\,{\scriptsize $\pm$\,0.00} & 1.47 / 0.79\,{\scriptsize $\pm$\,0.00} & 3.05\,{\scriptsize $\pm$\,0.00} & 0.78 / 0.42\,{\scriptsize $\pm$\,0.00} \\
    SZ\_TAXI\_15T & \textbf{0.76} & 0.51 / 0.82 & 0.77 & 0.49 / 0.80 & 0.78 & 0.50 / 0.81 & \textbf{0.76} & 0.49 / 0.81 & 0.77 & 0.48 / 0.81 & 0.78 & 0.47 / 0.79 & 0.77\,{\scriptsize $\pm$\,0.00} & 0.50 / 0.80\,{\scriptsize $\pm$\,0.00} & 0.92\,{\scriptsize $\pm$\,0.00} & 0.39 / 0.69\,{\scriptsize $\pm$\,0.00} \\
    M\_DENSE\_H & \textbf{0.75} & 0.51 / 0.79 & 0.79 & 0.43 / 0.73 & 0.77 & 0.55 / 0.84 & 0.83 & 0.50 / 0.78 & 0.85 & 0.57 / 0.83 & 0.91 & 0.50 / 0.77 & 0.95\,{\scriptsize $\pm$\,0.00} & 0.46 / 0.74\,{\scriptsize $\pm$\,0.00} & 1.11\,{\scriptsize $\pm$\,0.00} & 0.43 / 0.67\,{\scriptsize $\pm$\,0.00} \\
    temperature\_rain & 0.75 & 0.39 / 0.78 & \textbf{0.74} & 0.38 / 0.78 & \textbf{0.74} & 0.37 / 0.80 & \textbf{0.74} & 0.36 / 0.77 & 0.75 & 0.35 / 0.81 & 0.78 & 0.34 / 0.77 & 0.76\,{\scriptsize $\pm$\,0.00} & 0.36 / 0.78\,{\scriptsize $\pm$\,0.00} & 0.86\,{\scriptsize $\pm$\,0.00} & 0.24 / 0.56\,{\scriptsize $\pm$\,0.00} \\
    m4\_hourly & \textbf{0.45} & 0.31 / 0.79 & 0.72 & 0.31 / 0.52 & 0.67 & 0.36 / 0.61 & 0.50 & 0.37 / 0.83 & 0.51 & 0.37 / 0.84 & 0.55 & 0.32 / 0.76 & 1.07\,{\scriptsize $\pm$\,0.00} & 0.86 / 0.90\,{\scriptsize $\pm$\,0.00} & 2.56\,{\scriptsize $\pm$\,0.01} & 1.06 / 0.24\,{\scriptsize $\pm$\,0.00} \\
    m4\_daily & \textbf{0.05} & 0.03 / 0.79 & \textbf{0.05} & 0.04 / 0.76 & \textbf{0.05} & 0.04 / 0.79 & \textbf{0.05} & 0.04 / 0.83 & \textbf{0.05} & 0.04 / 0.84 & \textbf{0.05} & 0.03 / 0.77 & \textbf{0.05}\,{\scriptsize $\pm$\,0.00} & 0.03 / 0.71\,{\scriptsize $\pm$\,0.00} & 0.44\,{\scriptsize $\pm$\,0.00} & 0.40 / 0.69\,{\scriptsize $\pm$\,0.00} \\
    m4\_weekly & \textbf{0.11} & 0.07 / 0.77 & 0.12 & 0.07 / 0.73 & 0.12 & 0.09 / 0.85 & 0.13 & 0.08 / 0.75 & 0.12 & 0.08 / 0.81 & 0.13 & 0.07 / 0.71 & 0.14\,{\scriptsize $\pm$\,0.00} & 0.07 / 0.64\,{\scriptsize $\pm$\,0.01} & 0.34\,{\scriptsize $\pm$\,0.00} & 0.27 / 0.74\,{\scriptsize $\pm$\,0.00} \\
    \addlinespace[2pt]
    \cmidrule(lr){1-17}
    \textit{Bench10 average} & \textbf{0.59} & 0.39 / 0.80 & 0.64 & 0.38 / 0.76 & 0.63 & 0.40 / 0.81 & 0.63 & 0.40 / 0.80 & 0.66 & 0.42 / 0.83 & 0.66 & 0.38 / 0.78 & 0.71 & 0.43 / 0.76 & 1.13 & 0.45 / 0.62 \\
    \midrule
    \multicolumn{17}{c}{\textit{bench22 — additional datasets (12)}} \\
    \addlinespace[2pt]
    solar\_10T & \textbf{0.29} & 0.18 / 0.87 & 0.37 & 0.26 / 0.64 & 0.36 & 0.27 / 0.73 & 0.39 & 0.23 / 0.87 & 0.36 & 0.21 / 0.86 & 0.38 & 0.21 / 0.89 & 0.32\,{\scriptsize $\pm$\,0.00} & 0.16 / 0.88\,{\scriptsize $\pm$\,0.00} & 1.73\,{\scriptsize $\pm$\,0.00} & 0.16 / 0.77\,{\scriptsize $\pm$\,0.00} \\
    LOOP\_SEATTLE\_H & \textbf{1.43} & 0.88 / 0.78 & \textbf{1.43} & 0.83 / 0.75 & 1.50 & 0.98 / 0.82 & 1.48 & 0.92 / 0.80 & 1.44 & 0.90 / 0.81 & 1.48 & 0.89 / 0.75 & 1.54\,{\scriptsize $\pm$\,0.00} & 0.81 / 0.71\,{\scriptsize $\pm$\,0.00} & 1.79\,{\scriptsize $\pm$\,0.00} & 0.61 / 0.53\,{\scriptsize $\pm$\,0.00} \\
    m4\_monthly & \textbf{0.10} & 0.07 / 0.80 & 0.11 & 0.07 / 0.77 & 0.12 & 0.08 / 0.83 & 0.11 & 0.07 / 0.79 & 0.11 & 0.07 / 0.84 & 0.11 & 0.07 / 0.76 & 0.11\,{\scriptsize $\pm$\,0.00} & 0.06 / 0.71\,{\scriptsize $\pm$\,0.00} & 0.38\,{\scriptsize $\pm$\,0.00} & 0.27 / 0.74\,{\scriptsize $\pm$\,0.00} \\
    electricity\_D & 0.17 & 0.10 / 0.79 & 0.17 & 0.09 / 0.75 & 0.18 & 0.11 / 0.80 & \textbf{0.16} & 0.10 / 0.81 & 0.17 & 0.10 / 0.82 & 0.18 & 0.10 / 0.75 & 0.18\,{\scriptsize $\pm$\,0.00} & 0.09 / 0.73\,{\scriptsize $\pm$\,0.00} & 0.28\,{\scriptsize $\pm$\,0.00} & 0.13 / 0.47\,{\scriptsize $\pm$\,0.00} \\
    SZ\_TAXI\_H & \textbf{0.70} & 0.47 / 0.80 & \textbf{0.70} & 0.46 / 0.78 & \textbf{0.70} & 0.47 / 0.80 & 0.71 & 0.47 / 0.79 & 0.72 & 0.44 / 0.78 & 0.73 & 0.44 / 0.75 & 0.74\,{\scriptsize $\pm$\,0.00} & 0.46 / 0.76\,{\scriptsize $\pm$\,0.00} & 0.90\,{\scriptsize $\pm$\,0.00} & 0.38 / 0.62\,{\scriptsize $\pm$\,0.00} \\
    m4\_quarterly & \textbf{0.07} & 0.06 / 0.85 & 0.09 & 0.07 / 0.76 & 0.11 & 0.09 / 0.92 & 0.10 & 0.07 / 0.65 & 0.11 & 0.09 / 0.79 & 0.08 & 0.04 / 0.67 & 0.08\,{\scriptsize $\pm$\,0.00} & 0.05 / 0.72\,{\scriptsize $\pm$\,0.02} & 0.28\,{\scriptsize $\pm$\,0.00} & 0.24 / 0.73\,{\scriptsize $\pm$\,0.02} \\
    saugeenday\_D & \textbf{0.72} & 0.44 / 0.81 & 0.84 & 0.48 / 0.81 & 0.93 & 0.48 / 0.84 & 0.87 & 0.50 / 0.85 & 0.75 & 0.44 / 0.85 & 0.85 & 0.55 / 0.79 & 0.89\,{\scriptsize $\pm$\,0.02} & 0.41 / 0.69\,{\scriptsize $\pm$\,0.01} & 1.14\,{\scriptsize $\pm$\,0.02} & 0.44 / 0.70\,{\scriptsize $\pm$\,0.01} \\
    M\_DENSE\_D & 0.75 & 0.40 / 0.83 & \textbf{0.67} & 0.30 / 0.79 & 0.73 & 0.43 / 0.88 & 0.69 & 0.33 / 0.82 & 0.69 & 0.34 / 0.82 & 0.75 & 0.31 / 0.74 & 0.77\,{\scriptsize $\pm$\,0.01} & 0.41 / 0.78\,{\scriptsize $\pm$\,0.00} & 1.32\,{\scriptsize $\pm$\,0.01} & 0.42 / 0.59\,{\scriptsize $\pm$\,0.01} \\
    us\_births\_D & 1.42 & 0.75 / 0.72 & 1.16 & 0.58 / 0.79 & 1.36 & 1.09 / 0.95 & \textbf{1.09} & 0.72 / 0.87 & 1.24 & 0.83 / 0.91 & 1.34 & 0.67 / 0.73 & 1.56\,{\scriptsize $\pm$\,0.01} & 0.75 / 0.72\,{\scriptsize $\pm$\,0.01} & 4.35\,{\scriptsize $\pm$\,0.03} & 1.17 / 0.66\,{\scriptsize $\pm$\,0.01} \\
    saugeenday\_W & 2.14 & 0.98 / 0.80 & 2.53 & 1.00 / 0.78 & 2.26 & 0.90 / 0.88 & \textbf{2.09} & 0.85 / 0.78 & 2.17 & 0.75 / 0.80 & 2.14 & 0.94 / 0.63 & 2.28\,{\scriptsize $\pm$\,0.05} & 0.89 / 0.65\,{\scriptsize $\pm$\,0.03} & 2.28\,{\scriptsize $\pm$\,0.04} & 0.63 / 0.70\,{\scriptsize $\pm$\,0.03} \\
    us\_births\_W & 2.29 & 1.42 / 0.76 & \textbf{2.14} & 1.37 / 0.79 & 2.44 & 1.48 / 0.72 & 1.39 & 0.65 / 0.55 & 2.41 & 1.46 / 0.76 & 2.70 & 1.36 / 0.66 & 2.62\,{\scriptsize $\pm$\,0.06} & 1.39 / 0.67\,{\scriptsize $\pm$\,0.03} & 3.37\,{\scriptsize $\pm$\,0.06} & 0.96 / 0.58\,{\scriptsize $\pm$\,0.04} \\
    saugeenday\_M & 2.39 & 1.50 / 0.82 & 1.59 & 1.33 / 0.77 & 1.81 & 1.32 / 0.73 & 1.67 & 1.15 / 0.64 & 2.65 & 1.13 / 0.82 & \textbf{1.57} & 1.24 / 0.77 & 1.84\,{\scriptsize $\pm$\,0.06} & 1.25 / 0.85\,{\scriptsize $\pm$\,0.03} & 3.22\,{\scriptsize $\pm$\,0.06} & 0.84 / 0.66\,{\scriptsize $\pm$\,0.02} \\
    \addlinespace[2pt]
    \cmidrule(lr){1-17}
    \textit{Bench22 average} & 0.83 & 0.50 / 0.80 & 0.83 & 0.48 / 0.76 & 0.85 & 0.53 / 0.82 & \textbf{0.78} & 0.46 / 0.78 & 0.88 & 0.50 / 0.83 & 0.86 & 0.48 / 0.76 & 0.91 & 0.50 / 0.75 & 1.47 & 0.49 / 0.64 \\
    \bottomrule
  \end{tabular}%
  }
\end{table}

\clearpage

\subsection{Hyperparameter settings}
\label{apx:hyperparams}

\subsubsection{ResCP baseline}

\begin{table}[h]
  \caption{ResCP hyperparameter configuration used by the CP-methods full-result tables. \emph{Top:} best values from Optuna TPE (100 trials per backbone, ranked by mean nWink on the Bench10 val cache); the same best values are reused on Bench22 and the 100k/full sets. The \textbf{Trial-0} column shows the paper midpoint defaults that seed the TPE search. \emph{Bottom:} parameters held fixed across backbones and benchmarks.}
  \label{tab:rescp_sampling_hpo}
  \centering
  \setlength{\tabcolsep}{4pt}
  \resizebox{\textwidth}{!}{%
  \begin{tabular}{l l c c c c c}
    \toprule
    \textbf{Hyperparameter} & \textbf{Search range} & \textbf{Trial-0} & \textbf{TiRex} & \textbf{TimesFM-2.5} & \textbf{TabPFN-TS} & \textbf{ARIMA} \\
    \midrule
    \multicolumn{7}{l}{\textit{Searched (Optuna TPE, 100 trials per backbone, Bench10 val cache)}} \\
    \addlinespace[2pt]
    spectral\_radius & $[0.5,\,1.5]$ & 0.95 & 0.99 & 0.89 & 1.08 & 0.99 \\
    leak & $[0.5,\,1.0]$ & 0.80 & 0.97 & 0.53 & 0.76 & 0.77 \\
    input\_scaling & $[0.1,\,3.0]$ & 0.40 & 0.86 & 2.81 & 2.56 & 2.55 \\
    temperature & $[0.01,\,2.0]$ (log) & 0.10 & 0.090 & 0.123 & 0.084 & 0.069 \\
    \midrule
    \multicolumn{7}{l}{\textit{Fixed across backbones and benchmarks}} \\
    \addlinespace[2pt]
    $n_{\text{units}}$ (reservoir size) & \multicolumn{6}{c}{512} \\
    connectivity & \multicolumn{6}{c}{0.2} \\
    circle (reservoir topology) & \multicolumn{6}{c}{False (random)} \\
    similarity & \multicolumn{6}{c}{cosine} \\
    time\_decay & \multicolumn{6}{c}{linear\_ramp} \\
    $\beta$ grid size & \multicolumn{6}{c}{100} \\
    horizon & \multicolumn{6}{c}{1} \\
    fifo\_match\_calib & \multicolumn{6}{c}{True} \\
    \bottomrule
  \end{tabular}%
  }
\end{table}

Table \ref{tab:rescp_sampling_hpo} reports the four searched parameters
that tune the echo state reservoir at the core of ResCP, plus the
fixed reservoir and pipeline settings below.
\texttt{spectral\_radius} is the spectral radius of the reservoir
recurrent matrix and the main lever on memory length; values above one
push the dynamics towards instability. \texttt{leak} is the rate of
the leaky integrator and controls how fast the reservoir state blends
new inputs with the previous state. \texttt{input\_scaling} is a
global gain on the inputs that shapes how strongly the reservoir
reacts to each step. \texttt{temperature} is the softmax temperature
applied to similarity scores when forming the weighted residual
quantiles. Small temperatures approach hard nearest neighbour
selection; large temperatures average broadly. 
The fixed block records the methodological identity of ResCP. The
reservoir has 512 units, connectivity 0.2, and a random rather than
circular topology. Reservoir states are compared with cosine
similarity and weighted by a linear ramp time decay. The quantile
head uses a 100 point $\beta$ grid. The pipeline forecasts one step
ahead and the calibration buffer is FIFO matched to the test stream.

\clearpage

\subsubsection{KOWCPI baseline}

\begin{table}[h]
  \caption{KOWCPI hyperparameter configuration used by the CP-methods full-result tables. \emph{Top:} best values from Optuna TPE (100 trials per backbone, ranked by mean nWink on the Bench10 val cache); the same best values are reused on Bench22 and the 100k/full sets. The \textbf{Trial-0} column shows the paper defaults that seed the TPE search. \emph{Bottom:} parameters held fixed across backbones and benchmarks.}
  \label{tab:kowcpi_v2_hpo}
  \centering
  \setlength{\tabcolsep}{4pt}
  \resizebox{\textwidth}{!}{%
  \begin{tabular}{l l c c c c c}
    \toprule
    \textbf{Hyperparameter} & \textbf{Search range} & \textbf{Trial-0} & \textbf{TiRex} & \textbf{TimesFM-2.5} & \textbf{TabPFN-TS} & \textbf{ARIMA} \\
    \midrule
    \multicolumn{7}{l}{\textit{Searched (Optuna TPE, 100 trials per backbone, Bench10 val cache)}} \\
    \addlinespace[2pt]
    $w$ (lag-window length) & $\{5,10,20,50,100\}$ & 50 & 50 & 10 & 5 & 5 \\
    kernel & $\{$epa, gauss$\}$ & epa & epa & epa & epa & epa \\
    bandwidth\_mode & $\{$median, aicc$\}$ & aicc & median & median & median & median \\
    bandwidth\_scale & $[0.25,\,4.0]$ (log) & 1.0 & 0.464 & 0.862 & 0.806 & 0.697 \\
    el\_adjust & $\{$False, True$\}$ & True & False & False & False & False \\
    \midrule
    \multicolumn{7}{l}{\textit{Fixed across backbones and benchmarks}} \\
    \addlinespace[2pt]
    $n_{\beta}$ (\,$\beta$ grid size) & \multicolumn{6}{c}{21} \\
    aicc\_grid\_points & \multicolumn{6}{c}{5} \\
    aicc\_grid\_low & \multicolumn{6}{c}{0.25} \\
    aicc\_grid\_high & \multicolumn{6}{c}{4.0} \\
    aicc\_refit\_every & \multicolumn{6}{c}{500} \\
    fifo\_match\_calib & \multicolumn{6}{c}{True} \\
    \bottomrule
  \end{tabular}%
  }
\end{table}

Table \ref{tab:kowcpi_v2_hpo} reports the five searched parameters of
the KOWCPI kernel CDF estimator and the fixed parameters below.
\texttt{$w$} is the lag window length: the number of recent time steps
that define the local context vector the kernel uses to find similar
past situations. The \texttt{kernel} switch chooses between the
Epanechnikov kernel (compact support, hard cutoff beyond the bandwidth)
and the Gaussian kernel (infinite support, exponential decay).
\texttt{bandwidth\_mode} selects between an AICc tuned bandwidth (data
driven, refit periodically over a small grid) and the median rule (a
fixed rule of thumb based on the median pairwise distance among recent
contexts). \texttt{bandwidth\_scale} multiplies the bandwidth that the
mode produces, so the effective kernel radius is
$h_{\text{eff}} = h_{\text{rule}} \times \texttt{bandwidth\_scale}$.
Values below one tighten the kernel below the rule's pick, sharpening
the local conditioning at the cost of higher variance; values above
one widen it and smooth across more past contexts. \texttt{el\_adjust}
toggles the empirical likelihood correction from the original KOWCPI
paper. When enabled, it reweights the empirical residual distribution
so that the implied conditional CDF matches the target coverage
exactly under the empirical likelihood constraint. This typically
widens intervals to protect coverage when the raw kernel CDF estimate
is over confident. The Trial-0 column lists the paper
defaults: $w = 50$, AICc bandwidth, unit scale, EL adjustment enabled.

All four backbones diverge from these defaults on three of the five
searched knobs. HPO selects \texttt{bandwidth\_mode} = median over
AICc, drives \texttt{bandwidth\_scale} below one ($0.46$ to $0.86$
across backbones), and turns the empirical likelihood adjustment off
on every backbone. The lag window also shrinks below the paper's
$w = 50$ for three of the four predictors. Only the kernel choice
agrees with the paper. The pattern is consistent: the paper defaults
oversmooth in our setting. The AICc bandwidth combined with the EL
adjustment produces effective neighbourhoods wider than the residual
structure of these foundation model forecasters supports, and HPO
prefers tighter local kernels (smaller $w$, smaller bandwidth scale,
no EL widening).

\clearpage

\subsubsection{HopCPT baseline}

\begin{table}[h]
  \caption{HopCPT hyperparameter configuration used by the CP-methods full-result tables. \emph{Top:} best values from Optuna TPE (100 trials per backbone, ranked by mean nWink on the Bench10 val cache); the same best values are reused on Bench22 and the 100k/full sets. \emph{Bottom:} parameters held fixed across backbones and benchmarks. Operational knobs (batch size, dropout, architectural toggles) are searched but omitted here for compactness.}
  \label{tab:hopcpt_hpo}
  \centering
  \setlength{\tabcolsep}{4pt}
  \resizebox{\textwidth}{!}{%
  \begin{tabular}{l l c c c c}
    \toprule
    \textbf{Hyperparameter} & \textbf{Search range} & \textbf{TiRex} & \textbf{TimesFM-2.5} & \textbf{TabPFN-TS} & \textbf{ARIMA} \\
    \midrule
    \multicolumn{6}{l}{\textit{Searched (Optuna TPE, 100 trials per backbone)}} \\
    \addlinespace[2pt]
    \multicolumn{6}{l}{\quad\textit{Hopfield retriever}} \\
    hopfield\_beta & $[0.5,\,24.0]$ (log) & 0.701 & 6.72 & 0.717 & 2.21 \\
    eps\_mem\_size & $\{200,400,800,1500,3000\}$ & 400 & 400 & 400 & 200 \\
    \addlinespace[2pt]
    \multicolumn{6}{l}{\quad\textit{Context encoder}} \\
    ctx\_mode & $\{$uni, uni\_pms, uni\_pms+$\hat{y}$, ms+$\hat{y}\}$ & uni\_pms+$\hat{y}$ & ms+$\hat{y}$ & uni\_pms+$\hat{y}$ & uni\_pms \\
    ctx\_past\_window & $\{1,2,4,8,16\}$ & 8 & 8 & 2 & 1 \\
    ctx\_encode\_hiddenL & $\{0,1,2\}$ & 2 & 1 & 0 & 0 \\
    \addlinespace[2pt]
    \multicolumn{6}{l}{\quad\textit{Training}} \\
    lr & $[2\!\cdot\!10^{-5},\,5\!\cdot\!10^{-3}]$ (log) & $2.11\!\cdot\!10^{-4}$ & $2.45\!\cdot\!10^{-4}$ & $6.38\!\cdot\!10^{-4}$ & $1.83\!\cdot\!10^{-3}$ \\
    train\_encoder\_epochs & $\{0,25,50,100,200,500\}$ & 500 & 200 & 100 & 100 \\
    \addlinespace[2pt]
    \multicolumn{6}{l}{\quad\textit{Online calibration}} \\
    calibration\_window\_size & $\{500,1000,2000,4000\}$ & 2000 & 2000 & 2000 & 2000 \\
    head\_per\_alpha & $\{$False, True$\}$ & True & False & False & True \\
    pos\_encode\_mode & $\{$none, rel-simple, rel-linear$\}$ & none & none & rel-linear & none \\
    \midrule
    \multicolumn{6}{l}{\textit{Fixed across backbones and benchmarks}} \\
    \addlinespace[2pt]
    predict\_abs\_eps & \multicolumn{5}{c}{True} \\
    online\_memory & \multicolumn{5}{c}{True} \\
    train\_on\_calibration & \multicolumn{5}{c}{False} \\
    context\_window & \multicolumn{5}{c}{64} \\
    \bottomrule
  \end{tabular}%
  }
\end{table}

Table \ref{tab:hopcpt_hpo} groups the searched parameters into four
functional blocks. The first block configures the Hopfield retriever
that fetches past residual patterns from memory.
\texttt{hopfield\_beta} sets the sharpness of the soft retrieval. Large
$\beta$ recovers hard nearest neighbour retrieval; small $\beta$
averages broadly across the memory set. \texttt{eps\_mem\_size} is the
size of that memory set. The second block configures the context
encoder that produces the retrieval keys. \texttt{ctx\_mode} selects
what the encoder consumes: \texttt{uni} uses past residuals only,
\texttt{uni\_pms} adds past multistep residuals,
\texttt{uni\_pms+}$\hat y$ adds the model forecasts on top, and
\texttt{ms+}$\hat y$ uses multistep forecasts only.
\texttt{ctx\_past\_window} fixes how many past time steps the encoder
consumes. \texttt{ctx\_encode\_hiddenL} sets its depth. The third block
contains the training settings. \texttt{lr} is the Adam learning rate.
\texttt{train\_encoder\_epochs} is how many epochs the encoder is
trained before online calibration begins. The fourth block configures
the online calibration loop. \texttt{calibration\_window\_size} sets
the rolling calibration window. \texttt{head\_per\_alpha} chooses
between a shared head and a separate head per target $\alpha$ level.
\texttt{pos\_encode\_mode} selects the relative position encoding
style. The fixed block records the methodological identity of HopCPT.
The encoder predicts the absolute residual. The Hopfield memory is
updated online during inference. Training and calibration data are
kept disjoint. The pipeline feeds the encoder a context window of 64
past time steps.

\clearpage

\subsubsection{RareCP}

\begin{table}[h]
  \caption{RareCP hyperparameter configuration used by the CP-methods full-result tables. \emph{Top:} best values from a single 100-trial Optuna TPE search (ranked by mean nWink on the Bench10 val cache); the same best values are reused on Bench22 and the 100k/full sets. \emph{Bottom:} architectural and training settings held fixed across backbones and benchmarks. Operational knobs (epochs, batch sizes, dropouts, gradient clipping, smooth-Winkler annealing schedule) are searched but omitted here for compactness.}
  \label{tab:rarecp_hpo}
  \centering
  \setlength{\tabcolsep}{4pt}
  \resizebox{\textwidth}{!}{%
  \begin{tabular}{l l c c c c}
    \toprule
    \textbf{Hyperparameter} & \textbf{Search range} & \textbf{TiRex} & \textbf{TimesFM-2.5} & \textbf{TabPFN-TS} & \textbf{ARIMA} \\
    \midrule
    \multicolumn{6}{l}{\textit{Searched (Optuna TPE, 100 trials per stage)}} \\
    \addlinespace[2pt]
    \multicolumn{6}{l}{\quad\textit{Base block}} \\
    student\_hidden\_dim & $[32,\,128]$ step 16 & 112 & 96 & 32 & 112 \\
    student\_hidden\_layers & $[1,\,4]$ & 4 & 1 & 1 & 3 \\
    student\_lr & $[2\!\cdot\!10^{-4},\,3\!\cdot\!10^{-3}]$ (log) & $9.59\!\cdot\!10^{-4}$ & $4.98\!\cdot\!10^{-4}$ & $2.76\!\cdot\!10^{-3}$ & $2.40\!\cdot\!10^{-3}$ \\
    student\_target\_lambda & $[1.0,\,20.0]$ (log) & 1.12 & 5.24 & 11.76 & 1.36 \\
    topk & $[8,\,96]$ step 8 & 48 & 32 & 32 & 32 \\
    hopfield\_beta & $[6.0,\,24.0]$ & 7.60 & 10.98 & 13.10 & 12.85 \\
    \addlinespace[2pt]
    \multicolumn{6}{l}{\quad\textit{MoE-gate block}} \\
    gate\_hidden\_dim & $\{1,2,4,8,16,32,64\}$ & 1 & 2 & 32 & 4 \\
    gate\_lr & $[5\!\cdot\!10^{-4},\,8\!\cdot\!10^{-2}]$ (log) & $4.01\!\cdot\!10^{-3}$ & 0.0307 & $7.67\!\cdot\!10^{-4}$ & $3.56\!\cdot\!10^{-3}$ \\
    gate\_entropy\_lambda & $[0,\,0.05]$ & 0.0242 & $9.71\!\cdot\!10^{-3}$ & 0.0199 & 0.0341 \\
    \addlinespace[2pt]
    \multicolumn{6}{l}{\quad\textit{ACI block}} \\
    aci\_gamma & $[10^{-3},\,10^{-1}]$ (log) & 0.0143 & 0.0155 & $4.81\!\cdot\!10^{-3}$ & $9.17\!\cdot\!10^{-3}$ \\
    \midrule
    \multicolumn{6}{l}{\textit{Fixed across backbones and benchmarks}} \\
    \addlinespace[2pt]
    encoder\_type & \multicolumn{5}{c}{hypernetwork\_linear} \\
    hypernetwork\_conditioning & \multicolumn{5}{c}{sample\_plus\_dataset} \\
    hypernetwork\_descriptor & \multicolumn{5}{c}{stats} \\
    smooth\_loss & \multicolumn{5}{c}{cdf\_sigmoid\_winkler} \\
    mixture\_space & \multicolumn{5}{c}{weight} \\
    train\_support\_mode & \multicolumn{5}{c}{loo} \\
    \bottomrule
  \end{tabular}%
  }
\end{table}

Table \ref{tab:rarecp_hpo} groups the searched parameters into four
functional blocks. The first block sizes and trains the student MLP.
\texttt{student\_hidden\_dim} and \texttt{student\_hidden\_layers} set
its width and depth, \texttt{student\_lr} its learning rate.
\texttt{student\_target\_lambda} is the weight on the smoothed Winkler
term in the student loss. The second block configures the Hopfield
retriever that builds the per sample calibration support set.
\texttt{topk} fixes how many nearest neighbours are drawn from the
training set. \texttt{hopfield\_beta} sets the sharpness of the soft
retrieval. Large $\beta$ recovers hard nearest neighbour retrieval;
small $\beta$ averages broadly across the set. The third block
parameterises the MoE gate that routes between expert calibration heads
in weight space. \texttt{gate\_hidden\_dim} sets the gate width and
\texttt{gate\_lr} its learning rate. \texttt{gate\_entropy\_lambda}
regularises the gate distribution and prevents it from collapsing onto
a single expert. The fourth block contains a single online parameter:
\texttt{aci\_gamma}, the step size of the post hoc ACI threshold
update. Larger values adapt faster to nonstationarity but overreact to
short horizon noise. The fixed block records the methodological
identity of RareCP. The encoder is a linear hypernetwork conditioned on
a sample plus dataset \emph{stats} descriptor. The smoothed loss is the
sigmoid CDF surrogate of the Winkler score. The MoE combines experts
in weight space, and the calibration support set during training is
constructed with leave one out splits.

\clearpage

\subsection{Runtime analysis}
\label{apx:runtime}

\begin{table}[h]
  \caption{Wall-clock on Bench10 (10 datasets, 7,500 test samples each). Train (s/run) and Train (s/epoch) are the per-seed offline fit at 100 epochs; non-parametric methods omit the train columns. Predict (s/run) is one full evaluation pass and Predict (s/dataset) displays wall-clock per dataset. $^{\dagger}$RareCP additionally needs a one-time teacher bank trained at 100 epochs (46.34\,s, 0.46\,s/epoch) shared across seeds. Mean and std are reported over ten seeds.}
  \label{tab:wallclock_bench10}
  \centering
  \begin{tabular}{lcccc}
    \toprule
    \textbf{Method} & Train (s/run) $\downarrow$ & Train (s/epoch) $\downarrow$ & Predict (s/run) $\downarrow$ & Predict (s/dataset) $\downarrow$ \\
    \midrule
    Uniform & --- & --- & 4.26\,{\scriptsize $\pm$\,0.04} & 0.43 \\
    ACI     & --- & --- & 4.62\,{\scriptsize $\pm$\,0.04} & 0.46 \\
    DtACI   & --- & --- & 9.96\,{\scriptsize $\pm$\,0.07} & 1.00 \\
    NexCP   & --- & --- & 4.98\,{\scriptsize $\pm$\,0.02} & 0.50 \\
    KOWCPI  & --- & --- & 15.89\,{\scriptsize $\pm$\,0.06} & 1.59 \\
    HopCPT  & 5.21\,{\scriptsize $\pm$\,0.21} & 0.052 & \textbf{1.55}\,{\scriptsize $\pm$\,0.02} & \textbf{0.16} \\
    ResCP   & --- & --- & 31.22\,{\scriptsize $\pm$\,0.16} & 3.12 \\
    RareCP\,\textit{(Ours)} & 175.68\,{\scriptsize $\pm$\,2.01}\,$^{\dagger}$ & 1.76 & 27.33\,{\scriptsize $\pm$\,0.97} & 2.73 \\
    \bottomrule
  \end{tabular}
\end{table}

All wall-clock measurements were collected locally on a single NVIDIA A6000 GPU (48GB VRAM) and a 64-core CPU with 256GB RAM against the shared TiREx prediction cache, so backbone inference is excluded from every entry and only the conformal-prediction layer is timed. The non-parametric baselines (SplitCP Uniform, ACI, DtACI, NexCP, KOWCPI, ResCP) have no offline training step, hence their entire cost is the prediction loop. HopCPT trains one small Hopfield-attention encoder per dataset, while RareCP trains ten hypernetwork-linear students plus a posthoc MoE gate as default. Both methods have the same fixed budget of 100 epochs so per-epoch costs are directly comparable. RareCP runs evaluation as a single batched 10-dataset GPU pass, so its per-dataset entry is the total divided by ten. The 46.34\,s teacher bank in the table footnote is a one-time foundation cost shared across all seeds (analogous to using a pretrained backbone) and is therefore not amortised into the per-seed train column.

\subsection{Parameter count}
\label{apx:param}

\begin{table}[h]
  \caption{Parameter counts of the foundation models and the RareCP used throughout the evaluation. The rarecp wrapper adds at most ${\sim}5.5$M parameters on top of the backbone, two orders of magnitude smaller than the largest FM baseline.}
  \label{tab:model_param_counts}
  \centering
  \small
  \begin{tabular}{@{}l r@{}}
    \toprule
    \textbf{Model / wrapper} & \textbf{Parameters} \\
    \midrule
    TiRex                    & 35M \\
    TimesFM-2.5              & 200M \\
    TabPFN-TS                & 11M \\
    Chronos-2                & 120M \\
    Chronos-Bolt             & 205M \\
    Moirai-2                 & 11.4M \\
    Toto                     & 151M \\
    Lag-Llama                & 2.45M \\
    RareCP (10 experts) & 1.5--5.5M\textsuperscript{$\dagger$} \\
    \bottomrule
  \end{tabular}
  \\[2pt]
  {\footnotesize $\dagger$\,Per backbone: TiRex 5.45M, TimesFM-2.5 4.35M, TabPFN-TS 1.50M, ARIMA 5.33M.}
\end{table}

\clearpage

\subsection{Visualizations}
\label{apx:visual}

\begin{figure}[h]
    \centering
    \includegraphics[width=1.0\linewidth]{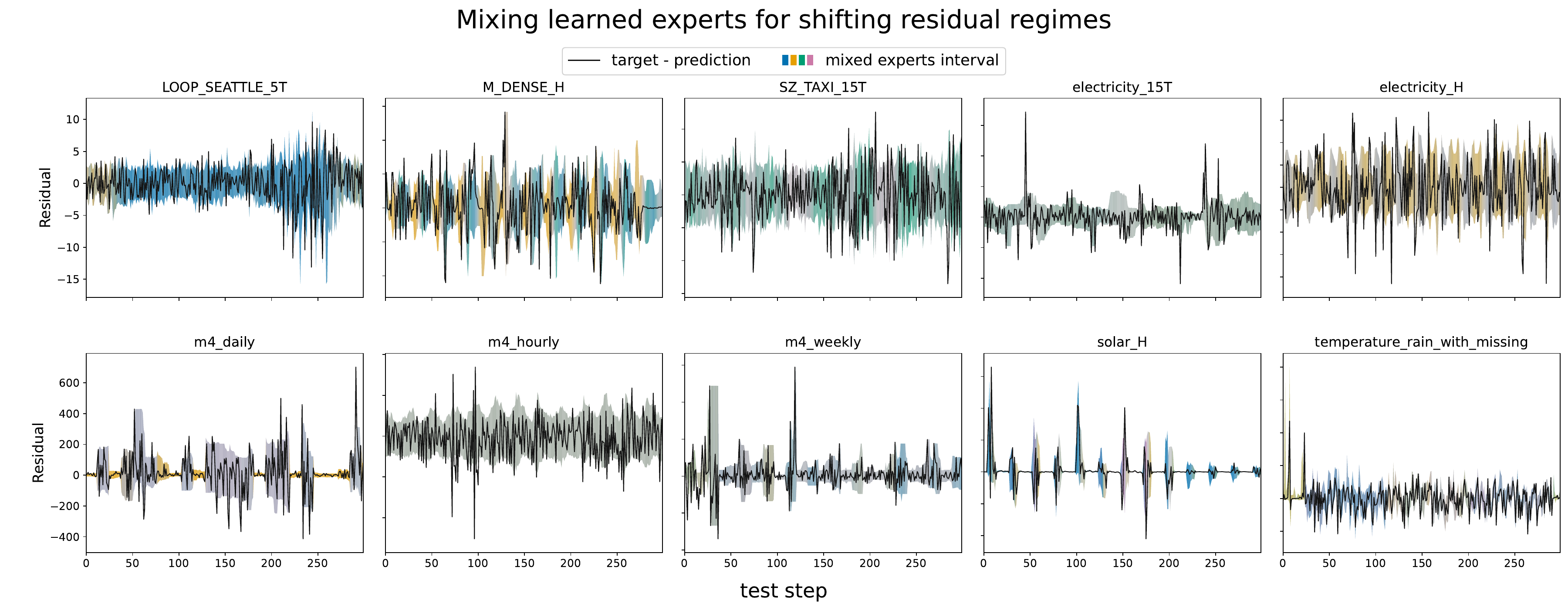}
    \caption{Visualization of 80\% intervals using the gating mixture on the Bench10 datasets.}
    \label{fig:full_gating}
\end{figure}

\begin{figure}[h]
  \centering
  \includegraphics[width=\linewidth]{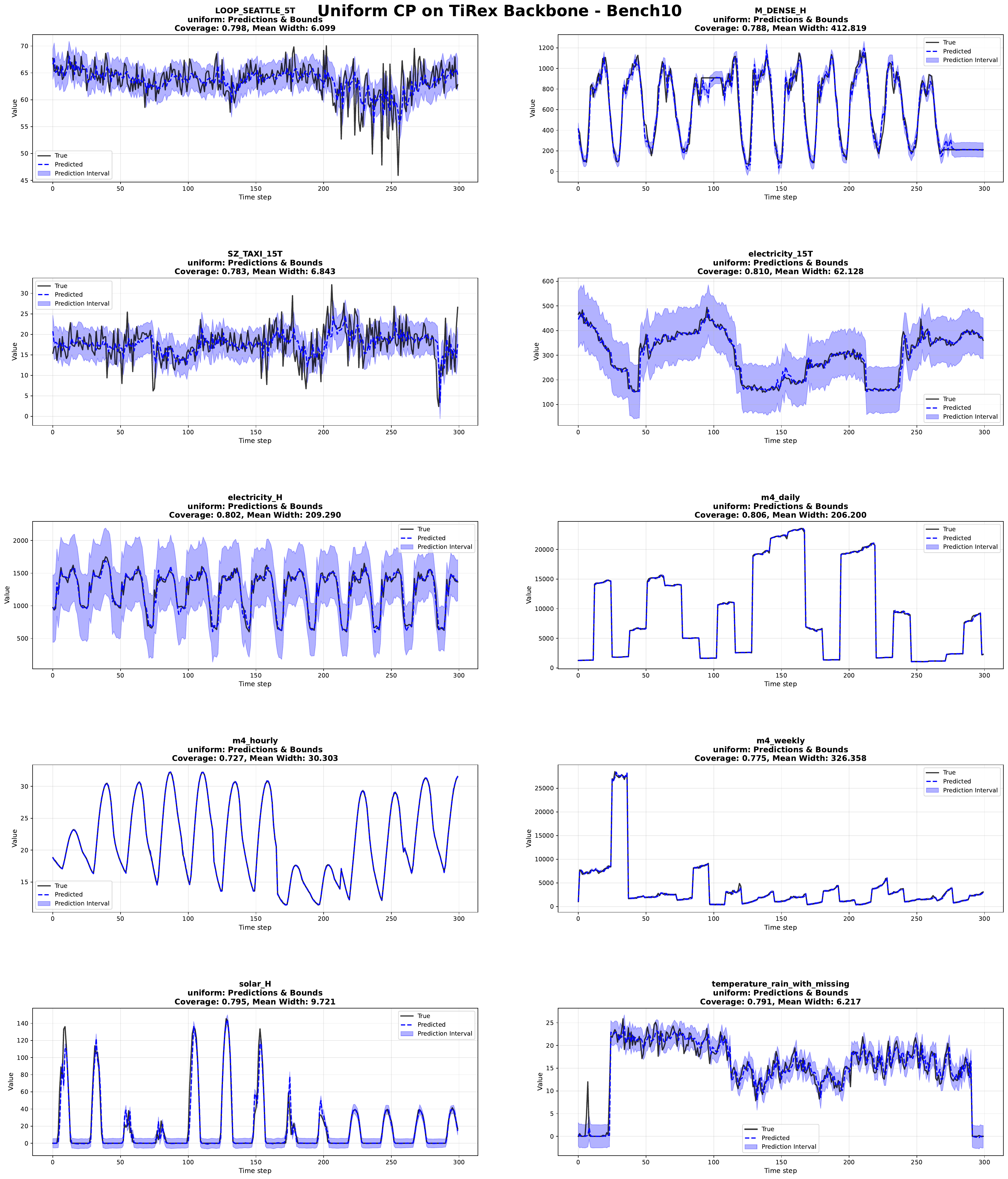}
  \caption{Visualization of SplitCP Uniform predictions on the first 300 test samples per dataset; Bench10 on TiRex backbone.}
  \label{fig:ts_res_tirex_uni}
\end{figure}

\begin{figure}[h]
  \centering
  \includegraphics[width=\linewidth]{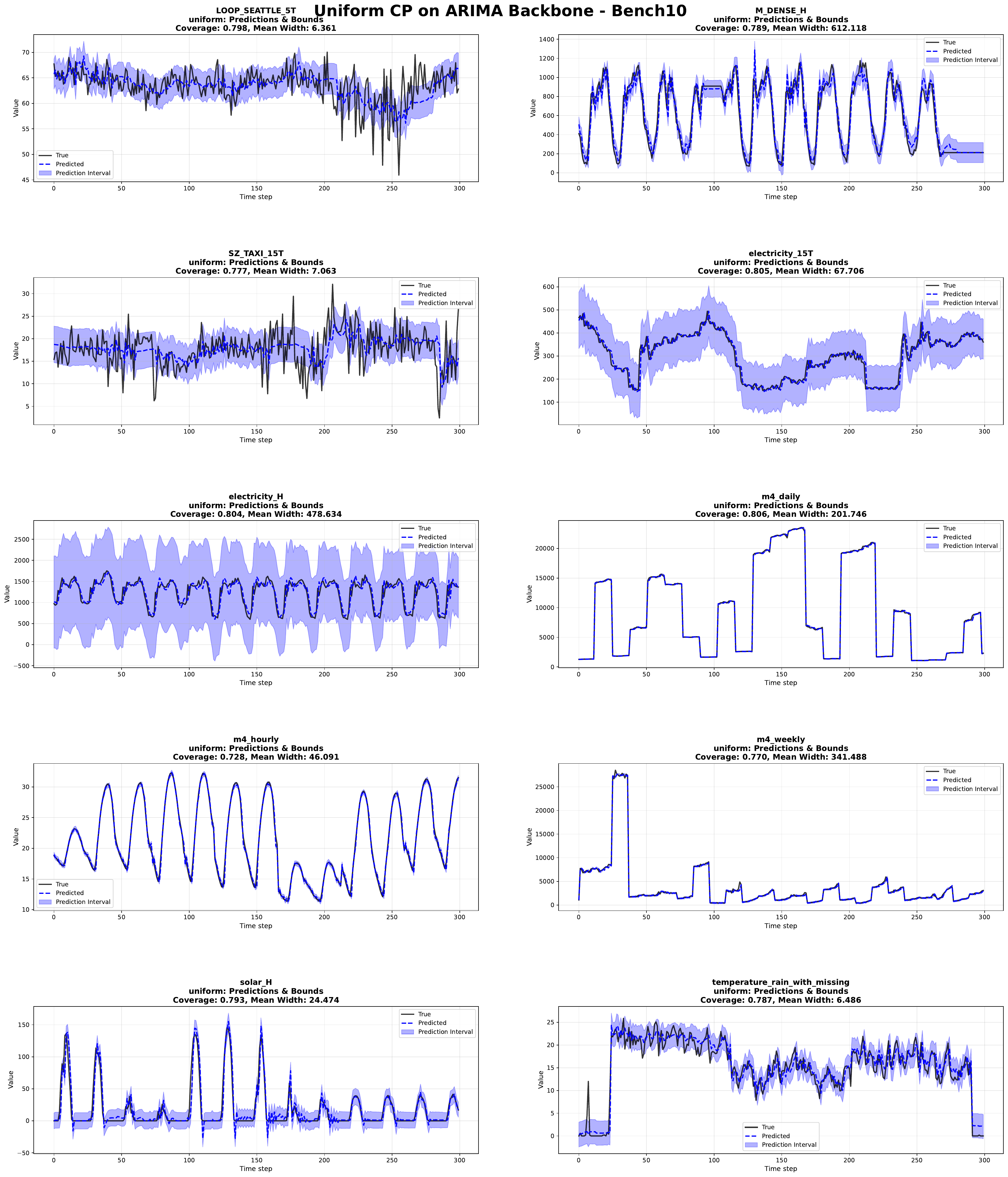}
  \caption{Visualization of SplitCP Uniform predictions on the first 300 test samples per dataset; Bench10 on ARIMA backbone.}
  \label{fig:ts_res_arima_uni}
\end{figure}

\begin{figure}[h]
  \centering
  \includegraphics[width=\linewidth]{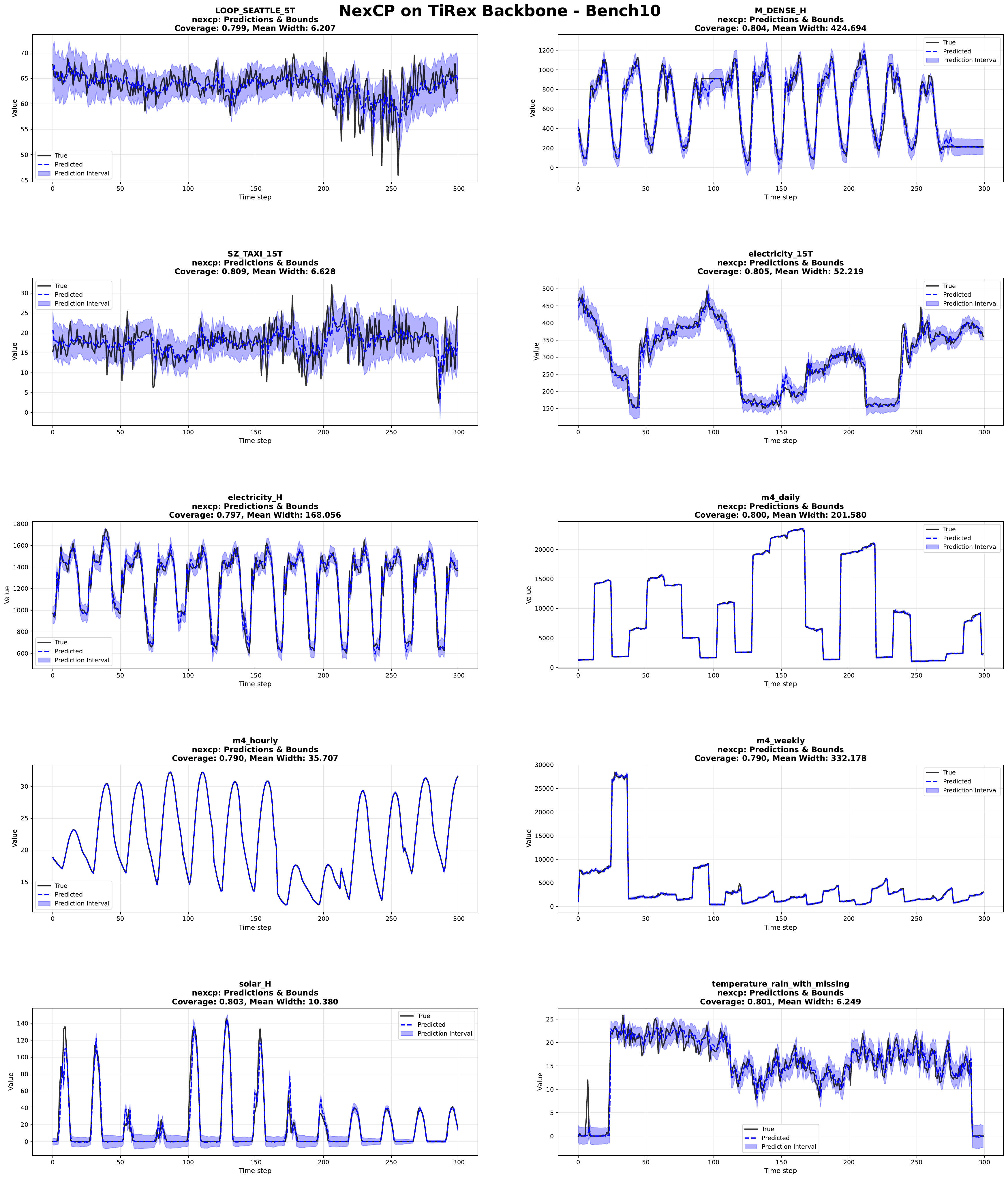}
  \caption{Visualization of NexCP predictions on the first 300 test samples per dataset; Bench10 on TiRex backbone.}
  \label{fig:ts_res_tirex_nex}
\end{figure}

\begin{figure}[h]
  \centering
  \includegraphics[width=\linewidth]{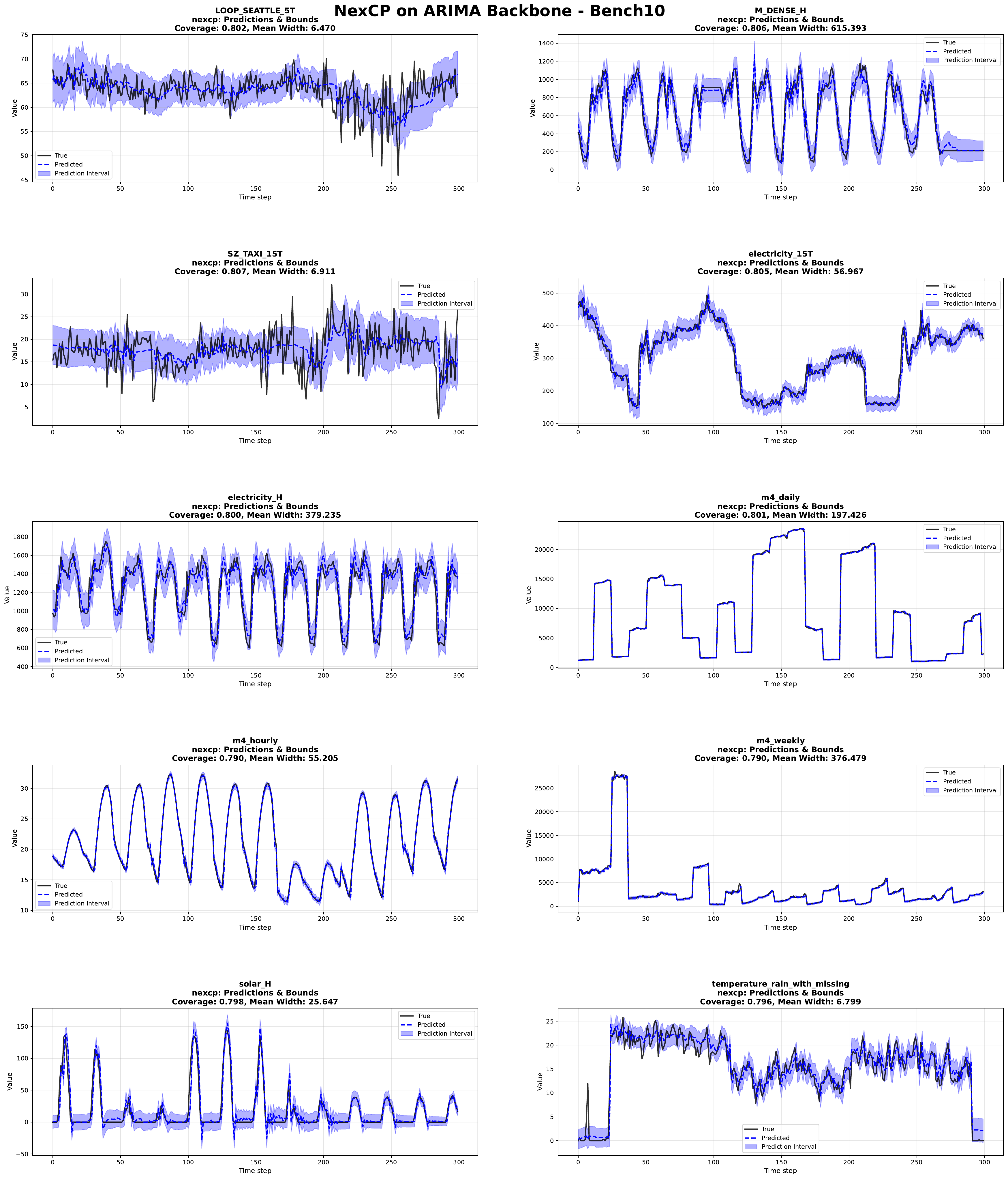}
  \caption{Visualization of NexCP predictions on the first 300 test samples per dataset; Bench10 on ARIMA backbone.}
  \label{fig:ts_res_arima_nex}
\end{figure}

\begin{figure}[h]
  \centering
  \includegraphics[width=\linewidth]{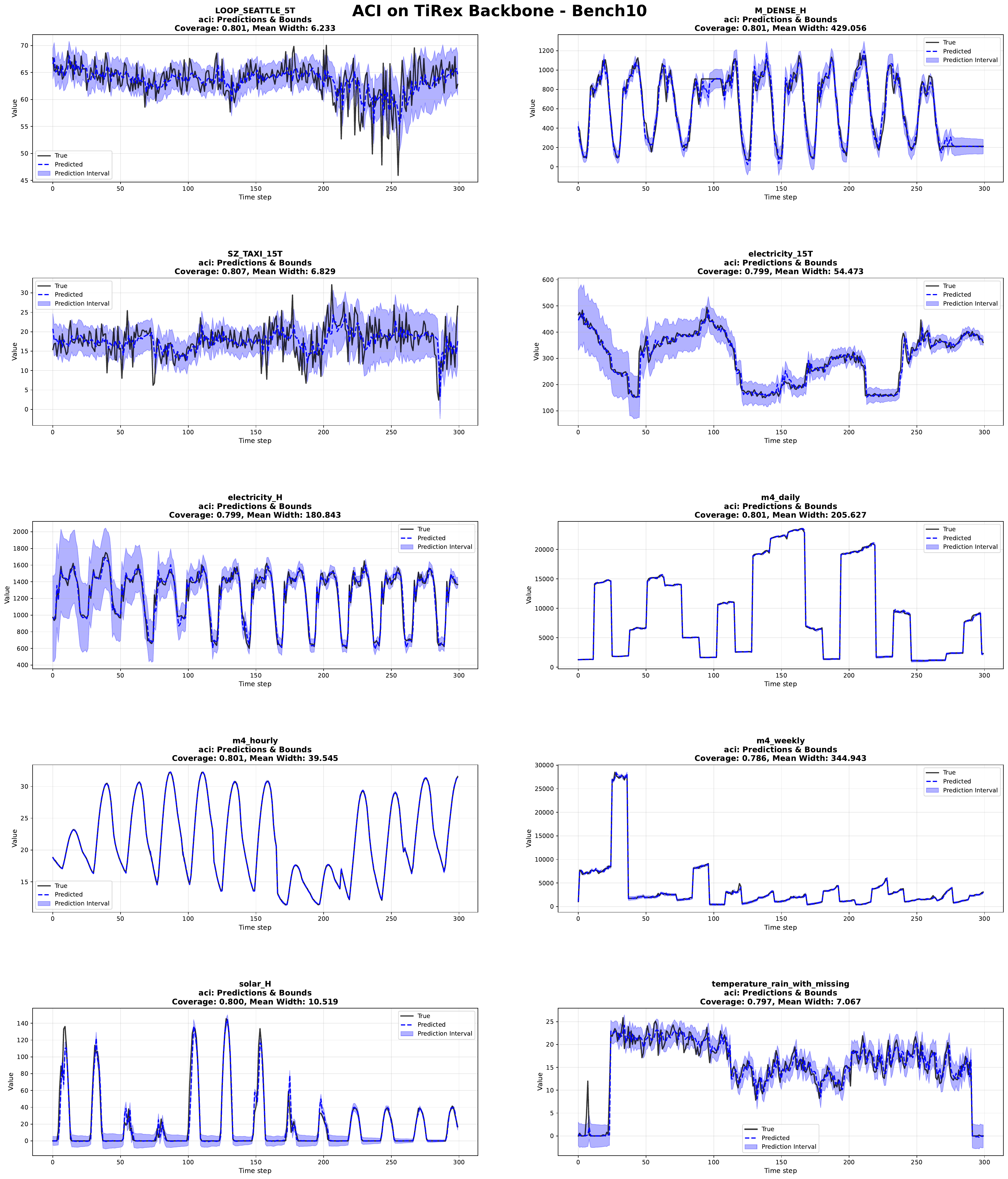}
  \caption{Visualization of ACI predictions on the first 300 test samples per dataset; Bench10 on TiRex backbone.}
  \label{fig:ts_res_tirex_aci}
\end{figure}

\begin{figure}[h]
  \centering
  \includegraphics[width=\linewidth]{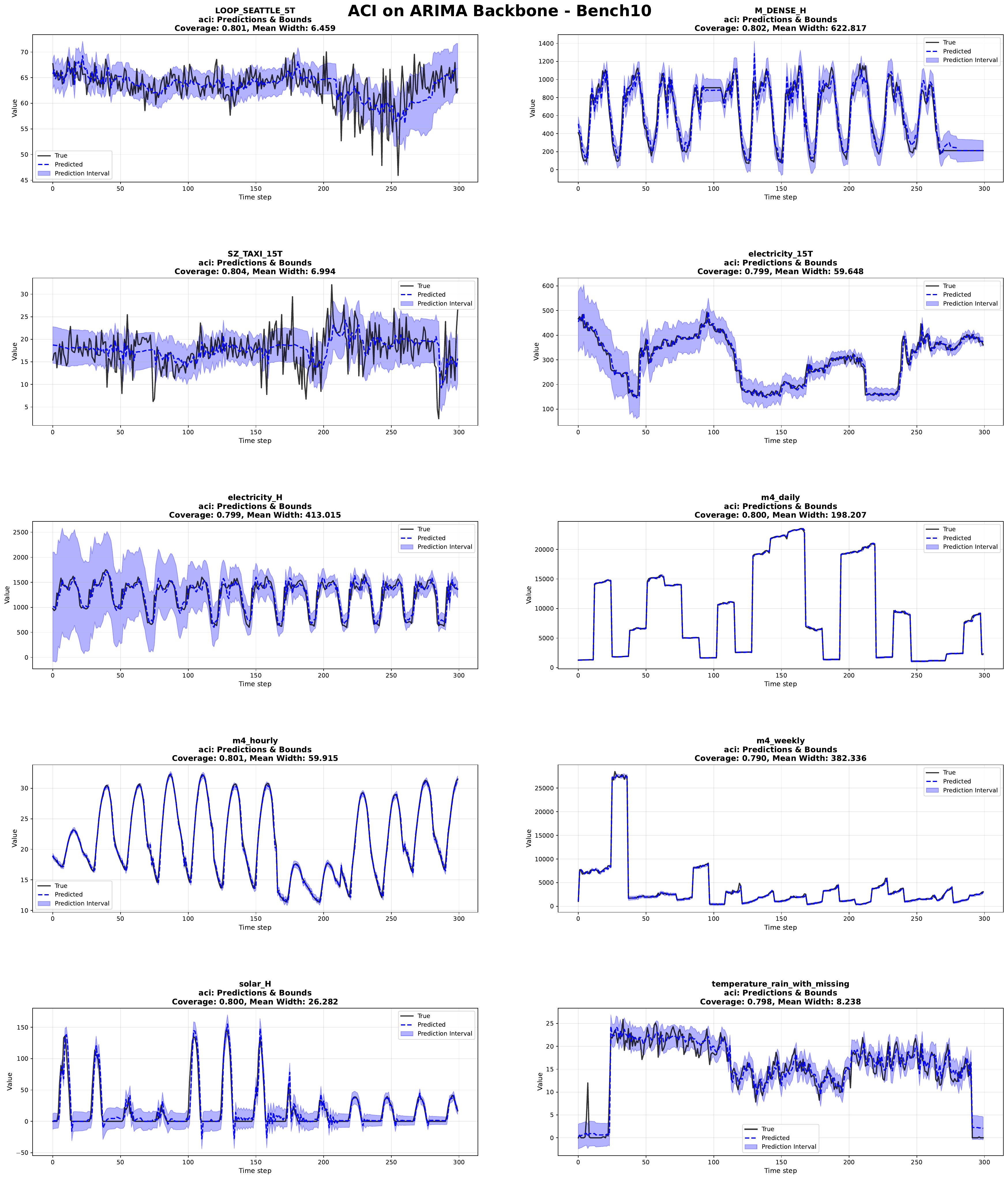}
  \caption{Visualization of ACI predictions on the first 300 test samples per dataset; Bench10 on ARIMA backbone.}
  \label{fig:ts_res_arima_aci}
\end{figure}

\begin{figure}[h]
  \centering
  \includegraphics[width=\linewidth]{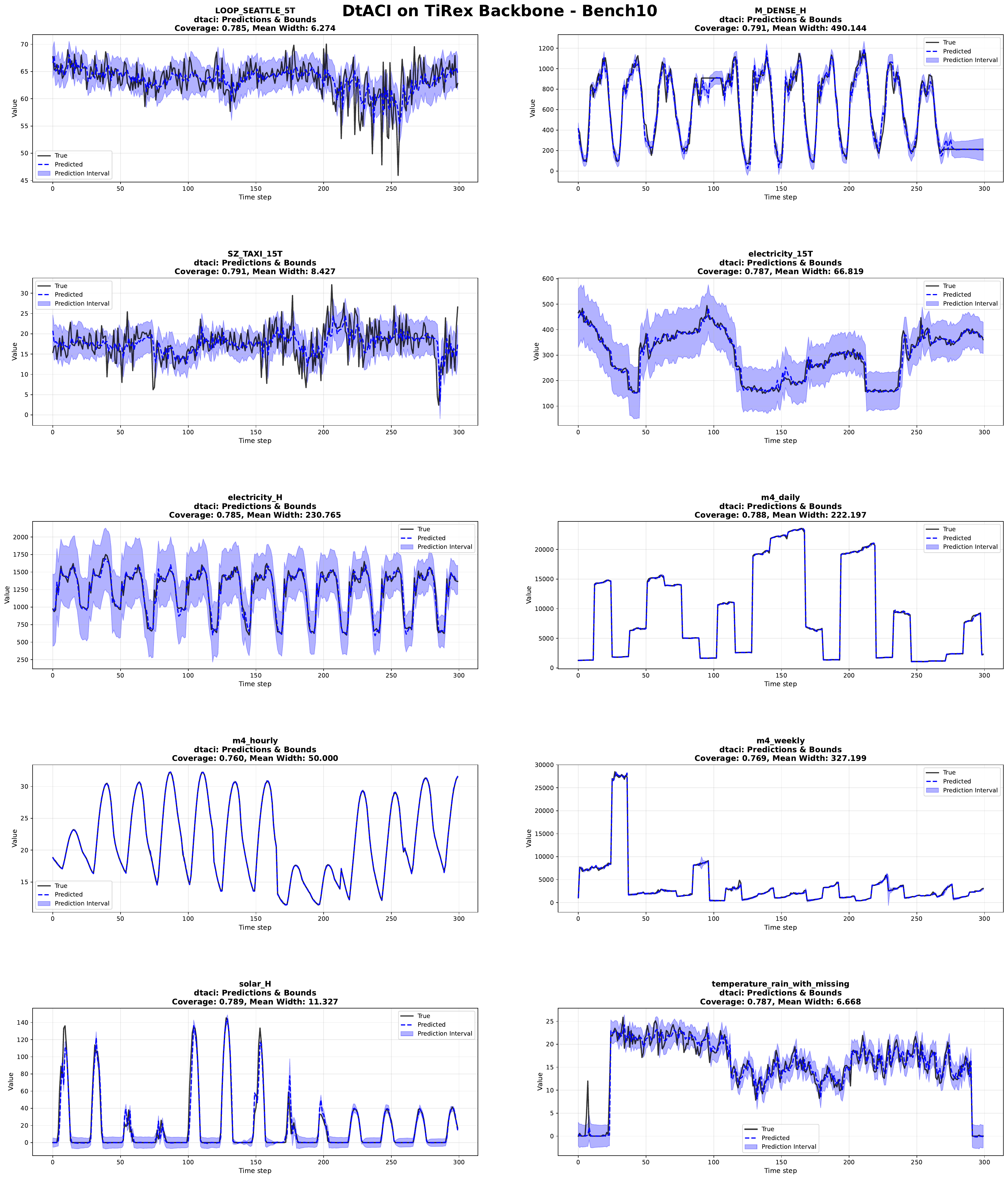}
  \caption{Visualization of dtACI predictions on the first 300 test samples per dataset; Bench10 on TiRex backbone.}
  \label{fig:ts_res_tirex_dtaci}
\end{figure}

\begin{figure}[h]
  \centering
  \includegraphics[width=\linewidth]{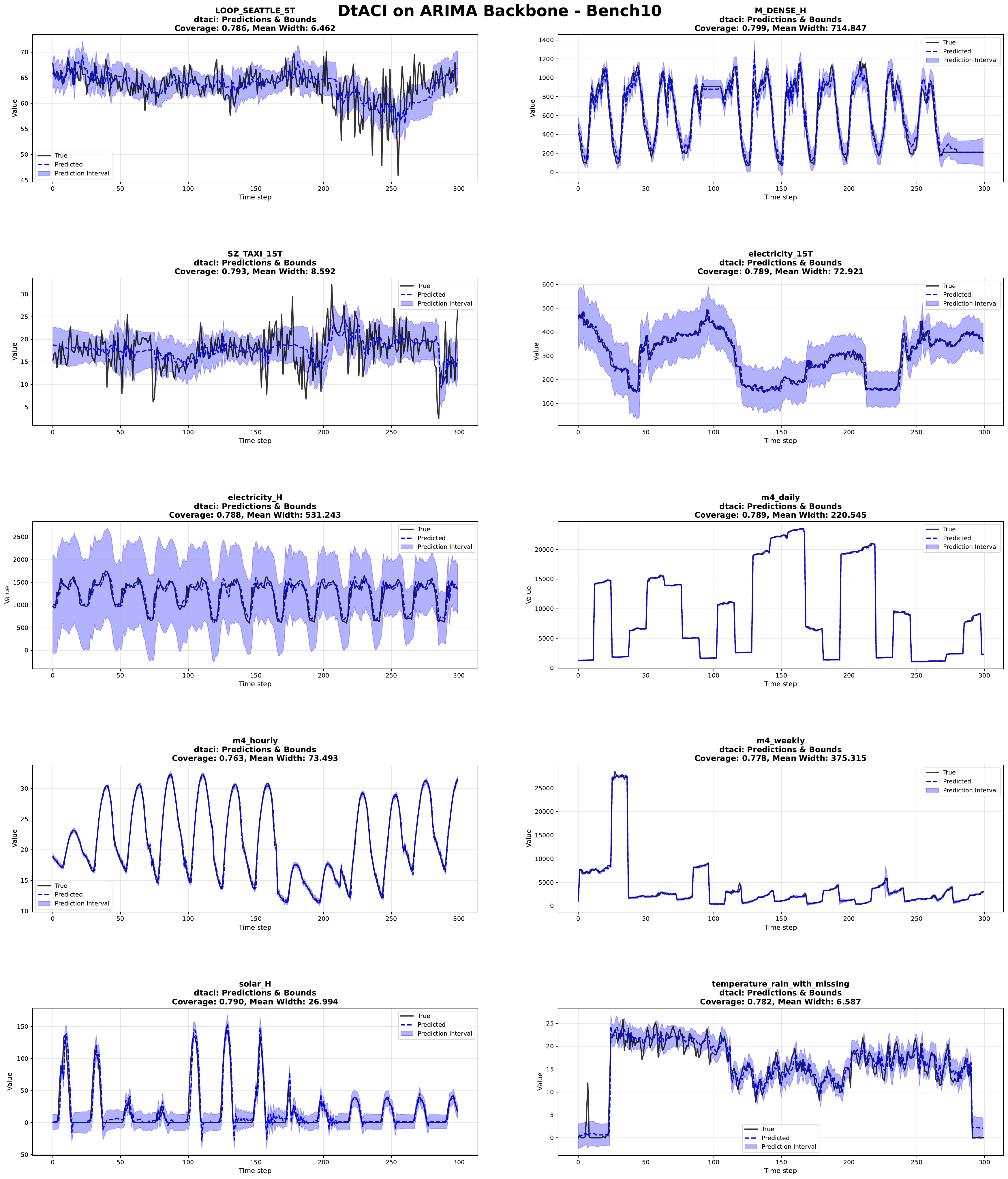}
  \caption{Visualization of dtACI predictions on the first 300 test samples per dataset; Bench10 on ARIMA backbone.}
  \label{fig:ts_res_arima_dtaci}
\end{figure}

\begin{figure}[h]
  \centering
  \includegraphics[width=\linewidth]{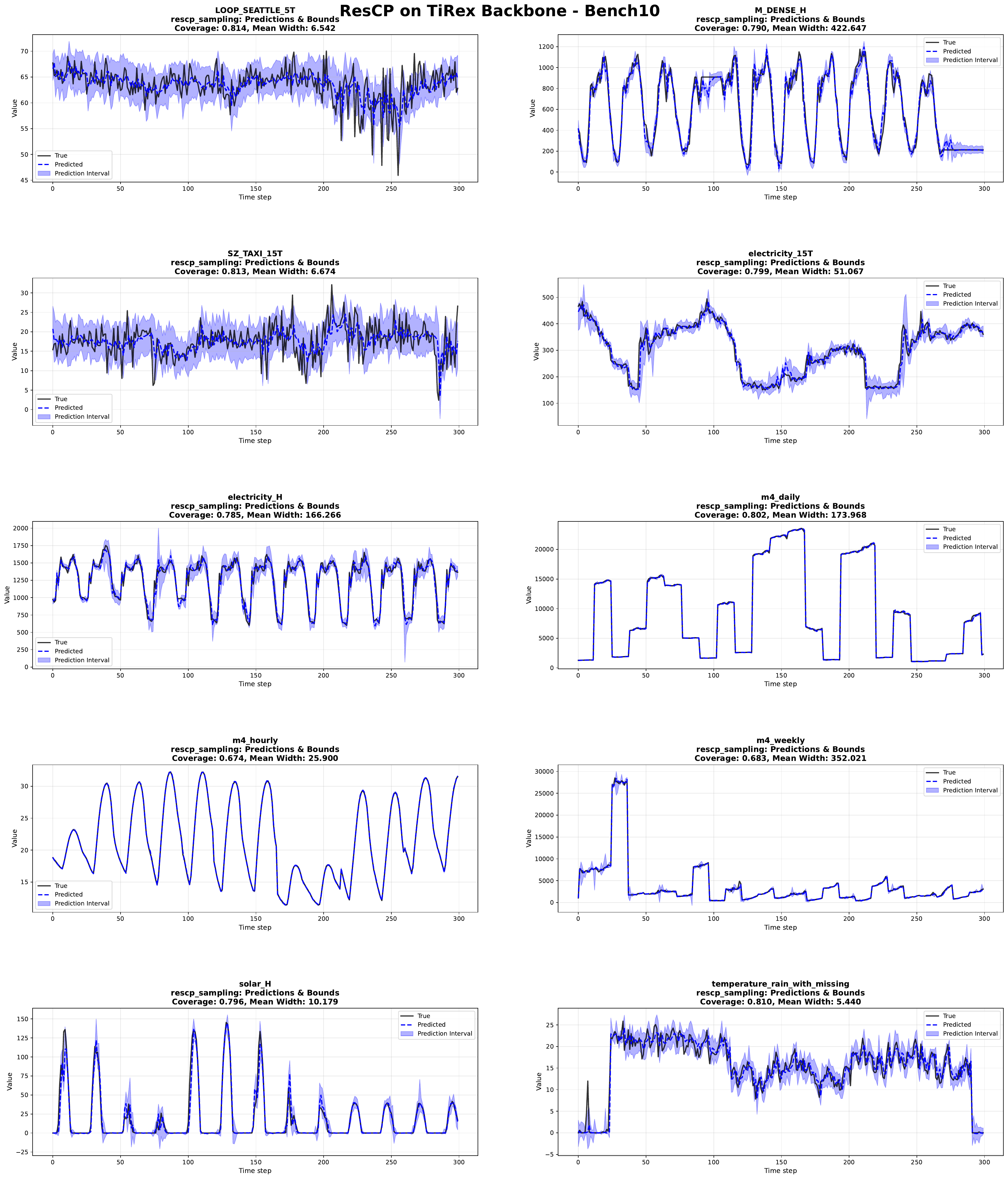}
  \caption{Visualization of ResCP predictions on the first 300 test samples per dataset; Bench10 on TiRex backbone.}
  \label{fig:ts_res_tirex_rescp}
\end{figure}

\begin{figure}[h]
  \centering
  \includegraphics[width=\linewidth]{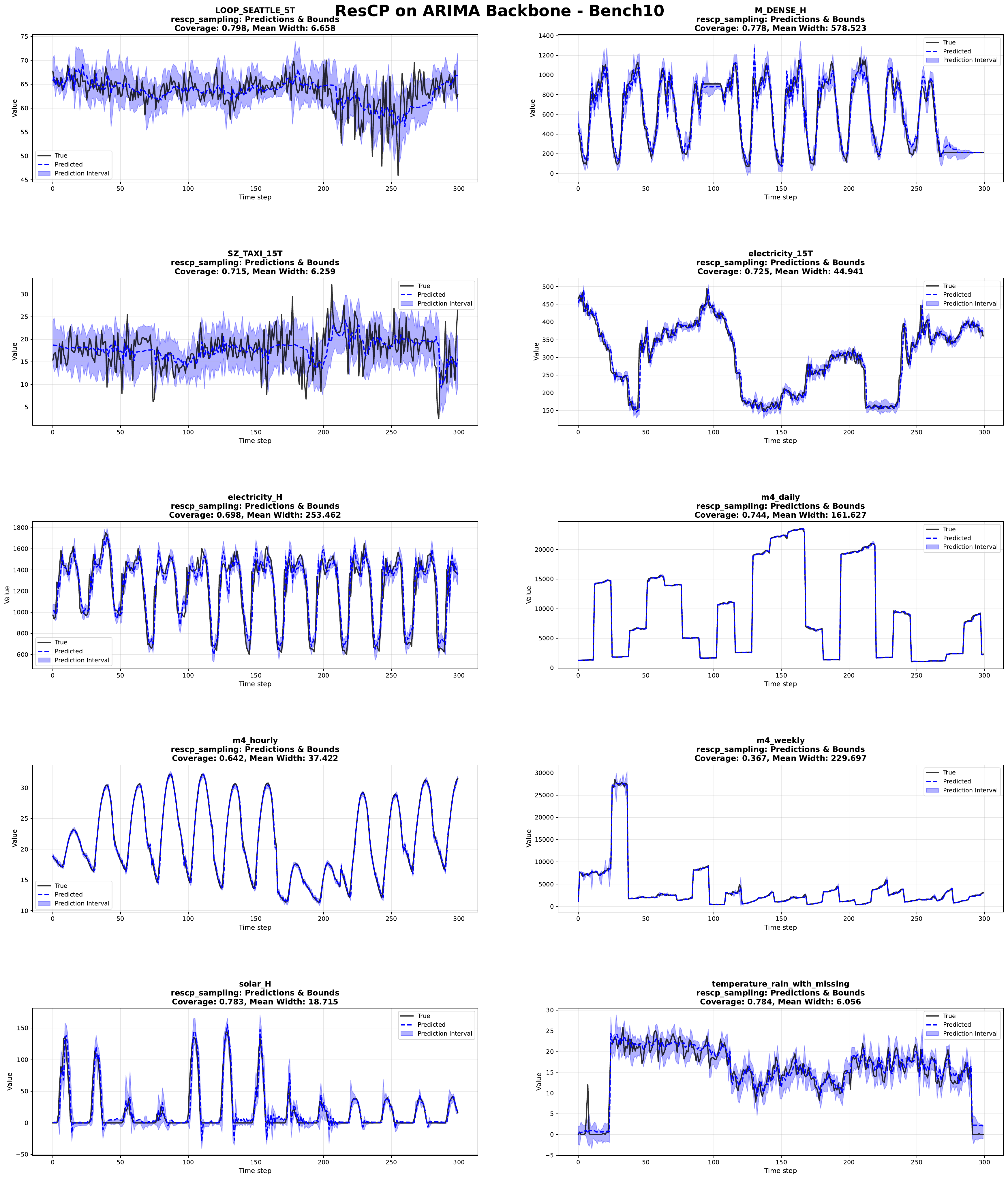}
  \caption{Visualization of ResCP predictions on the first 300 test samples per dataset; Bench10 on ARIMA backbone.}
  \label{fig:ts_res_arima_ResCP}
\end{figure}

\begin{figure}[h]
  \centering
  \includegraphics[width=\linewidth]{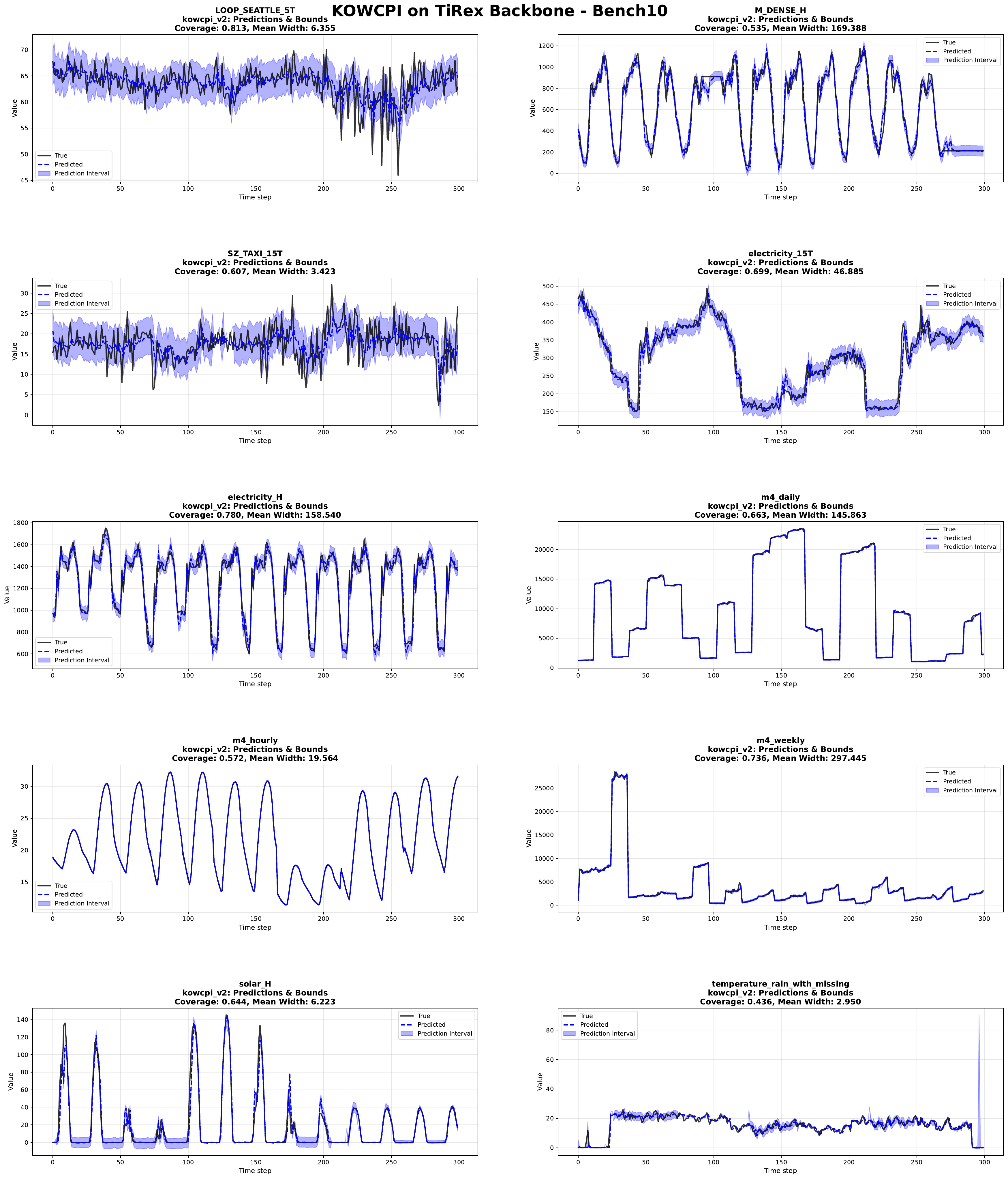}
  \caption{Visualization of KOWCPI predictions on the first 300 test samples per dataset; Bench10 on TiRex backbone.}
  \label{fig:ts_res_tirex_kow}
\end{figure}

\begin{figure}[h]
  \centering
  \includegraphics[width=\linewidth]{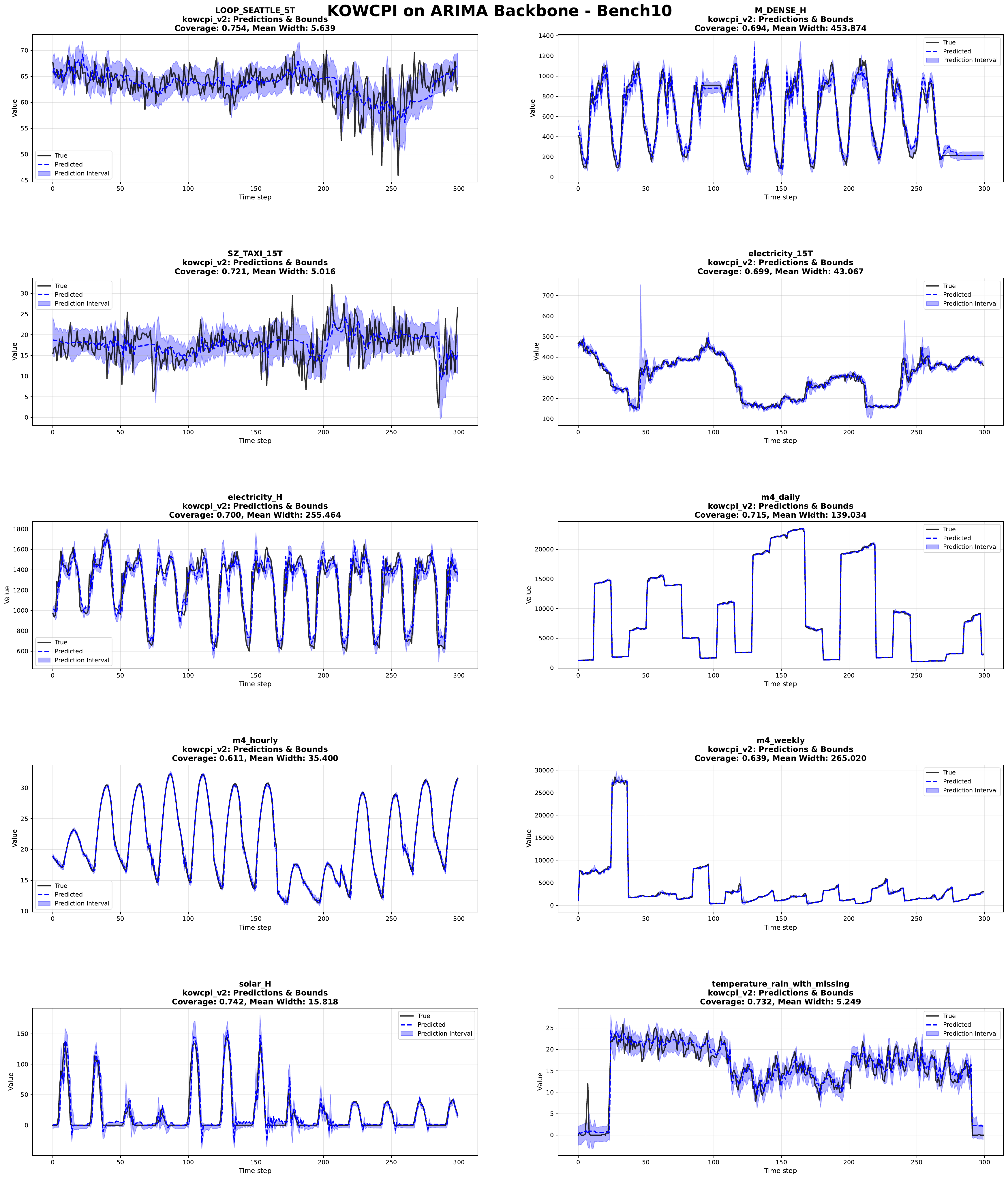}
  \caption{Visualization of KOWCPI predictions on the first 300 test samples per dataset; Bench10 on ARIMA backbone.}
  \label{fig:ts_res_arima_kow}
\end{figure}

\begin{figure}[h]
  \centering
  \includegraphics[width=\linewidth]{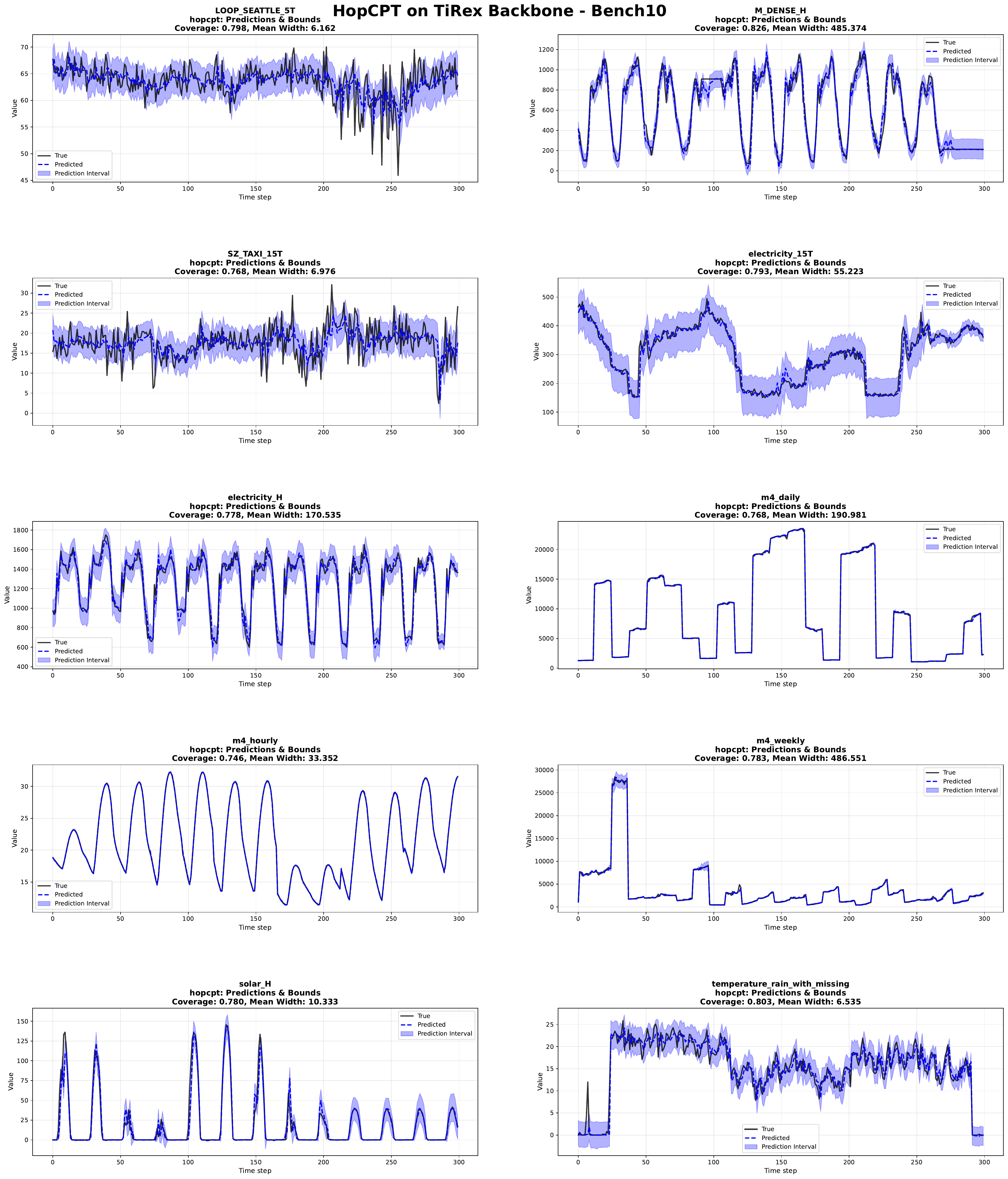}
  \caption{Visualization of HopCPT predictions on the first 300 test samples per dataset; Bench10 on TiRex backbone.}
  \label{fig:ts_res_tirex_hop}
\end{figure}

\begin{figure}[h]
  \centering
  \includegraphics[width=\linewidth]{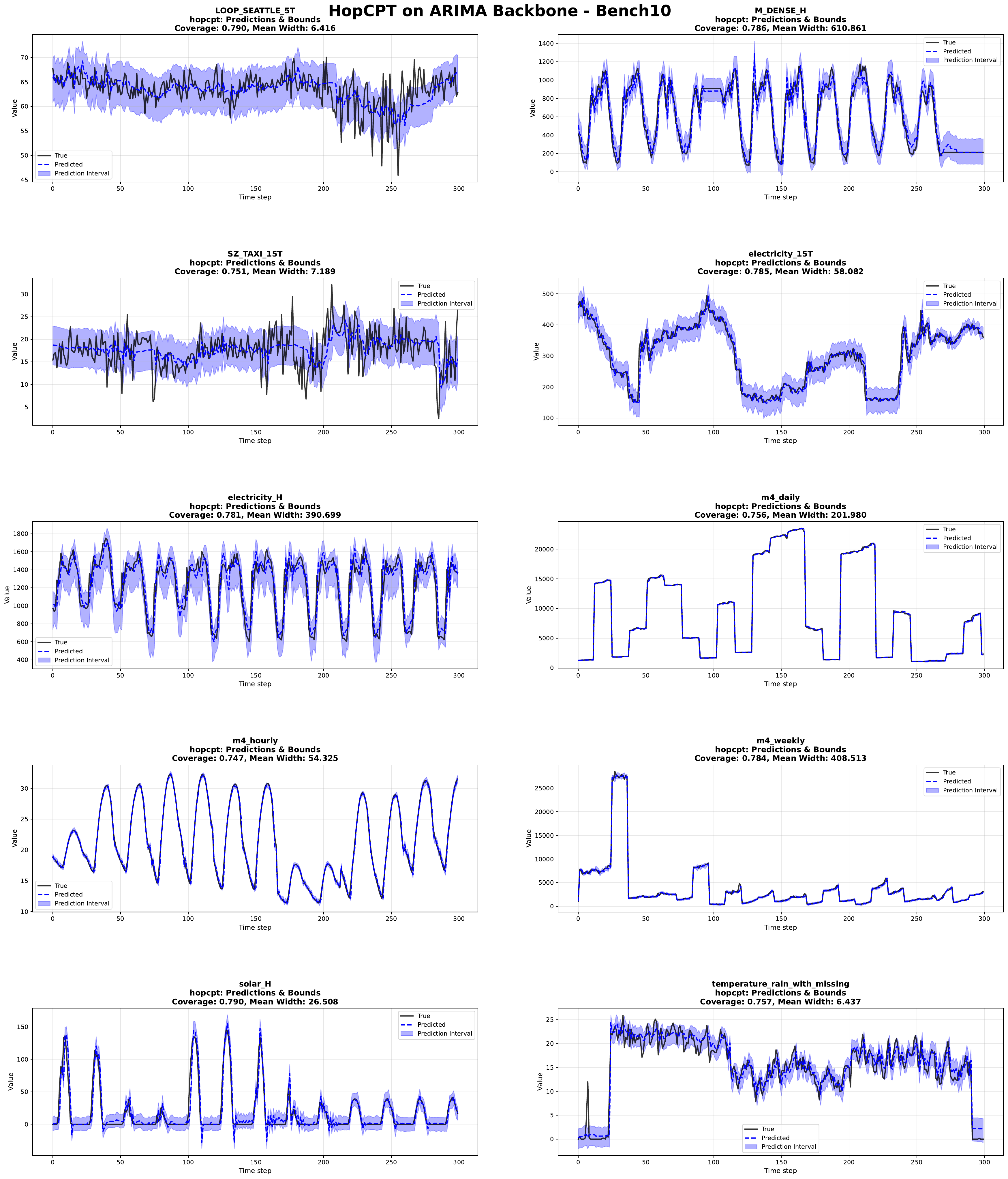}
  \caption{Visualization of HopCPT predictions on the first 300 test samples per dataset; Bench10 on ARIMA backbone.}
  \label{fig:ts_res_arima_hop}
\end{figure}

\begin{figure}[h]
  \centering
  \includegraphics[width=\linewidth]{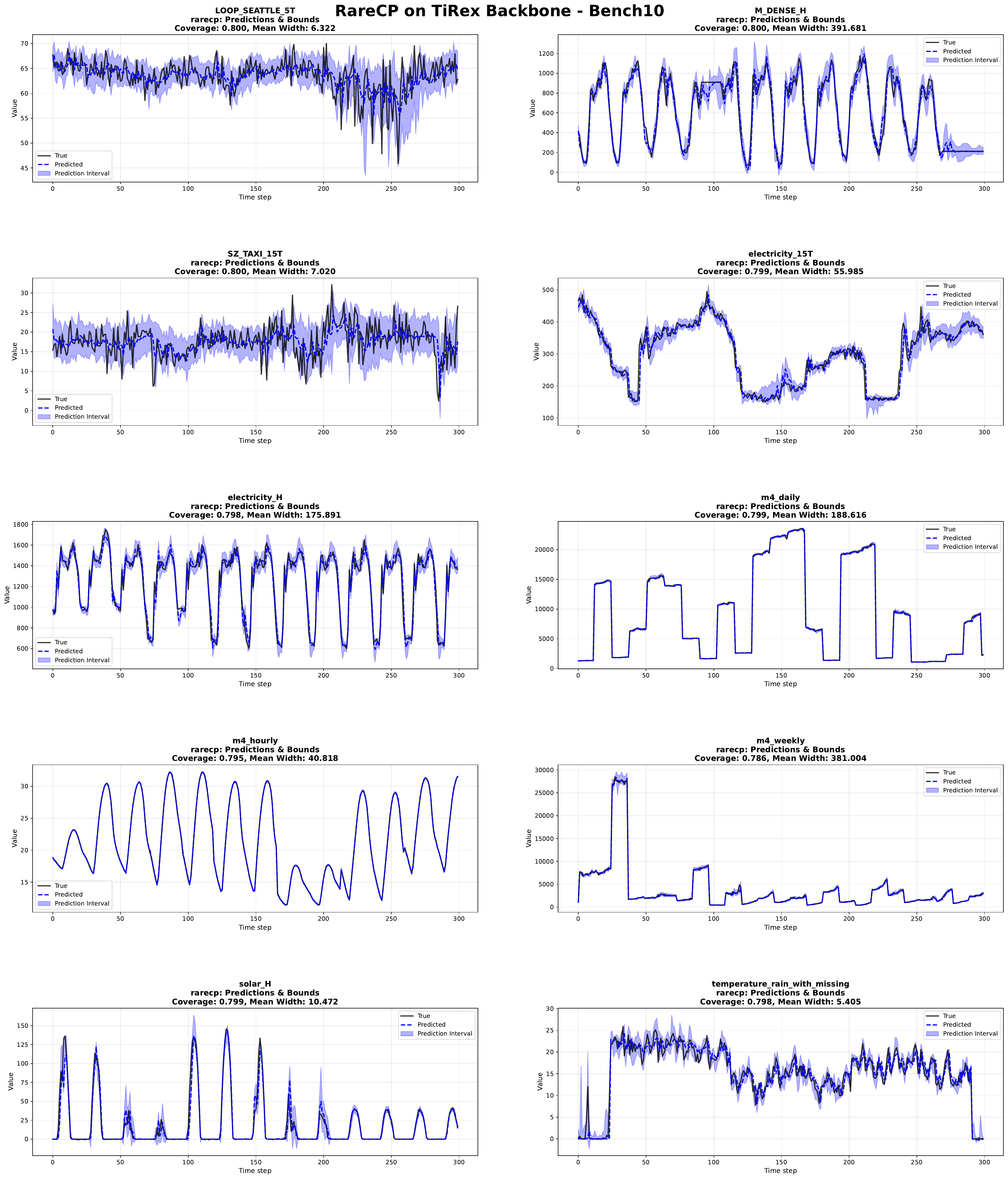}
  \caption{Visualization of RareCP predictions on the first 300 test samples per dataset; Bench10 on TiRex backbone.}
  \label{fig:ts_res_tirex_rare}
\end{figure}

\begin{figure}[h]
  \centering
  \includegraphics[width=\linewidth]{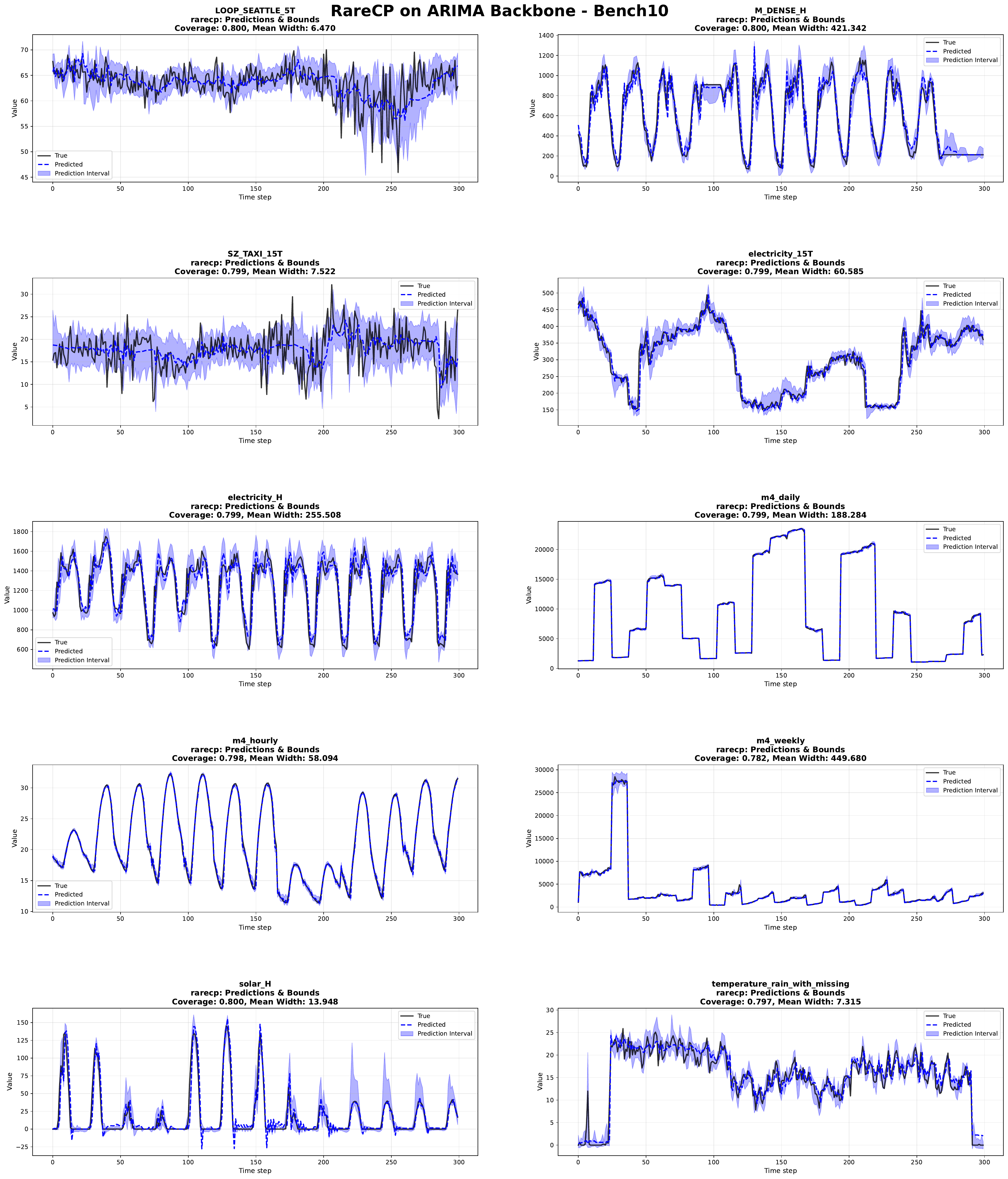}
  \caption{Visualization of RareCP predictions on the first 300 test samples per dataset; Bench10 on ARIMA backbone.}
  \label{fig:ts_res_arima_rare}
\end{figure}


\end{document}